\documentclass[twoside,11pt]{article}

\usepackage[abbrvbib, preprint]{jmlr2e}

\usepackage{amsmath}
\usepackage{thmtools}
\usepackage{array}
\usepackage{multirow}
\usepackage[caption=false]{subfig}
\usepackage{booktabs}
\usepackage{xcolor}
\usepackage{framed}
\usepackage{setspace}
\usepackage{algorithmic}
\usepackage{algorithm} 
\usepackage{tikz}
\usepackage{adjustbox}

\newcommand{\bx}{\mathbf{x}}
\newcommand{\by}{\mathbf{y}}

\newcommand{\expt}{\mathrm{E}}

\newcommand{\bw}{\mathbf{w}}
\newcommand{\bs}{\mathbf{s}}
\newcommand{\bg}{\mathbf{g}}
\newcommand{\bv}{\mathbf{v}}
\newcommand{\bbeta}{\boldsymbol{\eta}}
\newcommand{\wb}{\overline}

\usepackage[mathlines]{lineno}
\newcommand*\patchAmsMathEnvironmentForLineno[1]{%
    \expandafter\let\csname old#1\expandafter\endcsname\csname #1\endcsname
    \expandafter\let\csname oldend#1\expandafter\endcsname\csname end#1\endcsname
    \renewenvironment{#1}%
    {\linenomath\csname old#1\endcsname}%
    {\csname oldend#1\endcsname\endlinenomath}}% 
  \newcommand*\patchBothAmsMathEnvironmentsForLineno[1]{%
    \patchAmsMathEnvironmentForLineno{#1}%
    \patchAmsMathEnvironmentForLineno{#1*}}%
  \patchBothAmsMathEnvironmentsForLineno{equation}%
  \patchBothAmsMathEnvironmentsForLineno{align}%
  \patchBothAmsMathEnvironmentsForLineno{flalign}%
  \patchBothAmsMathEnvironmentsForLineno{alignat}%
  \patchBothAmsMathEnvironmentsForLineno{gather}%
  \patchBothAmsMathEnvironmentsForLineno{multline}%

%%%% clever reference
\RequirePackage[capitalize,nameinlink]{cleveref}[0.19]

\newcommand{\crefassum}[1]{Universal Assumptions}

\newcommand{\crefobjdef}[1]{Problem Assumption}

\allowdisplaybreaks

% Per SIAM Style Manual, "section" should be lowercase
\crefname{section}{section}{sections}
\crefname{subsection}{subsection}{subsections}
\Crefname{section}{Section}{Sections}
\Crefname{subsection}{Subsection}{Subsections}
% Per SIAM Style Manual, "Figure" should be spelled out in references
\Crefname{figure}{Figure}{Figures}

% Per SIAM Style Manual, don't say equation in front on an equation.
\crefformat{equation}{\textup{#2(#1)#3}}
\crefrangeformat{equation}{\textup{#3(#1)#4--#5(#2)#6}}
\crefmultiformat{equation}{\textup{#2(#1)#3}}{ and \textup{#2(#1)#3}}
{, \textup{#2(#1)#3}}{, and \textup{#2(#1)#3}}
\crefrangemultiformat{equation}{\textup{#3(#1)#4--#5(#2)#6}}%
{ and \textup{#3(#1)#4--#5(#2)#6}}{, \textup{#3(#1)#4--#5(#2)#6}}{, and \textup{#3(#1)#4--#5(#2)#6}}

% But spell it out at the beginning of a sentence.
\Crefformat{equation}{#2Equation~\textup{(#1)}#3}
\Crefrangeformat{equation}{Equations~\textup{#3(#1)#4--#5(#2)#6}}
\Crefmultiformat{equation}{Equations~\textup{#2(#1)#3}}{ and \textup{#2(#1)#3}}
{, \textup{#2(#1)#3}}{, and \textup{#2(#1)#3}}
\Crefrangemultiformat{equation}{Equations~\textup{#3(#1)#4--#5(#2)#6}}%
{ and \textup{#3(#1)#4--#5(#2)#6}}{, \textup{#3(#1)#4--#5(#2)#6}}{, and \textup{#3(#1)#4--#5(#2)#6}}

% Make number non-ita in any environment.
\crefdefaultlabelformat{#2\textup{#1}#3}

\newtheorem{proposition}[theorem]{Proposition}

\newcommand{\refobjdef}[1]{Problem Assumption}

\newtheorem{assumption}{Assumption} 

 % new theorem

%\newtheorem{objdef}{Problem Assumption}

\usepackage{xcolor}

 % Lu
 % Stefano
 % Michiel
 % updated

% changes is in textcolor BLUE

% %% ------------------------------------------------------------------
%% END HEADING INFORMATION
%% ------------------------------------------------------------------

%% ------------------------------------------------------------------
%% MAIN Document
%% ------------------------------------------------------------------

% Heading arguments are {volume}{year}{pages}{date submitted}{date published}{paper id}{author-full-names}
\usepackage{lastpage} 

%% citefigure
\graphicspath{{figures/}}

\jmlrheading{24}{2023}{1-\pageref{LastPage}}{3/22}{}{PAPER ID}{Lu Xia and Stefano Massei} 

% Short headings should be running head and authors last names

\ShortHeadings{AdamL: A fast adaptive gradient method incorporating loss function}{Xia, and Massei}
\firstpageno{1}

\setlength{\bibsep}{0pt}
\begin{document}

\title{AdamL: A fast adaptive gradient method \\ incorporating loss function}

\author{\name Lu Xia \email l.xia1@tue.nl\\
\addr Department of Mathematics and Computer Science\\ 
 Eindhoven University of Technology\\
 Eindhoven, 5600 MB, The Netherlands
 \AND
\name Stefano Massei \email stefano.massei@unipi.it\\
 \addr Department of Mathematics\\
 Università di Pisa\\
 Pisa, 56127, Italy }

\editor{}

\maketitle

% \begin{document}
% \maketitle

% REQUIRED
\begin{abstract}
Adaptive first-order optimizers are fundamental tools in deep learning, although they may suffer from poor generalization due to the nonuniform gradient scaling. In this work, we propose AdamL, a novel variant of the Adam optimizer, that takes into account the loss function information to attain better generalization results.
We provide sufficient conditions that together with the Polyak-{\L}ojasiewicz inequality, ensure the linear convergence of AdamL. As a byproduct of our analysis, we prove similar convergence properties for the EAdam, and AdaBelief optimizers. Experimental results on benchmark functions show that AdamL typically achieves either the fastest convergence or the lowest objective function values when compared to Adam, EAdam, and AdaBelief. These superior performances are confirmed when considering deep learning tasks such as training convolutional neural networks, training generative adversarial networks using vanilla convolutional neural networks, and long short-term memory networks. Finally, in the case of vanilla convolutional neural networks, AdamL stands out from the other Adam's variants and does not require the manual adjustment of the learning rate during the later stage of the training. 
\end{abstract}

% REQUIRED
\begin{keywords}
adaptive gradient methods, non-convex optimization, convergence analysis
\end{keywords}

\section{Introduction}\label{sec:intro}
First-order optimization approaches, e.g., stochastic gradient descent (with momentum) methods (SGD) \citep{robbins1951stochastic}, prevail in the training of deep neural networks due to their simplicity. However, the constant learning stepsize along each gradient coordinate generally limits the convergence speed of SGD. This can be addressed by adaptive gradient methods, such as  Adagrad \citep{duchi2011adaptive},  RMSProp \citep{Tieleman2012rmsprop} and Adam \citep{kingma2014adam}. Adam, which is arguably the most used optimizer,  combines the main ingredients from the SGD with momentum \citep{qian1999momentum} and RMSprop. 

Although at early stages of training the adaptive methods usually show a faster decay of the objective function value than SGD, their convergence becomes slower at later stages \citep{keskar2017improving,wilson2017marginal}. Furthermore, the nonuniform scaling of the gradient may cause worse performances on the unseen data, a phenomenon that is often called \emph{poor generalization} \citep{hoffer2017train,keskar2017improving}. 

{\textbf{Prior Work.}} Many techniques have been merged to bridge the generation gap between adaptive and non-adaptive gradient methods. For instance, the procedure SWATS, proposed in \citep{keskar2017improving},  switches from Adam to SGD when a triggering condition is satisfied. Another example is the method Adabound~\citep{luoadaptivebound} that applies a smooth transition from adaptive methods to SGD as the time step increases, by means of dynamic bounds on the learning rates. 

A recent study introduces a variant of the Adam optimizer called AdaBelief \citep{zhuang2020adabelief}, which achieves the initial fast convergence of adaptive methods, good generalization as SGD, and training stability in complex settings such as generative adversarial networks (GANs). Adabelief is obtained from Adam by replacing the exponential moving average of the squares of the gradients, used to estimate the second moment, with the difference between the true gradient and its first moment; the authors call this quantity the ``belief'' term. Finally, AdaBelief adds a small positive constant $\varepsilon$ at each iteration to its second moment estimate. It is claimed that by using the ``belief'' term, AdaBelief considers the curvature of the objective function to improve the convergence \cite[Sec.~2.2]{zhuang2020adabelief}. However, it has been empirically demonstrated on test cases from image classification, language modeling, and object detection,  that modifying the Adam scheme by simply adding the constant $\varepsilon$  to its second moment estimate, yields comparable performances as AdaBelief \citep{yuan2020eadam}. The method obtained with such a modification is called EAdam. 
The similar performances of EAdam and AdaBelief in these case studies require further investigations and are the main motivation of this work.

 These variants of the Adam optimizer such as Adabound, AdaBelief, and EAdam are empirically shown to achieve faster convergence and better generalization performances than Adam. The convergence is normally proven by measuring the algorithm regret $R_{k}$, after a certain number of iteration steps $k$, and by showing that either $\lim_{k\to \infty}\tfrac{R_{k}}{k}=0$ or that $\lim_{k\to \infty} \nabla f(\mathbf{x}^{(k)}) = \mathbf 0$ \citep{kingma2014adam,luoadaptivebound,zhuang2020adabelief}. However, the convergence analyses developed so far do not provide tools for comparing the convergence rates of the different optimizers. From a theoretical perspective, it remains uncertain whether there is an optimizer that is superior to the others, even in specific scenarios. In general, there is a lack of comparisons of the convergence rates of adaptive gradient optimizers.

{\textbf{Contribution of the paper.}} 
Inspired by the body of work on Adam-type optimizers, we propose a novel adaptive optimization method that we call \emph{AdamL}. This method incorporates information from the loss function by linking the magnitude of the update to the distance between the value of the objective function at the current iterate and the optimum. When the loss function is not directly accessible, i.e. the optimal value is unknown, a dynamic approximation strategy can be employed. The idea is to improve the adaptivity of the scheme by taking small steps when the loss function is small and large steps otherwise.  
AdamL has also the advantage that it does not need a user that manually decreases the learning rate at the late stage of the training, e.g., when training convolutional neural networks.  

On the theoretical side, we study and compare the convergence properties of the EAdam, AdaBelief, and AdamL optimizers, under the Polyak-{\L}ojasiewicz (PL) inequality. Within this context, we provide sufficient conditions for the monotonicity and linear convergence of the above Adam's variants. Moreover, we discuss the relation between these conditions and two phases that are often encountered in the execution of Adam's variants. In the first phase, the methods behave similarly to SGD with decreasing learning rate with respect to the iteration steps; in the second phase, they perform like Adam. Finally, we perform extensive numerical tests to show that with a proper scaling strategy of the loss function, the AdamL optimizer yields faster and more stable convergence than the other adaptive gradient optimizers (Adam, AdaBelief, and EAdam).  The considered case studies involve image classification tasks on the \textsf{Cifar10} and \textsf{Cifar100} datasets, language modeling on \textsf{Penn Treebank}, and the training of WGANs on \textsf{Cifar10} and \textsf{Anime Faces}. 
 % \textcolor{red}{Also here, please check whether the list of numerical experiments is complete.}

{\bf{Outline.}} A review of the considered adaptive gradient methods is presented in \cref{sec:alg}. The AdamL method is introduced in  \cref{sec:adaml}. In \cref{sec:convergence}, we introduce a framework for analyzing the monotonicity and convergence rate of adaptive gradient methods under the assumption of an objective function satisfying the PL inequality. The main theoretical results are given in Propositions~\ref{prop:eadam_monotonicity}--\ref{prop:ladam_convergencerate}. The performance of AdamL is validated and compared with those of the other optimizers in \cref{sec:experiments}. Conclusions are drawn in \cref{sec:conclusions}.

{\bf{Notation.}} Throughout the paper we use bold lower case letters, e.g. $\bg$, for vectors; superscripted case letters, e.g., $\bg^{(k)}$ for the $k$th iteration step dependency; subscripted case letters, e.g., $g_i$ for the $i$th element of the vector; lower case Greek letters, for scalars. For the ease of readability, all the operations on/between vectors are elementary-wise, e.g., $\tfrac{{\bf m}^{(k)}}{\bg^{(k)}}$, $({\bg}^{(k)})^2$, $\sqrt{\bg^{(k)}}$ and $\vert\, \bg^{(k)} \,\vert $ represent the component-wise division, square, square root and absolute value on $\bg^{(k)}$, respectively. When we write an operation between a  vector and a scalar, we mean the component-wise operation between the former and a vector having all entries equal to the scalar. For instance,  $\frac{\delta}{\bg+\varepsilon}$ indicates the vector with entries $\frac{\delta}{g_i+\varepsilon}$. 

\section{Adaptive gradient optimization methods} \label{sec:alg}
This work is concerned with  the following stochastic optimization problem 
\begin{equation}\label{eq:objfun}
    \min_{\bx\in \mathcal{X}} \{f(\bx):= \expt_{\xi \sim\mathcal{D}}\,[\,F(\bf x,\xi)\,]\},
\end{equation}
where $\mathcal{X}\subset \mathbb{R}^{n}$ is a nonempty compact set, $\xi$ is a random vector whose
probability distribution $P$ is supported on set $\mathcal{D} \subset \mathbb{R}^r$ and $F: \mathcal{X} \times \mathcal{D} \to  \mathbb{R}$. Concerning the objective function, 
we make the following 
assumptions that are common to many other studies on the convergence analysis of stochastic optimization, e.g. see \citep{kingma2014adam,luoadaptivebound,nemirovski2009robust,ghadimi2016mini,ghadimi2013stochastic,nguyen2019tight}.
\begin{assumption}\label{assump:assumption}
{\rm The function $f: \mathbb{R}^{n} \to \mathbb{R}$ satisfies the PL inequality, $F(\cdot,\xi)$ is $L$-smooth for every realization of $\xi$, the stochastic gradient $\nabla_{\bx} F(\bx,\xi)$ is bounded and unbiased for every $\xi \in \mathcal{D}$, and the variance of $\nabla_{\bx} F(\bx,\xi)$ is uniformly bounded. In particular, there exist constants $\sigma,\mu, L,G>0$ such that the following inequalities hold:}
\begin{align}
2\mu \,(f(\mathbf{x})-f^*)&\le\Vert \nabla f(\mathbf{x})\Vert^2,\label{eq:PLineq}\\
\Vert \nabla_{\bx} F(\bx_1,\xi) -\nabla_{\bx} F(\bx_2,\xi)\Vert &\leq L\,\Vert \bx_1-\bx_2\Vert,\label{eq:Lineq}\\
\expt\,[\,\nabla_{\bx} F(\bx,\xi)\,]&=\nabla f(\bx),\label{eq:expeq}\\
\Vert \nabla_{\bx} F(\bx,\xi)\Vert_{\infty}&\leq G,\label{eq:infnorm}\\
    0\leq\expt\,[\,\Vert \nabla_{\bx} F(\bx,\xi)\Vert^2\,] &-\Vert \nabla f(\bx)\Vert^2 \leq \sigma^2, \label{eq:varineq}
\end{align}
{\rm for any $\bx,\bx_1,\bx_2 \in \mathbb R^{n}$}. \end{assumption}
Note that \cref{eq:expeq} always holds if $f(\cdot)$ is finite valued in a neighborhood of a point $x$ \cite[cf.~(13)]{nemirovski2009robust}. On the basis of \cref{eq:Lineq}, \cref{eq:expeq}, and Jensen's inequality, we have that
\begin{align}\label{eq:Lineqf}
  \Vert \nabla f(\bx_1) -\nabla f(\bx_2)\Vert &\leq L\,\Vert \bx_1-\bx_2\Vert. 
\end{align}

\subsection{Algorithms} Let us review the Adam method and two of its variants, namely EAdam and Adabelief. Adam combines the main schemes of two other popular optimizers, i.e., momentum and RMSProp. Similar to momentum, the Adam optimizer updates the parameters using the exponential moving average (EMA) of the gradient $\nabla_{\bx} F(\bx^{(k)},\xi)$, i.e., the first raw moment estimate. The learning rate is scaled with the same rule of RMSProp, i.e., the EMA of the squared gradient, namely the second raw moment estimate. Since the first and second raw moment estimates are biased by their initialization, a bias correction is applied. EAdam and AdaBelief follow the same updating scheme as Adam, with the exception that they employ alternative second moment estimates, as outlined in \cref{tab:eta_k}. 
In \cref{ag:adam}, we present a general template of these adaptive gradient methods; to obtain a specific optimizer it is sufficient to define $\bbeta^{(k)}$ in the while loop with the corresponding second raw moment estimate in \cref{tab:eta_k}: $\bv^{(k)}$ for Adam, $\by^{(k)}$ for EAdam, and $\bs^{(k)}$ for AdaBelief. The hyperparameters $\beta_1, \beta_2 \in (0,1)$ control the exponential decay rates of these moving averages and $\eta>0$ affects the magnitude of the learning rate. For a detailed description of these algorithms see \citep{kingma2014adam,yuan2020eadam,zhuang2020adabelief}.  Differently from the Adam and EAdam optimizers, AdaBelief computes its second moment based on the square of the difference between the first moment and the gradient, expressed as $(\nabla_{\bx} F(\bx^{(k)},\xi)-{\bf{m}}^{(k)})^2$. Note that the second moment of the EAdam ($\by^{(k)}$) and AdaBelief ($\bs^{(k)}$) optimizers involve an additional scalar parameter, denoted by $\varepsilon$. In particular, the presence of the $\varepsilon$ term in EAdam and AdaBelief ensures that all the components of ${\bf y}^{(k)}$ and ${\bf s}^{(k)}$ are positive; instead, the entries of ${\bf v}^{(k)}$ in Adam are only non-negative. 

\begin{algorithm}[H]
\begin{algorithmic}
\STATE \textbf{Input:} $\bx^{(0)}$, hyperparameters $\beta_1, \beta_2 \in (0,1)$ and $\varepsilon>0$\\
\textbf{Initialize} ${\bf m}^{(0)}=\mathbf 0$, $k = 0$\\
\textbf{While} $\bx^{(k)}$ not converged \\
\STATE\hspace{12mm}$\bg^{(k)} = \nabla_{\bx} F(\bx^{(k)},\xi^{(k)})$ \\
\STATE\hspace{12mm}${\bf m}^{(k+1)} = \beta_1 \,{\bf m}^{(k)} + (1 - \beta_1) \,\bg^{(k)}$ \\
\STATE\hspace{12mm}$\bx^{(k+1)}= \bx^{(k)} - \bbeta^{(k+1)} \,\tfrac{{\bf m}^{(k+1)}}{1-\beta_1^{k+1}}$, where $\bbeta^{(k+1)}$ is computed as in \cref{tab:eta_k}
\STATE\hspace{12mm}$k = k + 1 $ \\
\textbf{End While} 
\end{algorithmic}
\caption{Template of adaptive gradient algorithms}\label{ag:adam}
\end{algorithm}

\begin{table}[htp]
\caption{Summary of the adaptive learning rate for the Adam, EAdam, and AdaBelief optimizers. The starting point of all the sequences is the zero vector, i.e., ${\bf v}^{(0)}={\bf y}^{(0)}={\bf s}^{(0)}= {\bf 0}$.}\label{tab:eta_k}
\centering
\begin{tabular}{ll}
\hline
Optimizer & Adaptive learning rate\\\hline \rule{0pt}{2.3ex}
Adam      &    $\bbeta^{(k+1)}=\tfrac{\eta}{\sqrt{\tfrac{\bv^{(k+1)}}{1-\beta_2^{k+1}}}+\varepsilon}$, ${\bf v}^{(k+1)} = \beta_2\, {\bf v}^{(k)} + (1 - \beta_2) \,(\bg^{(k)})^{ 2} $               \\
\rule{0pt}{2.3ex} 
EAdam     &    $\bbeta^{(k+1)}=\tfrac{\eta}{\sqrt{\tfrac{\by^{(k+1)}}{1-\beta_2^{k+1}}}+\varepsilon}$, ${\bf y}^{(k+1)} = \beta_2\, {\bf y}^{(k)}\!+ (1 - \beta_2) \,(\bg^{(k)})^{ 2}+\varepsilon$                                            \\
\rule{0pt}{2.3ex} 
AdaBelief &    $\bbeta^{(k+1)}=\tfrac{\eta}{\sqrt{\tfrac{\bs^{(k+1)}}{1-\beta_2^{k+1}}}+\varepsilon}$, ${\bf s}^{(k+1)} = \beta_2\, {\bf s}^{(k)}\!+ (1 - \beta_2) \,(\bg^{(k)}\!-{\bf m}^{(k+1)})^{ 2}+\varepsilon $ 
\\ \hline
\end{tabular}
\end{table}

\subsection{The role of gradient and curvature in choosing the step size} 
The discussion in \cite[Sec.~2.2]{zhuang2020adabelief} suggests that a smart optimizer should take into account the curvature of the loss function, rather than simply applying large updating stepsizes when the gradients are large. Indeed, the gradient only measures the local steepness of a function while the curvature indicates how fast the gradient is changing. 

\cref{fig:curvature} illustrates the updating magnitude $\Delta^{(k)}:=\vert \eta^{(k)}\,m^{(k)}\vert$ applied in the SGD and in the various adaptive optimizers for two types of objective functions of scalar argument, with different curvature near their optima.  \cref{fig:curve1} is used in \cite[Fig.~1]{zhuang2020adabelief} to give an intuition of the situations where Adabelief outperforms Adam and EAdam. In particular, the region of $x_{4},$ $x_5$, and $x_6$ is characterized by large gradients and a small curvature; here, the ideal stepsize would be large but both Adam and EAdam have to rely on small updates, see the definitions of $\bv^{(k)}$ and $\by^{(k)}$. In view of the small curvature,  Adabelief does not encounter this problem. 

However, even Adabelief might struggle in the situation depicted in \cref{fig:curve2}: both the gradient and the curvature near the optima are small and an ideal stepsize, in the region of $x_{7},$ $x_8$, and $x_9$, would be also small. In this context, the Adam, EAdam, and Adabelief optimizers use a large update of the stepsize. 
Therefore, relying only on considerations of either the gradient or curvature may not guarantee the use of an appropriate updating step size for gradient-based optimization methods.

\begin{figure}[htp]
\centering
\subfloat[cf.~{\cite[fig.~1]{zhuang2020adabelief}}]{\label{fig:curve1}\resizebox{0.8\textwidth}{!}{
\begin{tikzpicture}
    \draw (0, 0) node[inner sep=0] {\includegraphics[width=0.8\textwidth]{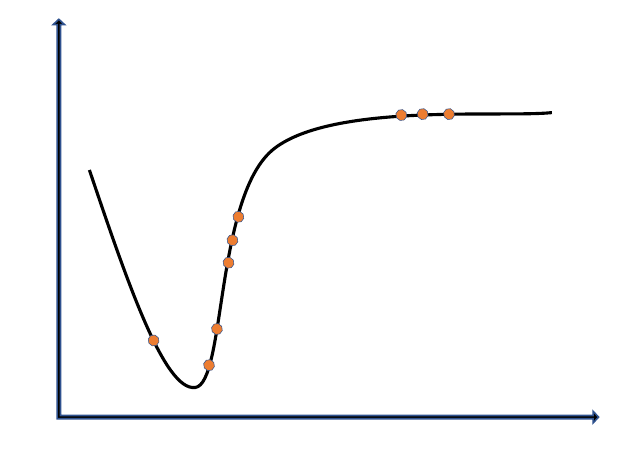}};
    \draw (2.5, 1.8) node {$x_1$};
    \draw (2, 1.8) node {$x_2$};
    \draw (1.5, 1.8) node {$x_3$};
    \draw (-1.1, 0.1) node {$x_4$};
    \draw (-1.2, -0.3) node {$x_5$};
    \draw (-1.3, -0.7) node {$x_6$};
    \draw (-1.5, -2) node {$x_7$};
    \draw (-1.7, -2.8) node {$x_9$};
    \draw (-2.8, -2) node {$x_8$};
    \draw (-6.0, 3.7) node {$f(x)$};
    \draw (5.7, -4.6) node {$x$};
    \draw [align=left] (2.2, 3.2) node {Region for $x_{1,2,3}$: the ideal stepsize is \textbf{large}\\
    $\vert g\vert$ is small $\rightarrow$  $\Delta^{(k)}$(\textcolor{red}{SGD}) is small; $\Delta^{(k)}$(\textbf{Adam and EAdam}) are large\\
    $\vert g(x_1)-g_(x_2)\vert$ is small $\rightarrow$ $\Delta^{(k)}$(\textbf{AdaBelief}) is large\\ \textbf{$f(\bx)-f^*$ is large} $\rightarrow$ $\Delta^{(k)}$(\textbf{AdamL}) is large};
    \draw [align=left] (5.5, 0.1) node {Region for $x_{4,5,6}$: the ideal stepsize is \textbf{large} \\
    $\vert g\vert$ is large $\rightarrow$ $\Delta^{(k)}$(\textbf{SGD}) is large and $\Delta^{(k)}$(\textcolor{red}{Adam and EAdam}) are small\\
    $\vert g(x_4)-g(x_5)\vert$ is small $\rightarrow$ $\Delta^{(k)}$(\textbf{AdaBelief}) is large\\ \textbf{$f(\bx)-f^*$ is medium} $\rightarrow$ $\Delta^{(k)}$(\textbf{AdamL}) is medium};
    \draw [align=left] (5.2, -2.5) node {Region for $x_{7,8,9}$: the ideal stepsize is \textbf{small}\\
    $\vert g\vert$ is large $\rightarrow$  $\Delta^{(k)}$(\textcolor{red}{SGD}) is large and $\Delta^{(k)}$(\textbf{Adam and EAdam}) are small\\
    $\vert g(x_7)-g(x_8)\vert$ is large $\rightarrow$ $\Delta^{(k)}$(\textbf{AdaBelief}) is small\\ 
    \textbf{$f(\bx)-f^*$ is small} $\rightarrow$ $\Delta^{(k)}$(\textbf{AdamL}) is small};
\end{tikzpicture}
}}\quad
\subfloat[]{\label{fig:curve2}
\resizebox{0.8\textwidth}{!}{
\begin{tikzpicture}
    \draw (0, 0) node[inner sep=0] {\includegraphics[width=0.8\textwidth]{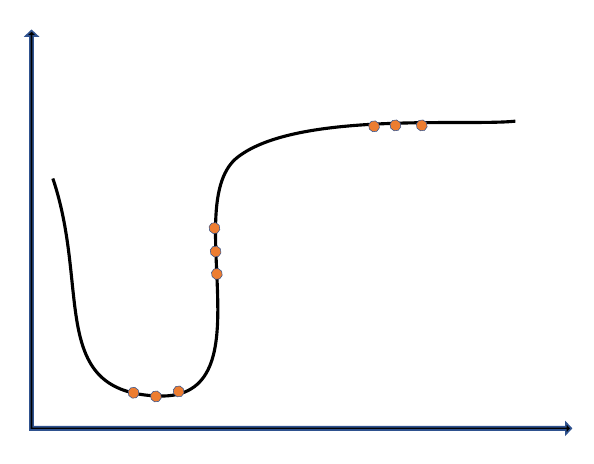}};
    \draw (2.5, 1.8) node {$x_1$};
    \draw (2, 1.8) node {$x_2$};
    \draw (1.5, 1.8) node {$x_3$};
    \draw (-1.1, 0.1) node {$x_4$};
    \draw (-1.2, -0.3) node {$x_5$};
    \draw (-1.3, -0.7) node {$x_6$};
    \draw (-2.3, -3.6) node {$x_7$};
    \draw (-3, -3.7) node {$x_8$};
    \draw (-3.9, -3.6) node {$x_9$};
    \draw (-6.4, 3.7) node {$f(x)$};
    \draw (5.7, -4.8) node {$x$};
    \draw [align=left] (2.2, 3.3) node {Region for $x_{1,2,3}$: the ideal stepsize is \textbf{large}\\
    $\vert g\vert$ is small $\rightarrow$  $\Delta^{(k)}$(\textcolor{red}{SGD}) is small; $\Delta^{(k)}$(\textbf{Adam and EAdam}) are large\\
    $\vert g(x_1)-g_(x_2)\vert$ is small $\rightarrow$ $\Delta^{(k)}$(\textbf{AdaBelief}) is large\\ \textbf{$f(\bx)-f^*$ is large} $\rightarrow$ $\Delta^{(k)}$(\textbf{AdamL}) is large};
    \draw [align=left] (5.5, 0.1) node {Region for $x_{4,5,6}$: the ideal stepsize is \textbf{large}\\ $\vert g\vert$ is large $\rightarrow$ $\Delta^{(k)}$(\textbf{SGD}) is large and $\Delta^{(k)}$(\textcolor{red}{Adam and EAdam}) are small\\
    $\vert g(x_4)-g(x_5)\vert$ is small $\rightarrow$ $\Delta^{(k)}$(\textbf{AdaBelief}) is large \\ \textbf{$f(\bx)-f^*$ is medium} $\rightarrow$ $\Delta^{(k)}$(\textbf{AdamL}) is medium};
    \draw [align=left] (5.2, -2.7) node {Region for $x_{7,8,9}$: the ideal stepsize is \textbf{small}\\ $\vert g\vert$ is small $\rightarrow$  $\Delta^{(k)}$(SGD) is small and $\Delta^{(k)}$(\textcolor{red}{Adam and EAdam}) are large\\
    $\vert g(x_7)-g(x_8)\vert$ is small $\rightarrow$ $\Delta^{(k)}$(\textcolor{red}{AdaBelief}) is large \\   
    \textbf{$f(\bx)-f^*$ is small} $\rightarrow$ $\Delta^{(k)}$(\textbf{AdamL}) is small};
\end{tikzpicture}
}}
\caption{Curvature of different functions and their ideal learning rate for different optimizers; the methods with undesired updating magnitudes are marked in red.}\label{fig:curvature}
\end{figure}

\subsection{A new adaptive gradient method incorporating the loss function}\label{sec:adaml}
In the situation described in \cref{fig:curvature},  the ideal stepsize changes proportionally with respect to the value $f(\mathbf x^{(k)})-f^*$.  This means that a large stepsize is favorable when $f(\mathbf x^{(k)})$ is much larger than $f^*$ and decreasing the stepsize is advantageous when $f(\mathbf x^{(k)})$ gets close to $f^*$. Motivated by this argument, we propose a new variant of Adam that takes into account the loss function, AdamL. The updating rule of AdamL adheres to the same template in \cref{ag:adam}, where the adaptive stepsize and the computation of the second moment estimate are performed as follows:
\begin{subequations}\label{eq:2ndmoment_adaml}
\begin{align}\label{eq:lr_adaml}
 \bbeta^{(k+1)}=\eta\,\sqrt{\tfrac{1-\beta_2^{k+1}}{\bw^{(k+1)}}},
\end{align}
\begin{align}\label{eq:2ndmoment_adamla}
{\bw}^{(k+1)} &= \beta_2\, {\bw}^{(k)} + (1 - \beta_2)\tfrac{(\bg^{(k)})^{ 2}}{\gamma\,(\ell^{(k)})^{\varphi}}+\varepsilon\,\ell^{(k)},
\end{align}
and 
\begin{align}\label{eq:ell}
    \ell^{(k)}:= \ell(f(\bx^{(k)})-f^*),
\end{align}
\end{subequations}
where $\gamma,\varphi$ are positive parameters and $\ell:\mathbb{R}^+\rightarrow \mathbb{R}^+$ is a scalar function that we call \emph{scaling function}. The choice of the scaling function and of the hyperparameters is problem dependent and has to be done to \emph{ensure that the value of $\gamma\,(\ell^{(k)})^{\varphi}$ decreases and increases as $f(\bx^{(k)})$ approaches and moves away from a neighborhood of $f^*$, respectively.} We set the power function with a positive exponent in the denominator of the second moment estimate because it exhibits a desirable property: as the $\varphi$ increases, the function becomes flatter when $\ell^{(k)}\in(0,1)$ and grows more than linearly when $\varphi>1$ and $\ell^{(k)}>1$. Subsection~\ref{sec:hyperparametersinAdamL} contains a detailed discussion of the choice of the scaling function and the hyperparameters for AdamL.

Looking at \cref{eq:2ndmoment_adaml} we see that the second moment estimate of AdamL combines the adaptive term $(1 - \beta_2)\,\tfrac{(\bg^{(k)})^{ 2}}{\gamma\,(\ell^{(k)})^{\varphi}}$ with the non-adaptive term $\varepsilon\,\ell^{(k)}$. 
In particular, AdamL behaves like SGD when $\varepsilon \ell^{(k)}$ is significantly larger than the maximum entry of the first term, and like Adam when $\varepsilon \ell^{(k)}$ is significantly lower than the minimum entry. Therefore, unlike other methods like SWATS \citep{keskar2017improving}, which sets a triggering condition, or Adabound \citep{luoadaptivebound}, which uses dynamic bounds on learning rates, AdamL can automatically switch between a non-adaptive and an adaptive method  (SGD and Adam). 
To facilitate our analysis, we consider the cases where the adaptive term is entry-wise larger than the non-adaptive one and vice versa; see the third-row entry in \cref{tab:2modes} for the rigorous relation. Qualitatively, we can describe these two updating modes as follows:
\begin{itemize}
    \item \textbf{Non-Adaptive Mode}: AdamL behaves like SGD with an increasing stepsize as $\ell^{(k)}$ decreases;
    \item \textbf{Adaptive Mode}: AdamL behaves like Adam with a decreasing stepsize as $\ell^{(k)}$ decreases.
\end{itemize}
Alternative scenarios may arise, for example, when certain gradient components exhibit Non-Adaptive Mode behavior while others adopt the Adaptive Mode, or when the non-adaptive term $\varepsilon\,\ell^{(k)}$ and the adaptive term $(1 - \beta_2)\,\tfrac{(\bg^{(k)})^{ 2}}{\gamma\,(\ell^{(k)})^{\varphi}}$ have similar magnitudes. However, the latter case is highly improbable or undesirable. Indeed, we can use the parameter $\varphi$ to adjust the speed of the transitions from the Non-Adaptive Mode to the Adaptive Mode. Specifically, a larger value of $\varphi$ results in a faster decay of $(\ell^{(k)})^{\varphi}$ when $\ell^{(k)}\in(0,1)$. This indicates that for $\ell^{(k)}\in(0,1)$, the non-adaptive term $\varepsilon\,\ell^{(k)}$ decreases by a factor of $\ell^{(k)}$ and the adaptive term $(1 - \beta_2)\,\tfrac{(\bg^{(k)})^{ 2}}{\gamma\,(\ell^{(k)})^{\varphi}}$ increases by a factor of $(\ell^{(k)})^{\varphi}$. Conversely, when $\ell^{(k)}\ge 1$, the non-adaptive term $\varepsilon\,\ell^{(k)}$ increases by of factor of $\ell^{(k)}$, and the adaptive term $(1 - \beta_2)\,\tfrac{(\bg^{(k)})^{ 2}}{\gamma\,(\ell^{(k)})^{\varphi}}$ decreases by a factor of $(\ell^{(k)})^{\varphi}$. There may be a transient where these two terms become similar, but this is likely to be short already for moderate values of $\varphi$.
 
During the early stage of the training, when $f(\bx^{(k)})$ is far from $f^*$, the Non-Adaptive Mode is beneficial since the updating stepsize increases in the direction where $f(\bx^{(k)})$ approaches $f^*$. Additionally, to prevent \emph{poor generalization} on unseen datasets, it is beneficial to have a small updating stepsize near the optimum. Therefore, it is advantageous that AdamL uses the Adaptive Mode near $f^*$. 

\subsection{Choosing the hyperparameters in AdamL} \label{sec:hyperparametersinAdamL}
\textbf{Impact of hyperparameters in AdamL.}
The hyperparameter $\gamma$ (cf.~\cref{eq:2ndmoment_adamla}) can be used to realize a smooth transition between Non-Adaptive and Adaptive Modes. In general, as $\gamma$ increases, the AdamL optimizer tends to postpone its transition from the Non-Adaptive Mode to the Adaptive Mode. In our implementation, we use as the default setting $\gamma=1$ and we increase the value of $\gamma$  only when an early transition is observed. Additionally, a larger value of $\varphi$ results in a faster decay of $(\ell^{(k)})^{\varphi}$ when $\ell^{(k)}\in(0,1)$, implying a faster switch from the Non-Adaptive Mode to the Adaptive Mode.  On the basis of \cref{eq:ell}, $\ell^{(k)}$ decreases as $f(\bx^{(k)})$ converges to $f^*$.  To activate the Adaptive Mode, the exponent $\varphi$ (cf.~\cref{eq:2ndmoment_adamla}) should be determined based on the minimum attainable value of $\ell^{(k)}$. In particular, it should guarantee that $(1 - \beta_2)\tfrac{(\bg^{(k)})^{ 2}}{\gamma\,(\ell^{(k)})^{\varphi}}$ increases faster than $\varepsilon\,\ell^{(k)}$. 

When a small updating stepsize is beneficial, we recommend scaling $\ell^{(k)}$ to make it range in $(0,1)$, so that  $(\ell^{(k)})^{\varphi}$ decreases as $f(\bx^{(k)})$ approaches $f^*$. In this case, a larger value of $\varphi$ results in an earlier switch from the Non-Adaptive Mode to the Adaptive Mode. By means of numerical experiments in \cref{sec:experiments}, we find that $\varphi=4$ or even larger values are typically good choices for training CNNs, WGANs, and LSTMs.   

\vspace{2mm}
\noindent\textbf{Choice of the scaling function.}  In some deep learning tasks, such as image classification and regression, the minimum cost function value can be very close to zero. For instance, in the context of image classification, it is common to use activation functions like sigmoid or softmax at the output layer and the cross-entropy loss function as the cost function. Therefore, the range of the loss function is often distributed between $(0,1)$. In this case, one can directly use $\ell^{(k)}=f(\bx^{(k)})$. However, in several training tasks, $f^*$ is unknown and one needs at least an estimate of the range of the cost function to define $\ell^{(k)}$.  
A straightforward approach is to first run Adam and store the quantities $f_{\max}$ and $f_{\min}$ representing the highest and smallest value of the cost function observed throughout Adam's execution, respectively; then set $\ell^{(k)}:=\frac{f(\bx^{(k)})-f_{\min}}{f_{\max}-f_{\min}}$. Although this approach requires running Adam, AdamL consistently yields higher training and testing accuracy compared to the first run. 

In the following we present an alternative method to estimate the range of the cost function and determining $\ell^{(k)}$, in the context of various training tasks. In cases where the cost function values are consistently positive, and the minimum cost function value is unknown, we take $f_{\min}=0$. Then, we set $\ell^{(0)} = 1$ for the initial training epoch, and choose $\ell^{(k)} = \frac{f(\bx^{(k)})}{f_{\max}}$ for the remaining training epochs, where $f_{\max}$ indicates the maximum cost function value observed in the first epoch. We remark that in the (unusual) case that full batch size is applied, $f_{\max}$ can be estimated by setting $\ell^{(j)} = 1$ for a few iterations, e.g., $j=0,\dots,k_0$ and $f_{\max}=\max_{j=0,\dots,k_0} f(\bx^{(j)})$. This ensures that $\ell^{(k)}$ falls approximately within the range $(0,1]$. We will assess the effectiveness of this strategy by training 1-, 2-, and 3-layer LSTMs on \textsf{Penn Treebank} in \cref{sec:experiments}. 

 A pseudocode implementing AdamL in scenarios where $f_{\max}$ and $f_{\min}$ are unknown, is reported in \cref{ag:strategyI}. In our numerical tests, we will use this procedure to train LSTMs and WGANs (see also next paragraph).

\noindent\textbf{Applying AdamL to WGANs.} The AdamL optimizer can be adapted to address min-max problems arising in the training of WGANs. In this context, the generator aims to minimize the Wasserstein distance (or critic's output) \cite[Eq.~(2)]{wgan2017}, while the discriminator/critic aims to maximize it. A detailed description of the WGAN procedure is given in \cite[Algorithm.~1]{wgan2017}. 
We emphasize that we apply AdamL only for updating the parameters of the discriminator. This is because, in WGAN, the parameters of the discriminator are more frequently updated compared to that of the generator (cf.~\cite[Algorithm~1]{wgan2017}), and this enables us to get a more accurate approximation of the Wasserstein distance \cite[Sec.~3]{wgan2017}.
For the generator, we set 
$\ell^{(k)}=1$ for all $k$, which is equivalent to the EAdam optimizer. 

To shed some light on the setting for AdamL, we assume that the Wasserstein distance of the discriminator is $f_1(d_{\text{real}},\bx)-f_1(d_{\text{fake}},\bx)$, where $d_{\text{real}}$ and $d_{\text{fake}}$ denote the real and fake samples, respectively; $f_1$ represent the function that generates the discriminator's output.
The discriminator aims to maximize $f_1(d_{\text{real}},\bx)-f_1(d_{\text{fake}},\bx)$, which can be interpreted as maximizing $f_1(d_{\text{real}},\bx)$ and minimizing $f_1(d_{\text{fake}},\bx)$ simultaneously. By defining $f(\bx^{(k)}):=f_1(d_{\text{fake}},\bx^{(k)})$ and $\ell^{(k)}=\frac{f(\bx^{(k)})-f_{\min}}{f_{scale}}$, where $f_{scale}:=10^{\lfloor\log_{10}(f_{\max}-f_{\min})\rfloor}$, we have that $\ell^{(k)}$ is scaled to approximately fall within the range of $(0,1)$. This scaling ensures that AdamL uses a smaller stepsize when $f(\bx^{(k)})$ decreases. Reducing the updating stepsize yields smoother training progress, which often leads to improved convergence towards the desired equilibrium between the generator and discriminator.
Note that $f_{\min}$ and $f_{\max}$ can be easily estimated by setting $\ell^{(0)}=1$ for the first training epoch because, when training a GAN, it is common to observe significant variation in the discriminator's output during the first training epoch. As we will see in the numerical tests of  \cref{sec:experiments}, typically, there is no need to frequently update the maximum or minimum values of the discriminator's output in the rest of the training process. This is because, instead of consistently decreasing the value of the discriminator's (or critic's) output, GANs are designed to gradually improve and reach a more stable equilibrium between the discriminator and generator. 
\begin{algorithm}[htp]
{\small
\begin{algorithmic}
\STATE \textbf{Input:} $\bx^{(0)}$, hyperparameters $\beta_1, \beta_2 \in (0,1)$, $\varepsilon, \eta, \gamma, \varphi >0$, number of training epochs $n_{\mathrm{epoch}}>0$, and number of iterations per epoch $n_{\mathrm{iter}}>0$ (based on mini-batch size)\\
\textbf{Initialize} $\mathbf{ m}^{(0)}, \mathbf{ w}^{(0)}, \mathbf{ s}^{(0)}= \mathbf{ 0}$\\
\textbf{For}  $k=0,\dots, n_{\mathrm{epoch}}$
\STATE\hspace{10mm}\textbf{If} $k=0$
\STATE\hspace{15mm}\textbf{For} $j=1,\dots,n_{\mathrm{iter}}$
\STATE\hspace{25mm}set $\ell^{(j)},\gamma=1$, 
\STATE\hspace{25mm}$\bg^{(j)} = \nabla_{\bx} F(\bx^{(j)},\xi^{(j)})$ 
\STATE\hspace{25mm}$\mathbf{ m}^{(j+1)} = \beta_1 \,\mathbf{ m}^{(j)} + (1 - \beta_1) \,\bg^{(j)}$ 
\STATE\hspace{25mm}${\bw}^{(j+1)} = \beta_2\, {\bw}^{(j)} + (1 - \beta_2)\tfrac{(\bg^{(j)})^{ 2}}{\gamma\,(\ell^{(j)})^{\varphi}}+\varepsilon\,\ell^{(j)}$
\STATE\hspace{25mm}$\bx^{(j+1)}= \bx^{(j)} -\eta\,\frac{\sqrt{1-\beta_2^{j+1}}}{1-\beta_1^{j+1} }\tfrac{\mathbf{m}^{(j+1)}}{\sqrt{{\bw}^{(j+1)}}}$
\STATE\hspace{25mm}\textbf{If} $j=1$ 
\STATE\hspace{30mm}$f_{\min}=F(\bx^{(1)},\xi^{(1)})$ and $f_{\max}=F(\bx^{(1)},\xi^{(1)})$
\STATE\hspace{25mm}\textbf{Else}
\STATE\hspace{30mm}$f_{\min}=\min(f_{\min},F(\bx^{(j)},\xi^{(j)}))$ 
\STATE\hspace{30mm}$f_{\max}=\max(f_{\max},F(\bx^{(j)},\xi^{(j)}))$
\STATE\hspace{25mm}\textbf{End}
\STATE\hspace{15mm}\textbf{End} 
\STATE\hspace{10mm}\textbf{Else}
\STATE\hspace{15mm}\textbf{For} $j=k\,n_{\mathrm{iter}}+1, \dots, (k+1)\, n_{\mathrm{iter}}$
\STATE\hspace{25mm}set $\ell^{(j)} = \frac{f(\bx^{(j)})}{f_{\max}}$ (LSTMs) and $\ell^{(j)}=\frac{f(\bx^{(j)})-f_{\min}}{f_{\max}-f_{\min}}$ (WGANs)
\STATE\hspace{25mm}$\bg^{(j)} = \nabla_{\bx} F(\bx^{(j)},\xi^{(j)})$ 
\STATE\hspace{25mm}$\mathbf{m}^{(j+1)} = \beta_1 \,\mathbf{ m}^{(j)} + (1 - \beta_1) \,\bg^{(j)}$ 
\STATE\hspace{25mm}${\bw}^{(j+1)} = \beta_2\, {\bw}^{(j)} + (1 - \beta_2)\tfrac{(\bg^{(j)})^{ 2}}{\gamma\,(\ell^{(j)})^{\varphi}}+\varepsilon\,\ell^{(j)}$
\STATE\hspace{25mm}$\bx^{(j+1)}= \bx^{(j)} -\eta\,\frac{\sqrt{1-\beta_2^{j+1}}}{1-\beta_1^{j+1} }\tfrac{\mathbf{m}^{(j+1)}}{\sqrt{{\bw}^{(j+1)}}}$
\STATE\hspace{15mm}\textbf{End} 
\STATE\hspace{10mm}\textbf{End}\\
\textbf{End} 
\end{algorithmic}
\caption{AdamL for unknown $f_{\max}$ and $f_{\min}$ for LSTMs and WGANs}\label{ag:strategyI}}
\end{algorithm}

\subsection{Summary of the second moment estimates}\label{sec:summary_2ndmoment}
To facilitate the convergence analysis in \cref{sec:convergence}, we summarize the second moment estimates of the four optimizers, i.e., Adam, EAdam, AdaBelief, and AdamL. The starting vectors $\mathbf{v}^{(0)}$, $\mathbf{y}^{(0)}$, $\mathbf{s}^{(0)}$ and $\mathbf{w}^{(0)}$ are assumed to be initialized as zero vectors. 

\noindent The second raw moment estimate $\mathbf{v}^{(k)}$ of Adam \citep{kingma2014adam} is
\begin{align}\label{eq:2mon_adam}
  \mathbf{v}^{(k+1)}
  =(1-\beta_2)\sum_{j=0}^{k} \beta_2^{k-j} (\bg^{(j)})^{ 2}.
\end{align}
The second raw moment estimate $\mathbf{y}^{(k)}$ of EAdam \citep{yuan2020eadam} is 
\begin{align}\label{eq:2mon_eadam}
  \mathbf{y}^{(k+1)}
  =(1-\beta_2)\sum_{j=0}^{k} \beta_2^{k-j} (\bg^{(j)})^{2}+\varepsilon\sum_{j=0}^{k} \beta_2^{j}.
\end{align}
The estimation of second raw moment $\mathbf{s}^{(k)}$ of AdaBelief \citep{zhuang2020adabelief} is 
\begin{align}\label{eq:2mon_AdaBelief}
  \mathbf{s}^{(k+1)}
  &=(1-\beta_2)\sum_{j=0}^{k} \beta_2^{k-j}(\bg^{(j)}-\mathbf{m}^{(j+1)})^{2}+\varepsilon\sum_{j=0}^{k} \beta_2^{j}.
\end{align} 
The second raw moment estimate $\mathbf{w}^{(k)}$ of AdamL is defined as
\begin{align}\label{eq:2mon_ladam}
 \mathbf{w}^{(k+1)}
 &=(1-\beta_2)\sum_{j=0}^{k} \beta_2^{k-j} \tfrac{(\bg^{(j)})^{ 2}}{\gamma\,(\ell^{(j)})^{\varphi}}  +\varepsilon\,\sum_{j=0}^{k} \beta_2^{k-j}\ell^{(j)}.
\end{align}

\subsection{Adaptive and non-adaptive behavior of the optimizers}
Looking at \cref{eq:2mon_adam,eq:2mon_eadam,eq:2mon_AdaBelief,eq:2mon_ladam} we note that similarly to AdamL, the second moment estimates in EAdam and AdaBelief integrate non-adaptive and adaptive terms.  As in \cref{sec:adaml}, although there may be additional scenarios, we focus on the following two major modes:
\begin{itemize}
    \item \textbf{Non-Adaptive Mode}: the method performs like SGD with decreasing updating stepsize as the increasing number of iteration steps;
    \item \textbf{Adaptive Mode}: the method behaves like an adaptive gradient method with the adaptive updating stepsize;
\end{itemize}
that are identified by the conditions in the first and second-row entry of \cref{tab:2modes}.
 Note that for the EAdam and AdaBelief optimizers, the updating stepsize is the same in the Non-Adaptive Mode. However, they may have different triggering conditions for the transition between the Non-Adaptive mode and the Adaptive mode. In the Adaptive Mode, EAdam and AdaBelief take large updating stepsizes in the direction of small gradient and curvature, respectively. When both the curvature and gradients are sufficiently small to the extent that only the accumulated $\varepsilon$ is significant, they tend to switch to the Non-Adaptive Mode, where the performance of EAdam and AdaBelief is likely to be similar, given that their second moment estimates are identical. This can explain the comparable results of EAdam and AdaBelief in image classifications and language modeling in \citep{yuan2020eadam}. However, since AdaBelief considers the curvature of the loss function, it may perform better than EAdam in the regions of the objective function characterized by large gradients and small curvature. The proposed AdamL takes into account the magnitude of the loss function and adjusts the second raw moment estimate accordingly. In general, it proceeds an increasing stepsize in the Non-Adaptive Mode when the loss function value decreases and uses decreasing stepsize in the Adaptive Mode when the loss function value decreases.

\begin{table}[ht!]
{\footnotesize
\caption{{\footnotesize Condition for activating Adaptive and Non-Adaptive updating modes for each optimizer}}\label{tab:2modes}
\vspace{3mm}
\begin{center}
\begin{tabular}{lll}
 \cline{1-3} \rule{0pt}{2.3ex}%
Optimizer & Updating mode  & Sufficient condition \\ \cline{1-3}\rule{0pt}{2.3ex}%
\multirow{2}{*}{EAdam} &  Adaptive &%$\mathbf{y}^{(k+1)}\ge 
$(1-\beta_2)\sum_{j=0}^{k} \beta_2^{k-j} (\bg^{(j)})^{2}>\varepsilon\sum_{j=0}^{k} \beta_2^{j}$ \\ \rule{0pt}{2.3ex}%
&Non-Adaptive& $\varepsilon\sum_{j=0}^{k} \beta_2^{j}> (1-\beta_2)\sum_{j=0}^{k} \beta_2^{k-j} (\bg^{(j)})^{2}$\\ \rule{0pt}{4.3ex}%
\multirow{2}{*}{AdaBelief} & Adaptive&
$(1-\beta_2)\sum_{j=0}^{k} \beta_2^{k-j}(\bg^{(j)}-\mathbf{m}^{(j+1)})^{2}>\varepsilon\sum_{j=0}^{k} \beta_2^{j}$ \\ \rule{0pt}{2.3ex}%
&Non-Adaptive & $ \varepsilon\sum_{j=0}^{k} \beta_2^{j}>(1-\beta_2)\sum_{j=0}^{k} \beta_2^{k-j}(\bg^{(j)}-\mathbf{m}^{(j+1)})^{2}$ \\  \rule{0pt}{4.3ex}%  \rule{0pt}{2.3ex}%
\multirow{2}{*}{AdamL} &
Adaptive &$(1-\beta_2)\sum_{j=0}^{k} \beta_2^{k-j} \tfrac{(\bg^{(j)})^{ 2}}{\gamma\,(\ell^{(j)})^{\varphi}}>\varepsilon\,\sum_{j=0}^{k} \beta_2^{k-j}\ell^{(j)}$ \\ \rule{0pt}{2.3ex}%
&Non-Adaptive & $\varepsilon\,\sum_{j=0}^{k} \beta_2^{k-j}\ell^{(j)}>(1-\beta_2)\sum_{j=0}^{k} \beta_2^{k-j} \tfrac{(\bg^{(j)})^{ 2}}{\gamma\,(\ell^{(j)})^{\varphi}}$
\\\cline{1-3}
\end{tabular}
\end{center}
}
% \vspace{-4mm}
\end{table}

In the subsequent section, we conduct an analysis on how different estimations of the second raw moment influence the convergence rate under the PL condition. 

\section{Convergence analysis}\label{sec:convergence}
Throughout this section, we indicate with $k\in\mathbb N$ the number of iteration steps that have been run by the algorithm under consideration. Note that Adam, EAdam, AdaBelief, and AdamL share the same first moment estimates, as they are all based on Algorithm~\ref{ag:adam}. Therefore, the different convergence rates are only attributed to the various second raw moment estimates. In view of this and similarly to other studies \citep{cao2020convergence,reddi2018convergence,zhuang2020adabelief}, where the condition $\lim_{k\to\infty}\beta_1^{(k)} =0$ is employed to streamline the analysis, we make the simplifying assumption that $\beta_1=0$.
The convergence analysis incorporating such simplification is frequently denoted as the convergence of an ``Adam-type'' optimizer~\citep{cao2020convergence,zhuang2020adabelief}. 

In \cref{sec:summary_2ndmoment}, we observed that we can identify two modes for the updating procedure of the EAdam, AdaBelief, and AdamL; see, e.g., \cref{tab:2modes}.
It is crucial to note the possibility that certain gradient components may exhibit behavior resembling the Non-Adaptive Mode, while others adhere to the Adaptive Mode. 
However, in our analysis, we only draw connections between our theoretical results and the conditions shown in \cref{tab:2modes}. Note that for the AdaBelief optimizer, we have only the Non Adaptive mode since $\beta_1=0$ implies $\mathbf{m}^{(k+1)}=\bg^{(k)}$ and this makes its first term identically zero.

For each optimizer,  we provide sufficient conditions to ensure the monotonicity and the linear convergence of the scheme up to $\mathcal O(\sigma)$ and $\mathcal O(\sigma^2)$ terms, depending on the larger value between $\sigma$ and $\sigma^2$ (cf.~\cref{eq:simga_i}). 
We remark that looking at the coefficients in front of the $\mathcal O(\sigma)$ and $\mathcal O(\sigma^2)$ terms yields insights into the role of the parameters in determining how rapidly the method converges and how close to the optimum can get. 
A technical aspect, that will be useful for quantifying the convergence rates of the various methods, is to have non-zero lower bounds for the entries of the second moment estimates (that are all made of non-negative entries). We see that the sequences $\mathbf{y}^{(k)}, \mathbf{s}^{(k)},$ and $\mathbf{w}^{(k)}$ are ensured to be always positive and that the sought lower bounds can be retrieved by looking at the second addends of \cref{eq:2mon_eadam,eq:2mon_AdaBelief,eq:2mon_ladam}. 
To get the positivity property also for $\mathbf{v}^{(k)}$ (Adam) we have to assume that the chosen starting point $\mathbf x^{(0)}$ is such that $\mathbf g^{(0)}$ has all non-zero entries. Note that this is not too restrictive as, in most cases, the set of starting points violating this condition has zero Lebesgue measure in $\mathcal X$. In view of \cref{eq:2mon_adam}, we see that $\mathbf v^{(k)}\ge (1-\beta)\beta_2^{k-1}\mathbf g^{(0)}$ that  allows us to consider 
\begin{equation}\label{eq:vmin}
   v_{\min}(\mathbf x_0):=(1-\beta_2)\,\beta_2^{k-1}\,\min_{i=1,\dots,n} (g_i^{(0)})^2>0,
\end{equation}
as lower bound for the minimum non-zero entry among the second raw moment estimates computed over the first $k$ iteration steps of Adam, starting from $\mathbf x^{(0)}$. When the starting point is clear from the context we just write $v_{\min}$ to lighten the notation.  

Finally, other useful tools for our analysis are the positive random variables \begin{equation}\label{eq:simga_i}
    \sigma_i^2:=\expt\,[(g_i^{(k)})^2\,]-\nabla f(\bx^{(k)})_i^2,
\end{equation} so that  $\sum_{i=1}^n \sigma_i^2=\sum_{i=1}^n \big(\expt\,[(g_i^{(k)})^2\,]-(\nabla f(\bx^{(k)})_i)^2\big) =\expt\,[\, \Vert \bg^{(k)}\Vert^2\,] -\Vert \nabla f(\bx^{(k)})\Vert^2 \leq \sigma^2$ (cf.~\cref{eq:varineq}). On the basis of Jensen's inequality, concavity, and \cref{eq:simga_i}, we get 
\begin{equation}\label{eq:boundvertg}
    \expt\,[\,\vert g_i^{(k)}\vert\,]\le \sqrt{\expt\,[\,(g_i^{(k)})^2\,]} =\sqrt{\sigma_i^2+\nabla f(\bx^{(k)})_i^2}\le \vert \nabla f(\bx^{(k)})_i \vert +\sigma_i.
\end{equation} 
In the next sections, we address the convergence analysis of EAdam-type, AdaBelief-type, and AdamL-type optimizers.

\subsection{Convergence results for EAdam}

Let us begin by studying the convergence of the EAdam-type optimizer under \cref{assump:assumption}. 
 The proposition below outlines two sufficient conditions that guarantee the monotonicity of EAdam and are applicable in both the Non-Adaptive and Adaptive updating modes (see \cref{tab:2modes}). 

\begin{proposition}\label{prop:eadam_monotonicity}
Let the objective function satisfy \cref{assump:assumption} and the starting point $\mathbf x^{(0)}\in\mathbb R^{n}$ be  such that $g_i^{(0)}\neq 0~\forall i$. Assume that $\beta_1=0$ and $k\in\mathbb N$ iteration steps of the EAdam optimizer (cf.~\cref{ag:adam} with \cref{eq:2mon_eadam}) have been executed with input parameters $\beta_2\in (0,1)$, $\eta,\varepsilon\in\mathbb R^+$ such that 
one of the following two conditions is satisfied
\begin{itemize}
\item[$(i)$] 
$\sum_{j=0}^{k}\beta_2^j\ge \max\big\{\tfrac{4\,G}{\sqrt{\varepsilon}},\tfrac{2}{\sqrt{1-\beta_2}}\big\}$, and $\eta\leq \tfrac{\sqrt{\varepsilon\,\sum_{j=0}^{k}\beta_2^j}}{2\,L}$;
\item[$(ii)$]  
$v_{\min}> 4\,\varepsilon$, $\sum_{j=0}^{k}\beta_2^j\ge \max\big\{\tfrac{4\,G}{\sqrt{v_{\min}}}, \tfrac{4\,\varepsilon}{(1-\beta_2)\,v_{\min}}\big\}$, and $\eta\leq \tfrac{\sqrt{v_{\min}}}{2\,L}$;
\end{itemize}
then, the random vector $\mathbf{x}^{(k)}$ satisfies $$\expt\,[\,f(\bx^{(k)})\,]\leq 
     \expt\,[\,f(\bx^{(k-1)})\,]+\,C_{E,1}\,\sigma+\,C_{E,2}\,\sigma^2,$$
     where, in case $(i)$
    \begin{equation}\label{eq:ce12_case1}
     C_{E,1}=\sqrt{\frac{n}{\beta_2\,\varepsilon}}\,\cdot\frac{\eta \,G}{\sum_{j=0}^{k-1}\beta_2^j},\qquad C_{E,2}=\frac{\eta}{\varepsilon\sum_{j=0}^{k-1}\beta_2^j}\left(\tfrac12\,L\,\eta+G\sqrt{\frac{1-\beta_2}{\beta_2}}\right),
     \end{equation}
     and in case $(ii)$
     \begin{equation}\label{eq:ce12_case2}
     C_{E,1}=\sqrt{\frac{n\,\varepsilon}{\beta_2}}\cdot\frac{\eta\,G}{v_{\min}},\qquad C_{E,2}=\frac{\eta}{v_{\min}}\left(\tfrac12\,L\,\eta+G\sqrt{\frac{1-\beta_2}{\beta_2}}\right).
     \end{equation}
\end{proposition}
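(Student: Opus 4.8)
The plan is to combine the $L$-smoothness descent inequality with the explicit closed form of the EAdam second moment, and to take expectations carefully because the adaptive step size is correlated with the current stochastic gradient. Since $\beta_1=0$, the first-moment recursion collapses to $\mathbf m^{(k)}=\bg^{(k-1)}$, so a single step reads $\bx^{(k)}=\bx^{(k-1)}-\bbeta^{(k)}\bg^{(k-1)}$ (all operations component-wise). Writing $\nabla f_i:=\nabla f(\bx^{(k-1)})_i$ and feeding this update into the descent inequality implied by the gradient-Lipschitz bound \cref{eq:Lineqf} gives
\begin{equation*}
f(\bx^{(k)})\le f(\bx^{(k-1)})-\sum_{i=1}^n \bbeta^{(k)}_i\,\nabla f_i\,g_i^{(k-1)}+\tfrac{L}{2}\sum_{i=1}^n (\bbeta^{(k)}_i)^2\,(g_i^{(k-1)})^2,
\end{equation*}
and the whole proof reduces to taking the expectation of the two sums on the right.

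\emph{Deterministic bounds on the step size.} First I would extract scalars $\alpha_{\min}\le\bbeta^{(k)}_i\le\alpha_{\max}$ holding for every $i$, from the closed form \cref{eq:2mon_eadam} of $\by^{(k)}$ and the uniform gradient bound \cref{eq:infnorm}. The upper bound $\alpha_{\max}$ comes from a lower bound on $y_i^{(k)}$, and this is exactly where the two cases of the statement split: in case $(i)$ one retains only the accumulated regularization $y_i^{(k)}\ge\varepsilon\sum_{j=0}^{k-1}\beta_2^j$ (the Non-Adaptive regime of \cref{tab:2modes}), whereas in case $(ii)$ one retains the gradient memory $y_i^{(k)}\ge v_{\min}$ from \cref{eq:vmin} (the Adaptive regime, which motivates assuming $v_{\min}>4\varepsilon$). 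The lower bound $\alpha_{\min}$ comes from upper-bounding $y_i^{(k)}$ via $(g_i^{(j)})^2\le G^2$ in \cref{eq:infnorm}. The auxiliary thresholds $\sum_{j=0}^{k}\beta_2^j\ge\max\{\cdots\}$ are the conditions that make these two-sided bounds (and the correlation estimate below) mutually consistent.

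\emph{Taking the expectation.} The key difficulty is that $\bbeta^{(k)}$ depends on $\bg^{(k-1)}$ through the current squared gradient inside $\by^{(k)}$, so the unbiasedness \cref{eq:expeq} cannot be applied directly to the linear term. Conditioning on the history up to $\bx^{(k-1)}$ and writing $g_i^{(k-1)}=\nabla f_i+e_i$ with $\expt[e_i]=0$ and $\expt[e_i^2]=\sigma_i^2$ (cf.~\cref{eq:simga_i}), I would split
\begin{equation*}
\expt\big[\bbeta^{(k)}_i\,\nabla f_i\,g_i^{(k-1)}\big]=\nabla f_i^2\,\expt[\bbeta^{(k)}_i]+\nabla f_i\,\expt[\bbeta^{(k)}_i\,e_i],
\end{equation*}
bounding the first summand below by $\alpha_{\min}\nabla f_i^2$ and controlling the correlated second summand through $\expt[|e_i|]\le\sigma_i$ (Jensen, as in \cref{eq:boundvertg}); the quadratic term is bounded by $\expt[(\bbeta^{(k)}_i)^2(g_i^{(k-1)})^2]\le\alpha_{\max}^2(\nabla f_i^2+\sigma_i^2)$. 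Summing over $i$, the net coefficient of $\nabla f_i^2$ is $-\alpha_{\min}+\tfrac L2\alpha_{\max}^2$, and the step-size restrictions $\eta\le\tfrac{\sqrt{\varepsilon\sum_{j=0}^{k}\beta_2^j}}{2L}$ and $\eta\le\tfrac{\sqrt{v_{\min}}}{2L}$ are precisely what force this coefficient to be nonpositive, so the $\nabla f_i^2$ terms may be dropped. What remains are the noise terms, bounded by $\sum_i\alpha_{\max}|\nabla f_i|\sigma_i\le\alpha_{\max}\sqrt n\,G\,\sigma$ (Cauchy–Schwarz with $|\nabla f_i|\le G$) and $\tfrac L2\alpha_{\max}^2\sum_i\sigma_i^2\le\tfrac L2\alpha_{\max}^2\sigma^2$ (cf.~\cref{eq:varineq}); substituting the case-dependent $\alpha_{\max}$ produces the constants in \cref{eq:ce12_case1,eq:ce12_case2}.

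\emph{Main obstacle.} The crux is the correlation term $\nabla f_i\,\expt[\bbeta^{(k)}_i\,e_i]$: a crude $\alpha_{\max}\sigma_i$ estimate loses the sharp constants, in particular the $\sqrt{(1-\beta_2)/\beta_2}$ factor appearing in $C_{E,2}$. Recovering it requires decoupling $\bbeta^{(k)}_i$ from $e_i$ — for instance by comparing $\bbeta^{(k)}_i$ with a surrogate step built from $\by^{(k)}$ with the current squared gradient removed (which is independent of $e_i$), and then quantifying the sensitivity of $\bbeta^{(k)}_i$ to that single entry. This second-order analysis of the correlation, together with the third-moment/variance bounds, is the delicate part that I expect to dominate the effort, whereas the descent-lemma setup and the step-size bounds are routine.
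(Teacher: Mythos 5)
There is a genuine gap, and it sits exactly in the step you declare routine. With uniform scalar bounds $\alpha_{\min}\le\bbeta^{(k)}_i\le\alpha_{\max}$, the coefficient multiplying $\Vert\nabla f(\bx^{(k-1)})\Vert^2$ in your bound is $-\alpha_{\min}+\tfrac{L}{2}\alpha_{\max}^2$ (the correlation term only adds further positive gradient-dependent contributions), and the step-size restriction yields only $\tfrac{L}{2}\alpha_{\max}^2\le\tfrac14\alpha_{\max}$, because the lower bound on $y_i^{(k)}$ that defines $\alpha_{\max}$ is the same one entering the restriction on $\eta$. Hence non-positivity of your coefficient requires $\alpha_{\max}\le 4\,\alpha_{\min}$, i.e.\ that the ratio of largest to smallest admissible step be $O(1)$, and the hypotheses do not enforce this. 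In case $(i)$, from \cref{eq:2mon_eadam} the bias-corrected second moment satisfies $\varepsilon/(1-\beta_2)\le \by^{(k)}/(1-\beta_2^{k})\le G^2+\varepsilon/(1-\beta_2)$, so $\alpha_{\max}/\alpha_{\min}\approx\sqrt{1+G^2(1-\beta_2)/\varepsilon}$; taking $\beta_2=0.999$, $\varepsilon=10^{-8}$, $G=0.02$, and $k$ large enough that $\sum_{j=0}^{k}\beta_2^j\ge 800=\max\{4G/\sqrt{\varepsilon},\,2/\sqrt{1-\beta_2}\}$, every hypothesis of case $(i)$ holds, yet the ratio is $\sqrt{41}\approx 6.4>4$, and at the admissible $\eta=\sqrt{\varepsilon\sum_{j=0}^{k}\beta_2^j}/(2L)$ one computes $-\alpha_{\min}+\tfrac{L}{2}\alpha_{\max}^2\approx(0.1-0.07)/L>0$. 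Case $(ii)$ fails more dramatically: its hypotheses allow $G=100\sqrt{v_{\min}}$ (e.g.\ $\beta_2=0.999$, $\sum_{j=0}^{k}\beta_2^j\ge 400$), so $\alpha_{\max}/\alpha_{\min}$ is at least about $G/\sqrt{v_{\min}}=100$ and $\tfrac{L}{2}\alpha_{\max}^2\approx 25\,\alpha_{\min}$. So your sentence ``the step-size restrictions \dots are precisely what force this coefficient to be nonpositive'' is false in your framework; your scheme would prove the proposition only under an extra assumption of the form $G\le c\,\sqrt{\varepsilon/(1-\beta_2)}$ (case $(i)$) or $G\le c\,\sqrt{v_{\min}}$ (case $(ii)$) for a moderate constant $c$, which the statement does not grant.

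The paper's proof avoids this by never collapsing the descent term to a scalar multiple of $\Vert\nabla f\Vert^2$: every gradient-squared contribution is kept as a multiple of the \emph{same} random weighted norm $\big\Vert\nabla f(\bx^{(k)})/\big(\sqrt{\beta_2\,\by^{(k)}/(1-\beta_2^{k+1})}+\varepsilon\big)^{1/2}\big\Vert^2$ (see \cref{eq:general_fk-f*} through \cref{eq:case1_eadam}). The second-order term is bounded by the scalar $\tfrac{L\eta}{2\sqrt{\varepsilon\sum_{j=0}^{k}\beta_2^j}}$ (resp.\ $\tfrac{L\eta}{2(\sqrt{v_{\min}}+\varepsilon)}$) times that same weighted norm plus noise terms, and the correlation term is handled by the surrogate-step sensitivity bound \cref{eq:ineq_eadam}, whose gradient part again carries the same weight. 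Since every $\nabla f_i^2$ appears with an identical (random, per-coordinate) weight, the sign question reduces to comparing scalar prefactors, which is exactly what the hypotheses on $\sum_{j=0}^{k}\beta_2^j$ and $\eta$ control; no bound on $\alpha_{\max}/\alpha_{\min}$ is ever needed. Note also that the decoupling you mention as an optional sharpening, and defer, is not optional: besides being needed (in this matched-weight form) for the sign of the descent coefficient, your crude bound $|\expt[\bbeta^{(k)}_i e_i]|\le\alpha_{\max}\sigma_i$ produces a $\sigma$-coefficient of order $\eta\,G\sqrt{n}/\sqrt{\varepsilon\sum_{j=0}^{k}\beta_2^j}$, which exceeds the claimed $C_{E,1}$ in \cref{eq:ce12_case1} by a factor of about $\sqrt{\beta_2\sum_{j=0}^{k}\beta_2^j}$. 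The matched weights and the surrogate decoupling are the core of the proof, not finishing touches.
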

\begin{proof}
We follow the main line of the proof for \cite[Thm.~1]{zaheer2018adaptive}. On the basis of the property that $\expt\,[\,\expt\,[\,f(\bx^{(k+1)})- f(\bx^{(k)})\ \big\vert \ \bx^{(k)},\, \xi^{(k)}\,] \ \big\vert \ \bx^{(k)}\,]=\expt\,[\,f(\bx^{(k+1)})\ \big\vert \ \bx^{(k)}\,]- f(\bx^{(k)})$, the Lipschitz continuous gradient property and \cref{ag:adam} (cf.~\cref{eq:2mon_adam}), we have that 
   \begin{align}
\expt\,&[\,f(\bx^{(k+1)})\ \big\vert \ \bx^{(k)}\,]- f(\bx^{(k)})\nonumber\\&\le-\eta\,\bigg\Vert\tfrac{\nabla f(\bx^{(k)})}{\big(\sqrt{\tfrac{1-\beta_2^k}{1-\beta_2^{k+1}}\,\beta_2\,\,\wb{\by}^{(k)}}+\varepsilon\big)^{1/2}}\bigg\Vert^2+\tfrac{L\eta^2}{2}\,\expt\,\Big[\,\Big\Vert\tfrac{\bg^{(k)}}{\sqrt{\wb{\by}^{(k+1)}}+\varepsilon}\Big\Vert^{2}\ \big\vert \ \bx^{(k)}\,\Big]\nonumber\\
    &\qquad+\eta\,\vert\nabla f(\bx^{(k)})\vert^T\,\Big\vert\expt\,\Big[\,\tfrac{\bg^{(k)}}{\sqrt{\tfrac{1-\beta_2^k}{1-\beta_2^{k+1}}\,\beta_2\,\wb{\by}^{(k)}}+\varepsilon}-\tfrac{\bg^{(k)}}{\sqrt{\wb{\by}^{(k+1)}}+\varepsilon}\ \big\vert \ \bx^{(k)}\,\Big]\Big\vert.\label{eq:general_fk-f*}
\end{align}
Additionally, for $k \ge1$, we have
\begin{align}
    &\tfrac{\bg^{(k)}}{\sqrt{\tfrac{1-\beta_2^k}{1-\beta_2^{k+1}}\,\beta_2\,\wb{\by}^{(k)}}+\varepsilon}-\tfrac{\bg^{(k)}}{\sqrt{\wb{\by}^{(k+1)}}+\varepsilon}\nonumber\\
    &\le \tfrac{\vert \bg^{(k)}\vert}{\big(\sqrt{\wb{\by}^{(k+1)}}+\varepsilon\big) \left(\sqrt{\tfrac{\beta_2}{1-\beta_2^{k+1}}\,\by^{(k)}}+\varepsilon\right)\sqrt{1-\beta_2^{k+1}}}\cdot\tfrac{(1-\beta_2)\,(\bg^{(k)})^{ 2}+\varepsilon}{\sqrt{\by^{(k+1)}}+\sqrt{\beta_2\,\by^{(k)}}}\nonumber\\
    &= \tfrac{\vert \bg^{(k)}\vert}{\big(\sqrt{\wb{\by}^{(k+1)}}+\varepsilon\big) \left(\sqrt{\tfrac{\beta_2}{1-\beta_2^{k+1}}\,\by^{(k)}}+\varepsilon\right)\sqrt{1-\beta_2^{k+1}}}\cdot   \tfrac{(1-\beta_2)\,(\bg^{(k)})^{ 2}+\varepsilon}{(\beta_2\,\by^{(k)}+(1-\beta_2)\,(\bg^{(k)})^{ 2}+\varepsilon)^{1/2}+\sqrt{\beta_2\,\by^{(k)}}}\nonumber\\
    &\leq \tfrac{\sqrt{1-\beta_2}\,(\bg^{(k)})^{ 2}+\sqrt{\varepsilon}\,\vert \bg^{(k)}\vert}{\big(\sqrt{\wb{\by}^{(k+1)}}+\varepsilon\big) \left(\sqrt{\tfrac{\beta_2}{1-\beta_2^{k+1}}\,\by^{(k)}}+\varepsilon\right)\sqrt{1-\beta_2^{k+1}}}.
     \label{eq:ineq_eadam}
    \end{align}
Substituting it into \cref{eq:general_fk-f*}, we achieve the following upper bound:
\begin{align}
    \expt\,&[\,f(\bx^{(k+1)})\ \big\vert \ \bx^{(k)}\,]-f(\bx^{(k)})\nonumber\\&\le  -\eta\,\bigg\Vert\tfrac{\nabla f(\bx^{(k)})}{\big(\sqrt{\tfrac{\beta_2\,\by^{(k)}}{1-\beta_2^{k+1}}}+\varepsilon\big)^{1/2}}\bigg\Vert^2 +\tfrac{L\eta^2}{2}\,\expt\,\Big[\,\Big\Vert\tfrac{\bg^{(k)}}{\sqrt{\wb{\by}^{(k+1)}}+\varepsilon}\Big\Vert^{2}\ \big\vert \ \bx^{(k)}\,\Big]\nonumber\\
    &\qquad+\eta\,\big\vert\nabla f(\bx^{(k)}) \big\vert^T\,\expt\,\bigg[\,\tfrac{\sqrt{1-\beta_2}\,(\bg^{(k)})^{ 2}+\sqrt{\varepsilon}\,\vert \bg^{(k)}\vert}{\big(\sqrt{\wb{\by}^{(k+1)}}+\varepsilon\big) \left(\sqrt{\tfrac{\beta_2}{1-\beta_2^{k+1}}\,\by^{(k)}}+\varepsilon\right)\sqrt{1-\beta_2^{k+1}}}\,\bigg].
    \label{eq:fk+1_eadam}
    \end{align}
Now, we divide the proof according to the assumptions $(i)$ and $(ii)$.   

\noindent \fbox{$(i)$} Note that \cref{eq:2mon_eadam} implies $y_i^{(k)}\geq \varepsilon\,\sum_{j=0}^{k-1}\beta_2^j$ for all $k$, which yields
\begin{align}
    \expt\,&[\,f(\bx^{(k+1)}) \ \big\vert \ \bx^{(k)}\,]-f(\bx^{(k)})\nonumber\\
    &\le -\eta\,\Bigg\Vert\tfrac{\nabla f(\bx^{(k)})}{\big(\sqrt{\tfrac{\beta_2\,\by^{(k)}}{1-\beta_2^{k+1}}}+\varepsilon\big)^{1/2}}\Bigg\Vert^2 +\tfrac{L\eta^2}{2\,\sqrt{\varepsilon\,\sum_{j=0}^{k}\beta_2^j}}\,\expt\,\Big[\,\Big\Vert\tfrac{\bg^{(k)}}{(\sqrt{\wb{\by}^{(k+1)}}+\varepsilon)^{1/2}}\Big\Vert^{2}\ \big\vert \ \bx^{(k)}\,\Big]\nonumber\\
    & \qquad+\tfrac{\eta\,\sqrt{1-\beta_2}\,G}{\sqrt{\varepsilon\,\sum_{j=0}^{k}\beta_2^j}\,\sqrt{1-\beta_2^{k+1}}}\,\expt\,\bigg[\,\tfrac{(\bg^{(k)})^{ 2}}{\sqrt{\tfrac{\beta_2}{1-\beta_2^{k+1}}\,\by^{(k)}}+\varepsilon} \ \bigg\vert \ \bx^{(k)}\,\bigg]\nonumber\\
    & \qquad+\tfrac{\eta\sqrt{\varepsilon}}{\sqrt{\varepsilon\,\sum_{j=0}^{k}\beta_2^j}\,\sqrt{1-\beta_2^{k+1}}}\,\big\vert\nabla f(\bx^{(k)}) \big\vert^T\,\expt\,\bigg[\,\tfrac{\vert \bg^{(k)}\vert}{ \sqrt{\tfrac{\beta_2}{1-\beta_2^{k+1}}\,\by^{(k)}}+\varepsilon}\,\bigg]\nonumber.
\end{align}
In view of the properties  $\wb{\by}^{(k+1)}=\tfrac{\by^{(k+1)}}{1-\beta_2^{k+1}}\ge \tfrac{\beta_2\,\by^{(k)}}{1-\beta_2^{k+1}}$, \cref{eq:simga_i}, and \cref{eq:boundvertg}, we have that
\begin{align}
  \expt\,[&\,f(\bx^{(k+1)}) \ \big\vert \ \bx^{(k)}\,]-f(\bx^{(k)}) \nonumber\\  &-\eta\,\Bigg\Vert\tfrac{\nabla f(\bx^{(k)})}{\big(\sqrt{\tfrac{\beta_2\,\by^{(k)}}{1-\beta_2^{k+1}}}+\varepsilon\big)^{1/2}}\Bigg\Vert^2 +\tfrac{L\eta^2}{2\,\sqrt{\varepsilon\,\sum_{j=0}^{k}\beta_2^j}}\Big(\,\Bigg\Vert\tfrac{\nabla f(\bx^{(k)})}{\big(\sqrt{\tfrac{\beta_2\,\by^{(k)}}{1-\beta_2^{k+1}}}+\varepsilon\big)^{1/2}}\Bigg\Vert^2+\sum_{i=1}^{d}\tfrac{\sigma_i^2}{\sqrt{\,y^{(k+1)}_i}+\varepsilon}\Big)\nonumber\\
    &\qquad+\tfrac{\eta\,\sqrt{1-\beta_2}\,G}{\sqrt{\varepsilon\,\sum_{j=0}^{k}\beta_2^j}\,\sqrt{1-\beta_2^{k+1}}}\,\,\Big(\,\Bigg\Vert\tfrac{\nabla f(\bx^{(k)})}{(\sqrt{\tfrac{\beta_2\,\by^{(k)}}{1-\beta_2^{k+1}}}+\varepsilon)^{1/2}}\Bigg\Vert^2+\sum_{i=1}^{d}\tfrac{\sigma_i^2}{\sqrt{\tfrac{\beta_2}{1-\beta_2^{k+1}}\,y_i^{(k)}}+\varepsilon}\Big)\nonumber\\
    &\qquad+\tfrac{\eta\,\sqrt{\varepsilon}}{\sqrt{\varepsilon\,\sum_{j=0}^{k}\beta_2^j}\,\sqrt{1-\beta_2^{k+1}}}\,\Big(\,\Bigg\Vert\tfrac{\nabla f(\bx^{(k)})}{\big(\sqrt{\tfrac{\beta_2\,\by^{(k)}}{1-\beta_2^{k+1}}}+\varepsilon\big)^{1/2}}\Bigg\Vert^2+\sum_{i=1}^{d}\tfrac{G\,\sigma_i}{ \sqrt{\tfrac{\beta_2}{1-\beta_2^{k+1}}\,y_i^{(k)}}+\varepsilon}\Big)\nonumber\nonumber.
\end{align}
Furthermore, \cref{eq:simga_i} yields $\sum_{i=1}^n \sigma_i\leq \sqrt{n}\,\sigma$. Substituting it into the equation above and considering the properties $\sqrt{\beta_2\,y^{(k)}_i}\geq\sqrt{\beta_2\,\varepsilon\,\sum_{j=0}^{k-1}\beta_2^j}$ and $1-\beta_2^{k+1}=(1-\beta_2)\sum_{j=0}^{k}\beta_2^j$, it follows that
{\small
\begin{align}
    \expt\,[\,f(\bx^{(k+1)})\ \big\vert \ \bx^{(k)}\,]-
    f(\bx^{(k)})
    &\leq -\eta\,\Bigg(1-\tfrac{2\,\Big(\tfrac{G}{\,\sqrt{\sum_{j=0}^{k}\beta_2^j}}+\sqrt{\tfrac{\varepsilon}{1-\beta_2^{k+1}}}\Big)+L\,\eta}{2\,\sqrt{\varepsilon\,\sum_{j=0}^{k}\beta_2^j}}\Bigg)\Bigg\Vert\tfrac{\nabla f(\bx^{(k)})}{\big(\sqrt{\tfrac{\beta_2\,\by^{(k)}}{1-\beta_2^{k+1}}}+\varepsilon\big)^{1/2}}\Bigg\Vert^2\nonumber\\
    &\qquad+\Big(\tfrac{L\eta^2}{2\,\varepsilon\,\sum_{j=0}^{k-1}\beta_2^j}+\tfrac{\eta\,G\,\sqrt{1-\beta_2}}{\sqrt{\beta_2}\,\varepsilon\,\sum_{j=0}^{k-1}\beta_2^j} \Big)\,\sigma^2+\tfrac{\eta\,G\,\sqrt{n}}{\sqrt{\beta_2\,\varepsilon\,}\sum_{j=0}^{k-1}\beta_2^j\,}\,\sigma.
    \label{eq:cond1_fxk+1_eadam}
\end{align}}\noindent
The inequalities $\sum_{j=0}^{k}\beta_2^j\ge\tfrac{4\,G}{\sqrt{\varepsilon}}$ and $\eta\le \tfrac{\sqrt{\varepsilon\,\sum_{j=0}^{k}\beta_2^j}}{2\,L}$, imply that
\begin{align}
    \expt\,[\,f(\bx^{(k+1)})\ \big\vert \ \bx^{(k)}\,]-
    f(\bx^{(k)})
    &\leq -\eta\,\Big(\tfrac{1}{2}-\tfrac{1}{\sqrt{\sum_{j=0}^{k}\beta_2^j}\,\sqrt{1-\beta_2^{k+1}}}\Big)\Bigg\Vert\tfrac{\nabla f(\bx^{(k)})}{\big(\sqrt{\tfrac{\beta_2\,\by^{(k)}}{1-\beta_2^{k+1}}}+\varepsilon\big)^{1/2}}\Bigg\Vert^2\nonumber\\ 
    &\qquad+\underbrace{\tfrac{\eta\,G\,\sqrt{n}}{\sqrt{\beta_2\,\varepsilon\,}\sum_{j=0}^{k-1}\beta_2^j\,}}_{C_{E,1}}\,\sigma+\underbrace{\tfrac{\eta}{\varepsilon\,\sum_{j=0}^{k-1}\beta_2^j}\Big(\tfrac{L\,\eta}{2}+\tfrac{G\,\sqrt{1-\beta_2}}{\sqrt{\beta_2}} \Big)}_{C_{E,2}}\,\sigma^2.\label{eq:case1_eadam}
\end{align}
Finally, $\sum_{j=0}^k\beta_2^j\ge\tfrac{2}{\sqrt{1-\beta_2}}$ and $1-\beta_2^{k+1}=(1-\beta_2)\sum_{j=0}^{k}\beta_2^j$ imply that $\tfrac{1}{2}\ge\tfrac{1}{\sqrt{\sum_{j=0}^{k}\beta_2^j}\,\sqrt{1-\beta_2^{k+1}}}$, which gives the claim.

\noindent \fbox{$(ii)$} In view of $y_i^{(k)} \ge v_{\min}$ and \cref{eq:fk+1_eadam}, we have that  $\expt\,[\,f(\bx^{(k+1)}) \ \big\vert \ \bx^{(k)}\,]-f(\bx^{(k)})$ is bounded from above by
\begin{align}
& -\eta\,\Bigg\Vert\tfrac{\nabla f(\bx^{(k)})}{\big(\sqrt{\tfrac{\beta_2\,\by^{(k)}}{1-\beta_2^{k+1}}}+\varepsilon\big)^{1/2}}\Bigg\Vert^2 +\tfrac{L\eta^2}{2\,(\sqrt{v_{\min}}+\varepsilon)}\,\expt\,\Big[\,\Big\Vert\tfrac{\bg^{(k)}}{(\sqrt{\wb{\by}^{(k+1)}}+\varepsilon)^{1/2}}\Big\Vert^{2}\ \big\vert \ \bx^{(k)}\,\Big]\nonumber\\
    &\qquad+\tfrac{\eta\,\sqrt{1-\beta_2}\,G}{(\sqrt{v_{\min}}+\varepsilon)\,\sqrt{1-\beta_2^{k+1}}}\,\expt\,\bigg[\,\tfrac{(\bg^{(k)})^{ 2}}{\sqrt{\tfrac{\beta_2}{1-\beta_2^{k+1}}\,\by^{(k)}}+\varepsilon} \ \bigg\vert \ \bx^{(k)}\,\bigg]\nonumber\\
    &\qquad+\tfrac{\eta}{(\sqrt{v_{\min}}+\varepsilon)\,\sqrt{1-\beta_2^{k+1}}}\,\big\vert\nabla f(\bx^{(k)}) \big\vert^T\,\expt\,\bigg[\,\tfrac{\sqrt{\varepsilon}\,\vert \bg^{(k)}\vert}{ \sqrt{\tfrac{\beta_2}{1-\beta_2^{k+1}}\,\by^{(k)}}+\varepsilon}\,\bigg]\nonumber.
\end{align}
Following an analogous argument to the one used to get \cref{eq:cond1_fxk+1_eadam}, we obtain
    \begin{align}
    \expt\,[\,&f(\bx^{(k+1)}) \ \big\vert \ \bx^{(k)}\,]-f(\bx^{(k)})\nonumber\\
    &\le-\eta\,\Big(1-\tfrac{\sqrt{1-\beta_2}\,G+\sqrt{\varepsilon}}{(\sqrt{v_{\min}}+\varepsilon)\,\sqrt{1-\beta_2^{k+1}}}-\tfrac{L\eta}{2\,(\sqrt{v_{\min}}+\varepsilon)}\Big)\,\Big\Vert\tfrac{\nabla f(\bx^{(k)})}{\big(\sqrt{\tfrac{\beta_2\,\by^{(k)}}{1-\beta_2^{k+1}}}+\varepsilon\big)^{1/2}}\Big\Vert^2 \nonumber\\
    &\qquad+\Big(\tfrac{L\eta^2}{2\,(\sqrt{v_{\min}}+\varepsilon)^2}+\tfrac{\eta\,\sqrt{1-\beta_2}\,G}{\sqrt{\beta_2}\,(\sqrt{v_{\min}}+\varepsilon)^2\,}\Big)\,\sigma^2+\tfrac{\eta\,\sqrt{\varepsilon}\,G\,\sqrt{n}}{\sqrt{\beta_2}\,(\sqrt{v_{\min}}+\varepsilon)^2\,}\,\sigma\nonumber.
    \end{align}
The inequalities $\sum_{j=0}^{k} \beta_2^j\ge\tfrac{4\,G}{\sqrt{v_{\min}}}$ and $\eta\le \tfrac{\sqrt{v_{\min}}}{2\,L}$ imply that 
\begin{align}
    \expt\,[\,f(\bx^{(k+1)}) \ \big\vert \ \bx^{(k)}\,]-f(\bx^{(k)})&\le-\eta\,\Big(\tfrac{1}{2}-\tfrac{\sqrt{\varepsilon}}{\sqrt{(1-\beta_2^{k+1})\,v_{\min}}}\Big)\,\Big\Vert\tfrac{\nabla f(\bx^{(k)})}{\big(\sqrt{\tfrac{\beta_2\,\by^{(k)}}{1-\beta_2^{k+1}}}+\varepsilon\big)^{1/2}}\Big\Vert^2 \nonumber\\
    &\qquad+\underbrace{\tfrac{\eta\,\sqrt{\varepsilon}\,G\,\sqrt{n}}{\sqrt{\beta_2}\,v_{\min}}}_{C_{E,1}}\,\sigma+\underbrace{\tfrac{\eta}{v_{\min}}\Big(\tfrac{L\,\eta}{2}+\tfrac{\sqrt{1-\beta_2}\,G}{\sqrt{\beta_2}}\Big)}_{C_{E,2}}\,\sigma^2.\label{eq:case2_eadam}
    \end{align}
Therefore, when the properties $v_{\min}> 4\,\varepsilon$ and $\sum_{j=0}^k\beta_2^j\ge\tfrac{4\,\varepsilon}{(1-\beta_2)\,v_{\min}}$ hold we get the claim.
\end{proof}
\cref{prop:eadam_monotonicity} shows two sufficient conditions that ensure  monotonicity  until an  $\mathcal O(\sigma)$- or $\mathcal O(\sigma^2)$-neighborhood. The explicit expressions of $C_{E,1}$ and $C_{E,2}$ suggest that as $\eta$ decreases, the $\sigma$-dependent terms becomes negligible. 
When the EAdam optimizer employs the Non-Adaptive Mode, it is likely that condition $(ii)$ of \cref{prop:eadam_monotonicity} is not verified; in this scenario we can still ensure monotonicity when the relations $\sum_{j=0}^{k}\beta_2^j\ge \max\{\tfrac{4\,G}{\sqrt{\varepsilon}},\tfrac{2}{\sqrt{1-\beta_2}}\}$, and $\eta\leq \tfrac{\sqrt{\varepsilon\,\sum_{j=0}^{k}\beta_2^j}}{2\,L}$ hold true. In the concrete case $\beta_2=0.999$ (the recommended value in Adam), \cref{prop:eadam_monotonicity} says that to achieve monotonicity, $k$ should be chosen at least larger than $65$, so that $\sum_{j=0}^k\beta_2^j\ge\tfrac{2}{\sqrt{1-\beta_2}}$ holds. However, the condition $\sum_{j=0}^{k}\beta_2^j\ge\tfrac{4\,G}{\sqrt{\varepsilon}}$ may yield a much larger lower bound of $k$, particularly when a small $\varepsilon$ and/or a large $G$ are employed. 
On the other hand, even the necessity for a $k$ larger than a few thousands to achieve monotonicity, is not detrimental in many machine learning tasks, where mini-batch sizes are applied in the training. As an example, when training neural networks with a batch size of $64$ on a training dataset containing 60,000 samples, the EAdam-type optimizer updates the iterates approximately $938$ times per training epoch.
Additionally, a larger value of $k$ yields a larger upper bound of $\eta$, i.e., $\tfrac{\sqrt{\varepsilon\,\sum_{j=0}^{k}\beta_2^j}}{2\,L}$, which indicates that a larger $\eta$ is allowed.
When the EAdam optimizer uses the Adaptive Mode, i.e., $(1-\beta_2)\sum_{j=0}^{k} \beta_2^{k-j} (\bg^{(j)})^{2} >\varepsilon\sum_{j=0}^{k} \beta_2^{j}$, we have a higher chance to match the condition $v_{\min}\ge 4\,\varepsilon$ and the other relations in case $(ii)$ of \cref{prop:eadam_monotonicity}. If $v_{\min}$ is not close to zero, in view of the usual choices for $\varepsilon$, using case $(ii)$ has the potential to ensure monotonicity with significantly smaller values for $k$ than using case $(i)$. 

Next, we analyze the convergence rate of the EAdam-type optimizers under the PL condition. We show that by decreasing the value of $\eta$, a more accurate approximation of the optimal value can be obtained. Moreover, we demonstrate that EAdam exhibits a linear convergence rate, applicable for both updating strategies presented in \cref{tab:2modes}. 

\begin{proposition}\label{prop:eadam_convergencerate}
Under the same assumptions of \cref{prop:eadam_monotonicity}, if one of the following two conditions is satisfied
\begin{itemize}
\item[$(i)$] 
$\sum_{j=0}^{k}\beta_2^j\ge \tfrac{4\,G}{\sqrt{\varepsilon}}$, $\eta\leq \min\big\{\tfrac{\sqrt{\varepsilon\,\sum_{j=0}^{k}\beta_2^j}}{2\,L},\tfrac{\sqrt{\beta_2}\,G}{\mu}\big\}$
and $k_0\in\mathbb N$ is such that $k_0>\log_{\beta_2}(1-2\,\sqrt{1-\beta_2})$;
\item[$(ii)$] 
$v_{\min}> 4\,\varepsilon$, $\sum_{j=0}^{k}\beta_2^j\ge \tfrac{4\,G}{\sqrt{v_{\min}}}$, $\eta\leq \min\big\{\tfrac{\sqrt{v_{\min}}}{2\,L},\tfrac{\sqrt{\beta_2}\,G}{\mu}\big\}$, 
and $k_0\in\mathbb N$ is such that $k_0>\log_{\beta_2}(1-\tfrac{4\,\varepsilon}{v_{\min}})$;
\end{itemize}
then for any $k>k_0$, the random vector $\mathbf{x}^{(k)}$ satisfies {\small \begin{align}\label{eq:linearconvergence_eadam}
\expt\,[\,f(\bx^{(k)})-f^*\,]&\leq (1-2\,\mu\,C_{E})^{k-k_0}\,\expt\,[\,f(\bx^{(k_0)})-f^*\,]
 +\frac{C_{E,1}}{2\,\mu\,C_{E}}\,\sigma+\frac{C_{E,2}}{2\,\mu\,C_{E}}\,\sigma^2,
\end{align}}\noindent
 where in case $(i)$ we have that $C_{E,1}$ and $C_{E,2}$ satisfy \cref{eq:ce12_case1} by replacing $k$ with $k_0$ and
\begin{align*}
C_{E}&=\Big(\tfrac{1}{2}-\tfrac{1}{\sqrt{(1-\beta_2^{k_0+1})\sum_{j=0}^{k_0}\beta_2^j}}\Big)\,\tfrac{\eta}{\sqrt{\beta_2}\,\big(G+\sqrt{\tfrac{\varepsilon}{1-\beta_2}}\big)},
\end{align*}
and in case $(ii)$ we have that $C_{E,1}$ and $C_{E,2}$ satisfy \cref{eq:ce12_case2} and
\begin{align*}
C_{E}&=\Big(\tfrac{1}{2}-\sqrt{\tfrac{\varepsilon}{v_{\min}(1-\beta_2^{k_0+1})}}\Big)\,\tfrac{\eta}{\sqrt{\beta_2}\,\big(G+\sqrt{\tfrac{\varepsilon}{1-\beta_2}}\big)},
\end{align*}
and $0<2\,\mu\,C_{E}<1$ for both cases. 
\end{proposition}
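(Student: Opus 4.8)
The plan is to convert the one-step descent inequalities already derived inside the proof of \cref{prop:eadam_monotonicity} into a contraction recursion for the expected optimality gap $\expt[f(\bx^{(k)})-f^*]$, and then to unroll that recursion geometrically. Concretely, I would take as starting point \cref{eq:case1_eadam} in case $(i)$ and \cref{eq:case2_eadam} in case $(ii)$; each of these bounds $\expt[f(\bx^{(k+1)})\mid\bx^{(k)}]-f(\bx^{(k)})$ from above by a \emph{negative} multiple of the weighted gradient term $\big\Vert \nabla f(\bx^{(k)})/(\sqrt{\beta_2\by^{(k)}/(1-\beta_2^{k+1})}+\varepsilon)^{1/2}\big\Vert^2$ plus the already-identified $C_{E,1}\sigma$ and $C_{E,2}\sigma^2$ contributions. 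The whole point is now to relate that weighted gradient term back to $f(\bx^{(k)})-f^*$.

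The key step is to lower bound the weighted gradient term by a constant multiple of $\Vert\nabla f(\bx^{(k)})\Vert^2$, which I would do by upper bounding, \emph{uniformly in the coordinate} $i$ \emph{and in} $k$, the denominator $\sqrt{\beta_2 y_i^{(k)}/(1-\beta_2^{k+1})}+\varepsilon$. Using the closed form \cref{eq:2mon_eadam} together with $\Vert\bg\Vert_\infty\le G$ from \cref{eq:infnorm}, one finds $\beta_2 y_i^{(k)}\le \beta_2(1-\beta_2^{k})\big(G^2+\tfrac{\varepsilon}{1-\beta_2}\big)$, and since $\tfrac{\beta_2(1-\beta_2^{k})}{1-\beta_2^{k+1}}\le \beta_2$ this gives $\sqrt{\beta_2 y_i^{(k)}/(1-\beta_2^{k+1})}\le \sqrt{\beta_2}\big(G+\sqrt{\varepsilon/(1-\beta_2)}\big)$. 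Consequently the weighted gradient term is at least $\Vert\nabla f(\bx^{(k)})\Vert^2$ divided by this constant, and the PL inequality \cref{eq:PLineq}, $\Vert\nabla f(\bx^{(k)})\Vert^2\ge 2\mu(f(\bx^{(k)})-f^*)$, turns the descent bound into $\expt[f(\bx^{(k+1)})-f^*\mid\bx^{(k)}]\le (1-2\mu C_E)(f(\bx^{(k)})-f^*)+C_{E,1}\sigma+C_{E,2}\sigma^2$, with $C_E$ equal to the per-step coefficient of \cref{eq:case1_eadam} (resp.\ \cref{eq:case2_eadam}) times the reciprocal of the above constant, exactly as stated.

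From here I would take total expectations and iterate the recursion from $k_0$ to $k$: the homogeneous part produces the factor $(1-2\mu C_E)^{k-k_0}$ in front of $\expt[f(\bx^{(k_0)})-f^*]$, while the inhomogeneous part produces $\big(\sum_{i=0}^{k-k_0-1}(1-2\mu C_E)^i\big)(C_{E,1}\sigma+C_{E,2}\sigma^2)$, which I bound by the full geometric series $1/(2\mu C_E)$ to obtain \cref{eq:linearconvergence_eadam}. Two monotonicity remarks make the unrolling legitimate with constants frozen at $k_0$: the per-step coefficient $\tfrac12-(\cdots)$ in \cref{eq:case1_eadam}/\cref{eq:case2_eadam} is increasing in $k$ (both $1-\beta_2^{k+1}$ and $\sum_{j}\beta_2^j$ grow), so its value at $k_0$ is a uniform lower bound for all $k>k_0$; symmetrically $C_{E,1},C_{E,2}$ of \cref{eq:ce12_case1} decrease in $k$, so their $k_0$-values dominate. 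Finally, the stated threshold on $k_0$ is precisely what forces this coefficient to be strictly positive (e.g.\ in case $(i)$, $\sqrt{(1-\beta_2^{k_0+1})\sum_{j=0}^{k_0}\beta_2^j}>2$ reduces to $\beta_2^{k_0+1}<1-2\sqrt{1-\beta_2}$), hence $C_E>0$, while $\eta\le \sqrt{\beta_2}\,G/\mu$ guarantees $2\mu C_E<1$, so $(1-2\mu C_E)\in(0,1)$ and the series converges.

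The main obstacle I anticipate is the uniform denominator bound in the second paragraph: one must carefully use \emph{both} the gradient bound \cref{eq:infnorm} and the exact $k$-dependence of $\beta_2 y_i^{(k)}/(1-\beta_2^{k+1})$ so that the bounding constant is independent of $i$ and $k$. If instead it retained any $k$-dependence, the contraction factor $1-2\mu C_E$ would vary across iterations and the clean geometric telescoping leading to \cref{eq:linearconvergence_eadam} would break down; everything after this bound (PL substitution, unrolling, geometric-series estimate, and the $k_0$-threshold bookkeeping) is routine.
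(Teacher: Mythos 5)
Your proposal is correct and follows essentially the same route as the paper's own proof: it starts from the one-step bounds \cref{eq:case1_eadam} and \cref{eq:case2_eadam}, derives the identical uniform estimate $\sqrt{\tfrac{\beta_2}{1-\beta_2^{k+1}}\,y_i^{(k)}}\le \sqrt{\beta_2}\,\big(G+\sqrt{\tfrac{\varepsilon}{1-\beta_2}}\big)$ from \cref{eq:2mon_eadam} and \cref{eq:infnorm}, applies the PL inequality \cref{eq:PLineq}, freezes the constants at $k_0$ via the same monotonicity observations, and unrolls the recursion with the full geometric-series bound $\tfrac{1}{2\mu C_E}$. The only cosmetic difference is in verifying $2\mu C_E<1$: your direct use of $\mu\eta\le\sqrt{\beta_2}\,G$ together with $\tfrac12-(\cdots)<\tfrac12$ is in fact cleaner than the paper's computation, which routes through a ratio identity to obtain the bound exactly equal to $1$.
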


\begin{proof}
First note that the conditions $(i)$ and $(ii)$ imply the corresponding claims of \cref{prop:eadam_monotonicity}. On the basis of \cref{eq:2mon_eadam}, we have that 
{\small
\begin{align*}
        \tfrac{1}{1-\beta_2^{k+1}}\,\by^{(k)}\leq \tfrac{1}{1-\beta_2^{k}}\,\by^{(k)}=\tfrac{(1-\beta_2)}{1-\beta_2^k}\sum_{j=0}^{k-1} \beta_2^{k-1-j} \big((\bg^{(j)})^{2} +\tfrac{\varepsilon}{1-\beta_2}\big)\le\tfrac{(1-\beta_2)}{1-\beta_2^k}\sum_{j=0}^{k-1} \beta_2^{j} \big(G^2 +\tfrac{\varepsilon}{1-\beta_2}\big).
    \end{align*} }
    The latter and the property $(1-\beta_2)\sum_{j=0}^{k-1} \beta_2^{j}=1-\beta_2^k$ give the chain of inequalities $\sqrt{\tfrac{\beta_2}{1-\beta_2^{k+1}}\,y_i^{(k)}}\leq \sqrt{\beta_2\,(G^2+\tfrac{\varepsilon}{1-\beta_2})}\le \sqrt{\beta_2}\,(G+\sqrt{\tfrac{\varepsilon}{1-\beta_2}})$. 
    
\noindent\fbox{$(i)$} By applying the previous inequality to \cref{eq:case1_eadam} and taking the expectation on both sides, we have that 
\begin{align}
    \expt\,[&\,f(\bx^{(k+1)})\,]-
    \expt\,[\,f(\bx^{(k)})\,]\nonumber\\
    &\leq -\underbrace{\eta\,\Big(\tfrac{1}{2}-\tfrac{1}{\sqrt{\sum_{j=0}^{k}\beta_2^j}\,\sqrt{1-\beta_2^{k+1}}}\Big)\,\tfrac{1}{\sqrt{\beta_2}\,(G+\sqrt{\tfrac{\varepsilon}{1-\beta_2}})}}_{C_{E}(k)}\expt\,[\,\Vert\nabla f(\bx^{(k)})\Vert^2\,]\nonumber\\   &\qquad+\underbrace{\tfrac{\eta\,G\,\sqrt{n}}{\sqrt{\beta_2\,\varepsilon\,}\sum_{j=0}^{k-1}\beta_2^j\,}}_{C_{E,1}(k)}\,\sigma+\underbrace{\tfrac{\eta}{\varepsilon\,\sum_{j=0}^{k-1}\beta_2^j}\Big(\tfrac{L\,\eta}{2}+\tfrac{G\,\sqrt{1-\beta_2}}{\sqrt{\beta_2}} \Big)}_{C_{E,2}(k)}\sigma^2. \nonumber
\end{align}
Property \cref{eq:PLineq} yields $\expt\,[\,\Vert \nabla f(\bx^{(k)})\Vert^2\,]\geq 2\mu \,\expt\,[\,f(\bx^{(k)})-f^*\,]$. Additionally, the value of $C_{E}(k)$ increases as $k$ increases, while $C_{E,1}(k)$ and $C_{E,2}(k)$ decrease as $k$ increases. For a $k_0<k$, it follows that
{\small
\begin{align}
    \expt\,[\,f(\bx^{(k+1)})-f^*\,]\leq (1-2\,\mu\,C_{E}(k_0))\,\expt\,[\,f(\bx^{(k)})-f^*\,]+C_{E,1}(k_0)\,\sigma+ C_{E,2}(k_0)\,\sigma^2.\nonumber
\end{align}}\noindent 
The property $\log_{\beta_2}(1-2\,\sqrt{1-\beta_2})\le k_0$ implies $\sqrt{(1-\beta_2^{k_0+1})\,\sum_{j=0}^{k_0}\beta_2^j}-2\ge0$. Since $\mu\le\tfrac{\sqrt{\beta_2}\,G}{\eta}$, we have that 
\begin{align}
    \mu\,\eta&\le \tfrac{\sqrt{\beta_2}\,G\,\sqrt{(1-\beta_2^{k_0+1})\,\sum_{j=0}^{k_0}\beta_2^j}}{\sqrt{(1-\beta_2^{k_0+1})\,\sum_{j=0}^{k_0}\beta_2^j}-2} = \tfrac{\sqrt{\beta_2}\,G\,\sqrt{(1-\beta_2)\,(\sum_{j=0}^{k_0}\beta_2^j)^2}}{\sqrt{(1-\beta_2)\,(\sum_{j=0}^{k_0}\beta_2^j)^2}-2} \le \tfrac{\sqrt{\beta_2}\,\sum_{j=0}^{k}\beta_2^j(\sqrt{1-\beta_2}\,G+\sqrt{\varepsilon})}{\sqrt{1-\beta_2}\,\sum_{j=0}^{k}\beta_2^j-2},
\end{align}
The first equality is derived from the property $1-\beta_2^{k_0+1}=(1-\beta_2)\sum_{j=0}^{k_0}\beta_2^j$. In view of $2\,\mu \,C_E(k_0)=2\,\mu\,\eta\,\Big(\tfrac{1}{2}-\tfrac{1}{\sqrt{\sum_{j=0}^{k}\beta_2^j}\,\sqrt{1-\beta_2^{k+1}}}\Big)\,\tfrac{1}{\sqrt{\beta_2}\,(G+\sqrt{\tfrac{\varepsilon}{1-\beta_2}})}$, we have that 
\begin{align}
2\,\mu \,C_E(k_0) &\le 2\,\Big(\tfrac{1}{2}-\tfrac{1}{\sqrt{\sum_{j=0}^{k}\beta_2^j}\,\sqrt{1-\beta_2^{k+1}}}\Big)\,\tfrac{1}{\sqrt{\beta_2}\,(G+\sqrt{\tfrac{\varepsilon}{1-\beta_2}})}\,\tfrac{\sqrt{\beta_2}\,\sum_{j=0}^{k}\beta_2^j(\sqrt{1-\beta_2}\,G+\sqrt{\varepsilon})}{\sqrt{1-\beta_2}\,\sum_{j=0}^{k}\beta_2^j-2}\nonumber\\
&= 2\,\Big(\tfrac{1}{2}-\tfrac{1}{\sqrt{\sum_{j=0}^{k}\beta_2^j}\,\sqrt{1-\beta_2^{k+1}}}\Big)\,\tfrac{\sqrt{1-\beta_2}}{(\sqrt{1-\beta_2}\,G+\sqrt{\varepsilon})}\,\tfrac{\sum_{j=0}^{k}\beta_2^j(\sqrt{1-\beta_2}\,G+\sqrt{\varepsilon})}{\sqrt{1-\beta_2}\,\sum_{j=0}^{k}\beta_2^j-2}\nonumber\\
&= 2\,\Big(\tfrac{1}{2}-\tfrac{1}{\sqrt{\sum_{j=0}^{k}\beta_2^j}\,\sqrt{1-\beta_2^{k+1}}}\Big)\,\tfrac{\sqrt{1-\beta_2}\sum_{j=0}^{k}\beta_2^j}{\sqrt{1-\beta_2}\,\sum_{j=0}^{k}\beta_2^j-2}\nonumber\\
&= \tfrac{\sum_{j=0}^{k}\beta_2^j\,\sqrt{1-\beta_2}-2}{\sum_{j=0}^{k}\beta_2^j\,\sqrt{1-\beta_2}}\cdot\tfrac{\sqrt{1-\beta_2}\sum_{j=0}^{k}\beta_2^j}{\sqrt{1-\beta_2}\,\sum_{j=0}^{k}\beta_2^j-2}=1.\nonumber
\end{align}
Again, the second equality from the last is derived from the property $1-\beta_2^{k_0+1}=(1-\beta_2)\sum_{j=0}^{k_0}\beta_2^j$,
 which in turn indicates that $1-2\,\mu\,C_{E}(k_0)\ge0$. Expanding the recursion from $k_0$ to $k$, we have 
\begin{align}\label{eq:condi1_eadam_pl}
 \expt\,[\,f(\bx^{(k)})-f^*\,]&\leq (1-2\,\mu\,C_{E}(k_0))^{k-k_0}\,\expt\,[\,f(\bx^{(k_0)})-f^*\,]\nonumber\\
 &\qquad+(C_{E,1}(k_0)\,\sigma + C_{E,2}(k_0)\,\sigma^2)\sum_{j=0}^{k-k_0} (1-2\,\mu\,C_{E}(k_0))^j\,\nonumber\\ 
 &\leq (1-2\,\mu\,C_{E}(k_0))^{k-k_0}\,\expt\,[\,f(\bx^{(k_0)})-f^*\,]\nonumber\\ &\qquad+\tfrac{C_{E,1}(k_0)}{2\,\mu\,C_{E}(k_0)}\,\sigma+\tfrac{C_{E,2}(k_0)}{2\,\mu\,C_{E}(k_0)}\,\sigma^2.
\end{align}

\noindent\fbox{(ii)} In view of the property $\sqrt{\tfrac{\beta_2}{1-\beta_2^{k+1}}\,y_i^{(k)}}\le \sqrt{\beta_2}\,(G+\sqrt{\tfrac{\varepsilon}{1-\beta_2}})$ and \cref{eq:case2_eadam}, we get the upper bound of $\expt\,[\,f(\bx^{(k+1)}) \ \big\vert \ \bx^{(k)}\,]-f(\bx^{(k)})$:
\begin{align}
   &-\underbrace{\eta\,(\tfrac{1}{2}-\tfrac{\sqrt{\varepsilon}}{\sqrt{v_{\min}}\,\sqrt{1-\beta_2^{k+1}}})\tfrac{1}{\sqrt{\beta_2}\,\big(G+\sqrt{\tfrac{\varepsilon}{1-\beta_2}}\big)}}_{C_E(k)}\,\Vert\nabla f(\bx^{(k)})\Vert^2 \nonumber\\
   &\qquad+\underbrace{\tfrac{\eta\,\sqrt{\varepsilon}\,G\,\sqrt{n}}{\sqrt{\beta_2}\,v_{\min}}}_{C_{E,1}}\,\sigma+\underbrace{\tfrac{\eta}{v_{\min}}\Big(\tfrac{L\,\eta}{2}+\tfrac{\sqrt{1-\beta_2}\,G}{\sqrt{\beta_2}}\Big)}_{C_{E,2}}\,\sigma^2\nonumber.
    \end{align}
Additionally, the inequalities $v_{\min}\ge 4\,\varepsilon$ and $k_0\ge \log_{\beta_2}(1-\tfrac{4\,\varepsilon}{v_{\min}})-1$ imply that $\sqrt{1-\beta_2^{k_0+1}}-\tfrac{2\sqrt{\varepsilon}}{\sqrt{v_{\min}}+\varepsilon}\ge0$. Finally,  $\mu\le \tfrac{\sqrt{\beta_2}\,(G+\tfrac{\varepsilon}{1-\beta_2})}{\eta}$ implies that $$\mu\,\eta\le \tfrac{\sqrt{\beta_2}\,(G+\tfrac{\varepsilon}{1-\beta_2})\,\sqrt{1-\beta_2^{k_0+1}}}{\sqrt{1-\beta_2^{k_0+1}}-\tfrac{2\sqrt{\varepsilon}}{\sqrt{v_{\min}}+\varepsilon}},$$ which further indicates $1-2\mu\,C_{E}(k_0)\ge0$. Proceeding analogously as for getting \cref{eq:condi1_eadam_pl},  we get the claim.
\end{proof}
\cref{prop:eadam_convergencerate} estimates the convergence rate of EAdam in two cases, identified by the values of $\varepsilon$ and $\bg^{(k)}$, similar to the ones of \cref{prop:eadam_monotonicity}.
 When the term $\varepsilon\,\sum_{j=0}^{k-1}\beta_2^j$ enforces a stricter lower bound on the second moment estimate, the EAdam-type optimizer adjusts its current iterations to follow the convergence rate observed in the first scenario in \cref{prop:eadam_convergencerate}. For both cases, decreasing the value of $\eta$ results in smaller coefficients in front of the $\sigma$-dependent terms and a smaller value of $C_{E}$; this suggests a higher accuracy on the final approximation of the optimal value at the price of a slower convergence rate. When $\varepsilon$ dominates the second raw moment estimate \cref{eq:2mon_eadam}, a larger value of $\varepsilon$ may result in smaller constants in front of both $\sigma $ and $\sigma^2$. In the second case, a larger value of $\varepsilon$ may result in a larger constant in front of $\sigma$. Therefore, monotonically increasing or decreasing the value of $\varepsilon$ will not cause a monotonic convergence behavior of the EAdam-type optimizer. Finally, note that in the first case of \cref{prop:eadam_convergencerate}, when increasing the number of iteration steps, the constants in front of $\sigma$ and $\sigma^2$ converge to $0$. 

\subsection{Convergence results for AdaBelief}
 As already observed,  by assuming $\beta_1=0$, we can only observe one convergence scenario of the AdaBelief-type optimizer that is similar to the first case in \cref{prop:eadam_monotonicity,prop:eadam_convergencerate}. Let us commence with the analysis of monotonicity.
\begin{proposition}\label{prop:AdaBelief_monotonicity}
Let the objective function satisfy \cref{assump:assumption} and the starting point $\mathbf x^{(0)}\in\mathbb R^{n}$ be chosen such that $g_i^{(0)}\neq0~\forall i$. Assume that $\beta_1=0$ and $k\in\mathbb N$ iteration steps of the AdaBelief optimizer (\cref{ag:adam} with \cref{eq:2mon_AdaBelief}) have been executed with input parameters $\beta_2\in(0,1)$, $\eta,\varepsilon\in\mathbb R^+$, such that $\eta\leq \tfrac{\sqrt{\varepsilon\,\sum_{j=0}^{k}\beta_2^j}}{2\,L}$. If $\sum_{j=0}^k\beta_2^j\ge\tfrac{4}{3\,\sqrt{1-\beta_2}}$, then $$\expt\,[\,f(\bx^{(k)})\,]\leq 
     \expt\,[\,f(\bx^{(k-1)})\,]+\,C_{B,1}\,\sigma+C_{B,2}\,\sigma^2,$$
     with
\begin{align}\label{eq:c_b12}
       C_{B,1}=\tfrac{\sqrt{n}\,\eta\,G}{\sqrt{\beta_2}\,\sum_{j=0}^{k-1}\beta_2^j},\qquad C_{B,2}=\tfrac{L\,\eta^2}{2\,\varepsilon\,\sum_{j=0}^{k-1}\beta_2^j}.   
\end{align}
\end{proposition}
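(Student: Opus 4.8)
The plan is to reproduce the argument of case $(i)$ of \cref{prop:eadam_monotonicity}, taking advantage of a drastic collapse of the AdaBelief second moment estimate forced by the hypothesis $\beta_1=0$. Since $\beta_1=0$ yields $\mathbf{m}^{(k+1)}=\bg^{(k)}$, the ``belief'' term $(\bg^{(k)}-\mathbf{m}^{(k+1)})^2$ in \cref{eq:2mon_AdaBelief} is identically zero, so that $\mathbf{s}^{(k+1)}=\varepsilon\sum_{j=0}^{k}\beta_2^j$ is a deterministic vector whose entries all coincide. In particular the adaptive step $\bbeta^{(k+1)}$ is measurable with respect to $\bx^{(k)}$, and the increment of the second moment satisfies $\mathbf{s}^{(k+1)}-\beta_2\,\mathbf{s}^{(k)}=\varepsilon$, i.e. only the accumulated $\varepsilon$ contributes, with no $(\bg^{(k)})^2$ term.

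First I would invoke the generic one-step bound \cref{eq:general_fk-f*}, which is obtained from the tower property and the $L$-smoothness of $f$ and does not depend on the specific second moment rule, substituting $\wb{\mathbf{s}}$ in place of $\wb{\by}$ throughout. I would then establish the AdaBelief analogue of the correction estimate \cref{eq:ineq_eadam}: rationalizing the difference between the preconditioner actually used and its proxy $\beta_2\,\mathbf{s}^{(k)}/(1-\beta_2^{k+1})$ and exploiting $\mathbf{s}^{(k+1)}-\beta_2\,\mathbf{s}^{(k)}=\varepsilon$ shows that its numerator is proportional to $\sqrt{\varepsilon}\,\vert\bg^{(k)}\vert$ alone, and not to $\sqrt{1-\beta_2}\,(\bg^{(k)})^2+\sqrt{\varepsilon}\,\vert\bg^{(k)}\vert$ as in EAdam. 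This is the structural reason why the $G\sqrt{(1-\beta_2)/\beta_2}$ contribution to $C_{E,2}$ is absent here: the resulting $C_{B,2}$ in \cref{eq:c_b12} comes purely from the $L$-smoothness term.

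Next I would insert the deterministic value $s_i^{(k)}=\varepsilon\sum_{j=0}^{k-1}\beta_2^j$ into the three resulting terms and take conditional expectations, controlling the stochastic gradient exactly as in \cref{prop:eadam_monotonicity}: the quadratic term via $\expt\,[\,\Vert\bg^{(k)}\Vert^2\,]=\Vert\nabla f(\bx^{(k)})\Vert^2+\sum_i\sigma_i^2\le\Vert\nabla f(\bx^{(k)})\Vert^2+\sigma^2$ (cf.~\cref{eq:simga_i,eq:varineq}), the linear-in-$\vert\bg^{(k)}\vert$ correction via $\expt\,[\,\vert g_i^{(k)}\vert\,]\le\vert\nabla f(\bx^{(k)})_i\vert+\sigma_i$ from \cref{eq:boundvertg}, and the aggregation $\sum_i\sigma_i\le\sqrt{n}\,\sigma$. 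This produces a bound of the shape of \cref{eq:cond1_fxk+1_eadam}: a negative multiple of $\big\Vert\nabla f(\bx^{(k)})/(\sqrt{\beta_2\,\mathbf{s}^{(k)}/(1-\beta_2^{k+1})}+\varepsilon)^{1/2}\big\Vert^2$ together with the $\sigma$- and $\sigma^2$-terms that define $C_{B,1}$ and $C_{B,2}$, the former being fed exclusively by the $\sqrt{\varepsilon}\,\vert\bg^{(k)}\vert$ correction.

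It then remains to discard the gradient term by showing its coefficient is nonpositive, and to pass to the unconditional statement via a further expectation over $\bx^{(k)}$. Using $\eta\le\tfrac{\sqrt{\varepsilon\,\sum_{j=0}^{k}\beta_2^j}}{2\,L}$ bounds the $L$-smoothness contribution to that coefficient by $\tfrac14$; crucially, because the $(\bg^{(k)})^2$ part of the correction is missing, there is no analogue of EAdam's requirement $\sum_{j=0}^k\beta_2^j\ge 4G/\sqrt{\varepsilon}$ and only one factor of $\tfrac14$ is subtracted from $1$, leaving $\tfrac34$ minus the $\sqrt{\varepsilon}$-correction $1/(\sqrt{1-\beta_2}\,\sum_{j=0}^k\beta_2^j)$; requiring this to be nonnegative gives exactly $\sum_{j=0}^{k}\beta_2^j\ge\tfrac{4}{3\sqrt{1-\beta_2}}$, which explains the weaker constant $4/3$ in place of EAdam's $2$. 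I expect the main obstacle to be precisely this final bookkeeping — reconciling the slightly different normalizations ($\sum_{j=0}^{k}$ versus $\sum_{j=0}^{k-1}$, and the placement of the $\sqrt{\varepsilon}$ factors) so that the coefficient is genuinely nonpositive while the surviving $\sigma$- and $\sigma^2$-terms reduce to exactly the $C_{B,1},C_{B,2}$ in \cref{eq:c_b12}.
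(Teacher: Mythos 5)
Your proposal is correct and follows essentially the same route as the paper's own proof: you exploit that $\beta_1=0$ annihilates the belief term so that $\mathbf{s}^{(k+1)}=\varepsilon\sum_{j=0}^{k}\beta_2^j$ is deterministic, derive the AdaBelief analogue of \cref{eq:ineq_eadam} whose numerator reduces to $\sqrt{\varepsilon}\,\vert\bg^{(k)}\vert$ alone, substitute into \cref{eq:general_fk-f*}, and control the stochastic terms via \cref{eq:boundvertg} and $\sum_i\sigma_i\le\sqrt{n}\,\sigma$, exactly as the paper does. Your bookkeeping of the constants — only one factor of $\tfrac14$ subtracted (from the $L$-smoothness term, using $\eta\le\tfrac{\sqrt{\varepsilon\sum_{j=0}^{k}\beta_2^j}}{2L}$), leaving $\tfrac34-\tfrac{1}{\sqrt{1-\beta_2}\,\sum_{j=0}^{k}\beta_2^j}$ and hence the threshold $\tfrac{4}{3\sqrt{1-\beta_2}}$, with $C_{B,2}$ coming purely from the smoothness term and $C_{B,1}$ purely from the $\sqrt{\varepsilon}\,\vert\bg^{(k)}\vert$ correction — is precisely the paper's argument.
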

\begin{proof}
By making the assumption $\mathbf{m}^{(k)}=\bg^{(k)}$, we have that ${\mathbf{s}}^{(k+1)}=\varepsilon\,\sum_{j=0}^{k} \beta_2^{j}= \beta_2 \mathbf{s}^{(k)}+\varepsilon$ and $\wb{\mathbf{s}}^{(k+1)}=\varepsilon\,\tfrac{\sum_{j=0}^{k} \beta_2^{k-j}}{1-\beta_2^{k+1}}$. Following the same steps that we used to get \cref{eq:ineq_eadam} yields 
\begin{align}
    &\tfrac{\bg^{(k)}}{\sqrt{\tfrac{1-\beta_2^k}{1-\beta_2^{k+1}}\,\beta_2\,\wb{\mathbf{s}}^{(k)}}+\varepsilon}-\tfrac{\bg^{(k)}}{\sqrt{\wb{\mathbf{s}}^{(k+1)}}+\varepsilon}\nonumber\\
    &\le \tfrac{\vert \bg^{(k)}\vert}{(\sqrt{\wb{\mathbf{s}}^{(k+1)}}+\varepsilon) \Big(\sqrt{\tfrac{\beta_2}{1-\beta_2^{k+1}}\,\mathbf{s}^{(k)}}+\varepsilon\Big)\sqrt{1-\beta_2^{k+1}}}\cdot\tfrac{\varepsilon}{\sqrt{\beta_2\,\mathbf{s}^{(k)}+\varepsilon}+\sqrt{\beta_2\,\mathbf{s}^{(k)}}},\nonumber\\
    &\leq \tfrac{\sqrt{\varepsilon}\,\vert \bg^{(k)}\vert}{(\sqrt{\wb{\mathbf{s}}^{(k+1)}}+\varepsilon) \Big(\sqrt{\tfrac{\beta_2}{1-\beta_2^{k+1}}\,\mathbf{s}^{(k)}}+\varepsilon\Big)\sqrt{1-\beta_2^{k+1}}}\nonumber.
\end{align}
Substituting the latter relation into \cref{eq:general_fk-f*}, we obtain
\begin{align}
\expt\,[&\,f(\bx^{(k+1)})\ \big\vert \ \bx^{(k)}\,]-f(\bx^{(k)})\nonumber\\
    &\le-\eta\,\bigg\Vert\tfrac{\nabla f(\bx^{(k)})}{\big(\sqrt{\tfrac{\beta_2\,\mathbf{s}^{(k)}}{1-\beta_2^{k+1}}}+\varepsilon\big)^{1/2}}\bigg\Vert^2 +\tfrac{L\eta^2}{2}\,\expt\,\Big[\,\Big\Vert\tfrac{\bg^{(k)}}{\sqrt{\wb{\mathbf{s}}^{(k+1)}}+\varepsilon}\Big\Vert^{2}\ \big\vert \ \bx^{(k)}\,\Big]\nonumber\\
    &\qquad+\eta\,\big\vert\nabla f(\bx^{(k)}) \big\vert^T\,\expt\,\bigg[\,\tfrac{\sqrt{\varepsilon}\,\vert \bg^{(k)}\vert}{(\sqrt{\wb{\mathbf{s}}^{(k+1)}}+\varepsilon) \Big(\sqrt{\tfrac{\beta_2}{1-\beta_2^{k+1}}\,\mathbf{s}^{(k)}}+\varepsilon\Big)\sqrt{1-\beta_2^{k+1}}}\,\bigg]\nonumber.
    \end{align}
The property $\wb{s}_i^{(k+1)}\ge s_i^{(k+1)} =\varepsilon\,\sum_{j=0}^{k}\beta_2^j$ yields 
\begin{align}
    \expt\,&[\,f(\bx^{(k+1)}) \ \big\vert \ \bx^{(k)}\,]-f(\bx^{(k)})\nonumber\\
    &\le -\eta\,\bigg\Vert\tfrac{\nabla f(\bx^{(k)})}{\big(\sqrt{\tfrac{\beta_2\,\mathbf{s}^{(k)}}{1-\beta_2^{k+1}}}+\varepsilon\big)^{1/2}}\bigg\Vert^2 +\tfrac{L\eta^2}{2\,\sqrt{\varepsilon\,\sum_{j=0}^{k}\beta_2^j}}\,\expt\,\Big[\,\Big\Vert\tfrac{\bg^{(k)}}{(\sqrt{\wb{\mathbf{s}}^{(k+1)}}+\varepsilon)^{1/2}}\Big\Vert^{2}\ \big\vert \ \bx^{(k)}\,\Big]\nonumber\\
    &\qquad+\tfrac{\eta}{\sqrt{\varepsilon\,\sum_{j=0}^{k}\beta_2^j}\,\sqrt{1-\beta_2^{k+1}}}\,\big\vert\nabla f(\bx^{(k)}) \big\vert^T\,\expt\,\bigg[\,\tfrac{\sqrt{\varepsilon}\,\vert \bg^{(k)}\vert}{ \sqrt{\tfrac{\beta_2}{1-\beta_2^{k+1}}\,\mathbf{s}^{(k)}}+\varepsilon}\,\bigg]\nonumber.
\end{align}
Based on \cref{eq:boundvertg}, we have that $\expt\,[\,f(\bx^{(k+1)}) \ \big\vert \ \bx^{(k)}\,]-f(\bx^{(k)})$ is bounded from above by
\begin{align} 
    &-\eta\,\bigg\Vert\tfrac{\nabla f(\bx^{(k)})}{\big(\sqrt{\tfrac{\beta_2\,\mathbf{s}^{(k)}}{1-\beta_2^{k+1}}}+\varepsilon\big)^{1/2}}\bigg\Vert^2 +\tfrac{L\eta^2}{2\,\sqrt{\varepsilon\,\sum_{j=0}^{k}\beta_2^j}}\,\expt\,\Big[\,\Big\Vert\tfrac{\bg^{(k)}}{(\sqrt{\wb{\mathbf{s}}^{(k+1)}}+\varepsilon)^{1/2}}\Big\Vert^{2}\ \big\vert \ \bx^{(k)}\,\Big]\nonumber\\
    &\qquad+\tfrac{\eta}{\sqrt{\sum_{j=0}^{k}\beta_2^j}\,\sqrt{1-\beta_2^{k+1}}}\,\bigg\Vert\tfrac{\nabla f(\bx^{(k)})}{\big(\sqrt{\tfrac{\beta_2\,\mathbf{s}^{(k)}}{1-\beta_2^{k+1}}}+\varepsilon\big)^{1/2}}\bigg\Vert^2 +\tfrac{\eta\,G}{\sqrt{\sum_{j=0}^{k}\beta_2^j}\,\sqrt{1-\beta_2^{k+1}}}\,\sum_{i=1}^{d}\tfrac{\sigma_i}{ \sqrt{\tfrac{\beta_2}{1-\beta_2^{k+1}}\,s_i^{(k)}}+\varepsilon}\nonumber\\
    &=-\eta\,\big( \tfrac{3}{4}-\tfrac{1}{\sqrt{1-\beta_2}\sum_{j=0}^{k}\beta_2^j}\big)\bigg\Vert\tfrac{\nabla f(\bx^{(k)})}{\big(\sqrt{\tfrac{\beta_2\,\mathbf{s}^{(k)}}{1-\beta_2^{k+1}}}+\varepsilon\big)^{1/2}}\bigg\Vert^2 +\tfrac{\eta\,G\,\sqrt{n}}{\sqrt{\beta_2}\,\sum_{j=0}^{k-1}\beta_2^j}\,\sigma + \tfrac{L\eta^2}{2\,\varepsilon\,\sum_{j=0}^{k-1}\beta_2^j} \,\sigma^2 \nonumber.
\end{align}
The last inequality is derived from the properties $\eta\leq\tfrac{\sqrt{\varepsilon\,\sum_{j=0}^{k}\beta_2^j}}{2\,L}$ and $\sum_{i=1}^n \sigma_i\leq \sqrt{n}\,\sigma$. Finally, since $\sum_{j=0}^{k}\beta_2^j\ge \tfrac{4}{3\sqrt{1-\beta_2}}$ we get the claim.
\end{proof}
Comparing \cref{prop:eadam_monotonicity} and \cref{prop:AdaBelief_monotonicity}, we observe that \cref{prop:AdaBelief_monotonicity} necessitates the smaller lower bound $\sum_{j=0}^k\beta_2^j\ge\tfrac{4}{3\,\sqrt{1-\beta_2}}$ compared to $\sum_{j=0}^k\beta_2^j\ge\tfrac{2}{\sqrt{1-\beta_2}}$ in \cref{prop:eadam_monotonicity}. For instance, when $\beta_2=0.999$, $k$ should be chosen such that $k>43$ in instead of $k>65$. This implies that the AdaBelief optimizer has the potential to achieve monotonicity slightly earlier than the EAdam optimizer.  Proceeding analogously to \cref{prop:eadam_convergencerate}, we obtain a linear convergence rate of the AdaBelief optimizer. In the following proposition, we show that a larger bound on $\eta$ is allowed compared to \cref{prop:eadam_convergencerate}. 
\begin{proposition}\label{prop:AdaBelief_PL}
     Under the same assumptions of \cref{prop:AdaBelief_monotonicity},  if $\eta$ is chosen such that 
    $\eta\le \tfrac{4\,\sqrt{\beta_2}\,G}{3\,\mu}$ and $k_0\in\mathbb N$ is such that $k_0>\log_{\beta_2}(1-\frac{4\,\sqrt{1-\beta_2}}{3})$, then for any $k>k_0$, the random vector $\bx^{(k)}$ satisfies 
     \begin{align}\label{eq:linearconvergence_Cab}
 \expt\,[\,f(\bx^{(k)})-f^*\,]\leq
     (1-2\,\mu\,C_{B})^{k-k_0}\,\expt\,[\,f(\bx^{(k_0)})-f^*\,]+\tfrac{C_{B,1}}{2\,\mu\,C_{B}}\,\,\sigma+\tfrac{C_{B,2}}{2\,\mu\,C_{B}}\,\sigma^2,
\end{align}
where $0<2\,\mu\,C_{B}<1$, $C_{B,1}$ and $C_{B,2}$ satisfy \cref{eq:c_b12} by replacing $k$ with $k_0$ and 
\begin{align*}
C_B&=\eta\,\Big( \tfrac{3}{4}\!-\tfrac{1}{\sqrt{1-\beta_2}\sum_{j=0}^{k_0}\beta_2^j}\Big)\tfrac{1}{\sqrt{\beta_2}\,\big(2\,G+\sqrt{\tfrac{\varepsilon}{1-\beta_2}}\big)}.
    \end{align*}
\end{proposition}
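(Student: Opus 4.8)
The plan is to follow verbatim the template used for \cref{prop:eadam_convergencerate}, since the hypotheses here strengthen those of \cref{prop:AdaBelief_monotonicity} and immediately supply the one-step descent bound with which that proof ends. First I would exploit that $\beta_1=0$ forces $\mathbf m^{(k+1)}=\bg^{(k)}$, so the belief term in \cref{eq:2mon_AdaBelief} vanishes and $\mathbf s^{(k)}=\varepsilon\sum_{j=0}^{k-1}\beta_2^{j}$. This gives $\sqrt{\tfrac{\beta_2\,s_i^{(k)}}{1-\beta_2^{k+1}}}\le\sqrt{\beta_2}\,\sqrt{\tfrac{\varepsilon}{1-\beta_2}}$, whence the adaptive denominator appearing inside the norm term of the descent inequality admits the uniform upper bound $\sqrt{\tfrac{\beta_2\,s_i^{(k)}}{1-\beta_2^{k+1}}}+\varepsilon\le\sqrt{\beta_2}\big(2\,G+\sqrt{\tfrac{\varepsilon}{1-\beta_2}}\big)$. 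The factor $2\,G$ is what provides the slack to absorb the stray $+\varepsilon$ and, more importantly, is the value tuned to make the eventual contraction estimate collapse to exactly $1$.

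Next I would substitute this bound into the last displayed inequality of the proof of \cref{prop:AdaBelief_monotonicity}, converting the norm term into $\tfrac{\Vert\nabla f(\bx^{(k)})\Vert^2}{\sqrt{\beta_2}\,(2G+\sqrt{\varepsilon/(1-\beta_2)})}$ and thereby isolating
\[
C_B(k)=\eta\,\Big(\tfrac34-\tfrac{1}{\sqrt{1-\beta_2}\sum_{j=0}^{k}\beta_2^j}\Big)\tfrac{1}{\sqrt{\beta_2}\,(2\,G+\sqrt{\varepsilon/(1-\beta_2)})}.
\]
Applying the PL inequality \cref{eq:PLineq} as $\expt\,[\,\Vert\nabla f(\bx^{(k)})\Vert^2\,]\ge 2\mu\,\expt\,[\,f(\bx^{(k)})-f^*\,]$ and taking total expectation yields the one-step contraction $\expt\,[\,f(\bx^{(k+1)})-f^*\,]\le(1-2\mu C_B(k))\,\expt\,[\,f(\bx^{(k)})-f^*\,]+C_{B,1}(k)\,\sigma+C_{B,2}(k)\,\sigma^2$. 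Since $C_B(k)$ increases in $k$ (the bracketed factor grows with $\sum_{j=0}^k\beta_2^j$) whereas $C_{B,1}(k),C_{B,2}(k)$ decrease in $k$, I would freeze all three coefficients at $k_0<k$ and unroll the recursion as a geometric series from $k_0$ to $k$, producing the stated bound with the $\tfrac{1}{2\mu C_B}$ factors multiplying the $\sigma$ and $\sigma^2$ terms.

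The delicate point, and the main obstacle, is verifying $0<2\mu C_B(k_0)<1$, which is precisely what makes the geometric sum finite and the leading factor a genuine contraction. Positivity of the bracket $\tfrac34-\tfrac{1}{\sqrt{1-\beta_2}\sum_{j=0}^{k_0}\beta_2^j}$ is where the hypothesis $k_0>\log_{\beta_2}(1-\tfrac{4\sqrt{1-\beta_2}}{3})$ enters, via $1-\beta_2^{k_0+1}=(1-\beta_2)\sum_{j=0}^{k_0}\beta_2^j$. For the strict upper bound I would insert $\eta\le\tfrac{4\sqrt{\beta_2}\,G}{3\mu}$, so that $2\mu\eta\le\tfrac{8\sqrt{\beta_2}\,G}{3}$; the $\sqrt{\beta_2}$ then cancels, and bounding $\tfrac34-(\cdots)<\tfrac34$ together with $2G+\sqrt{\varepsilon/(1-\beta_2)}>2G$ leaves $2\mu C_B(k_0)<\tfrac{8G}{3}\cdot\tfrac34\cdot\tfrac{1}{2G}=1$. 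This is the same cancellation that reduced the EAdam chain of inequalities to $1$ in \cref{prop:eadam_convergencerate}, and it is exactly the step where the constants $\tfrac34$, $2G$, and $\tfrac{4\sqrt{\beta_2}G}{3\mu}$ are forced on us rather than chosen freely.
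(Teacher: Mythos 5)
Your proposal is correct and follows essentially the same route as the paper's proof: bound the second-moment denominator by $\sqrt{\beta_2}\,\big(2\,G+\sqrt{\varepsilon/(1-\beta_2)}\big)$, plug this into the final inequality of \cref{prop:AdaBelief_monotonicity}'s proof to isolate $C_B(k)$, apply the PL inequality, freeze the coefficients at $k_0$ by monotonicity, and unroll the geometric recursion, with the hypotheses on $k_0$ and $\eta$ giving $0<2\,\mu\,C_B(k_0)<1$. If anything, you are more explicit than the paper on two points it leaves implicit—that the stray $+\varepsilon$ in the denominator must be absorbed by the $2\,G$ slack, and the exact cancellation $\tfrac{8\,\sqrt{\beta_2}\,G}{3}\cdot\tfrac{3}{4}\cdot\tfrac{1}{2\,\sqrt{\beta_2}\,G}=1$—so no gap remains.
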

\begin{proof}
On the basis of the inequality $\sqrt{\tfrac{\beta_2}{1-\beta_2^{k+1}}\,s_i^{(k)}}\leq \sqrt{\beta_2}\,\big(2\,G+\sqrt{\tfrac{\varepsilon}{1-\beta_2}}\big)$, we get that $\expt\,[\,f(\bx^{(k+1)})\, \ \big\vert \ \bx^{(k)}]- f(\bx^{(k)})$ is bounded from above by
\begin{align}
-\underbrace{\eta\,\Big( \tfrac{3}{4}\!-\tfrac{1}{\sqrt{1-\beta_2}\sum_{j=0}^{k}\beta_2^j}\Big)\tfrac{1}{\sqrt{\beta_2}\,\Big(2\,G+\sqrt{\tfrac{\varepsilon}{1-\beta_2}}\Big)}}_{C_{B}(k)}\Vert\nabla f(\bx^{(k)})\Vert^2\!+\!\underbrace{\tfrac{\eta\,G\,\sqrt{n}}{\sqrt{\beta_2}\,\sum_{j=0}^{k-1}\beta_2^j}}_{C_{B,1}(k)}\,\sigma+\! \underbrace{\tfrac{L\eta^2}{2\,\varepsilon\,\sum_{j=0}^{k-1}\beta_2^j}}_{C_{B,2}(k)} \,\sigma^2 \!\nonumber.
\end{align}
The property $k_0\ge \log_{\beta_2}\big(1-\frac{4\,\sqrt{1-\beta_2}}{3}\big)$ implies that ${C_{B}(k)}\ge0$ for $k\ge k_0$. Proceeding analogously to the derivation of \cref{eq:condi1_eadam_pl} and taking the expectation from both sides, we get that $\expt\,[\,f(\bx^{(k+1)})\,]-f^*$ is bounded from above by
\begin{align}
    &(1-2\,\mu\,C_{B}(k_0))\,(\expt\,[\,f(\bx^{(k)})\,]-f^*)+C_{B,1}(k_0)\, \sigma+C_{B,2}(k_0)\,\sigma^2.\nonumber
\end{align}
The inequalities $k_0\ge \log_{\beta_2}(1-\frac{4\,\sqrt{1-\beta_2}}{3})$ and $\mu\le \tfrac{4\,\sqrt{\beta_2}\,G}{3\,\eta}$ imply  $2\,\mu\,C_{B}(k_0)\le 1$. Therefore, expanding the recursion from the $k_0$th to the $k$th iteration, we obtain \cref{eq:linearconvergence_Cab}.
\end{proof}
\cref{prop:AdaBelief_PL} shows that decreasing the value of $\eta$ has the potential to increase the accuracy of the optimal value by paying the price of a slower convergence rate. Additionally, comparing \cref{prop:eadam_convergencerate} and \cref{prop:AdaBelief_PL}, we have that $C_E(k)\le \frac{1}{2}\tfrac{1}{\sqrt{\beta_2}\,(G+\sqrt{\tfrac{\varepsilon}{1-\beta_2}})}$ and $C_{B}(k)\le \frac{3}{4}\tfrac{1}{\sqrt{\beta_2}\,(2\,G+\sqrt{\tfrac{\varepsilon}{1-\beta_2}})}$. Therefore, when $G\ge\sqrt{\tfrac{\varepsilon}{1-\beta_2}}$, we expect AdaBelief to exhibit slower convergence than EAdam, when the same parameter configuration is employed. 

\subsection{Convergence results for AdamL}
Next, we study the convergence behavior of the AdamL optimizer. To facilitate the convergence analysis, let us denote by
$$\ell_{\min}:=\min_{j=1,\dots,k} \ell^{(j)}\quad \text{and}\quad \ell_{\max}:=\max_{j=1,\dots,k} \ell^{(j)}$$ 
the minimum and maximum of the loss function, encountered during the first $k$ iteration steps, respectively.
\begin{proposition}\label{prop:ladam_monotonicity}Let the objective function satisfy \cref{assump:assumption} and the starting point $\mathbf x^{(0)}\in\mathbb R^{n}$ be chosen such that $g_i^{(0)}\neq0~\forall i$. Assume that $k\in\mathbb N$ iteration steps of the AdamL-type optimizer (\cref{ag:adam} with \cref{eq:2mon_ladam}) have been executed with input parameters $\beta_2\in(0,1)$, $\eta, \varepsilon, \gamma\in\mathbb R^+$, such that one of the following two conditions is satisfied
\begin{itemize}
\item[$(i)$] $\ell_{\min}\ge1$,
 $\sum_{j=0}^k\beta_2^j\ge\max\{\tfrac{4\,G\,}{\sqrt{\gamma\,\varepsilon}}, \tfrac{2}{\sqrt{1-\beta_2}}\}$, and $\eta\leq \frac{\sqrt{\varepsilon\,\sum_{j=0}^{k}\beta_2^j}}{2\,L}$;
\item[$(ii)$]  $\ell_{\max}<1$, 
$v_{\min}> 4\,\varepsilon\,\gamma\,\ell^{(k)}$,  $\sum_{j=0}^k\beta_2^j\ge\max\{\tfrac{4\,G}{\sqrt{v_{\min}}}, \tfrac{4\,\varepsilon\,\gamma\,\ell^{(k)}}{(1-\beta_2)\,v_{\min}}\}$, and $\eta\le \tfrac{\sqrt{v_{\min}}}{2\,L\,\sqrt{\gamma}}$;
\end{itemize}
then, the random vector $\bx^{(k)}$ satisfies
$$\expt\,[\,f(\bx^{(k)})\,]\leq \expt\,[\,f(\bx^{(k-1)})\,]+\,C_{L,1} \,\sigma+C_{L,2}\,\sigma^2,$$
where in case $(i)$
\begin{align}\label{eq:cl12_case1}
    C_{L,1}=\sqrt{\tfrac{n}{\beta_2\,\varepsilon}}\cdot\tfrac{\eta\,G}{\sum_{j=0}^{k-1}\beta_2^{j}},\qquad C_{L,2}=\tfrac{\eta}{\varepsilon\,\sum_{j=0}^{k-1}\beta_2^{j}}\Big(\tfrac{L\,\eta}{2}+\tfrac{G}{\sqrt{\gamma}}\,\sqrt{\tfrac{1-\beta_2}{\beta_2}}\Big),
\end{align}
and in case $(ii)$
\begin{align}\label{eq:cl12_case2}
C_{L,1}=\sqrt{\tfrac{n\,\varepsilon\,\gamma\,\ell^{(k)}}{\beta_2}}\cdot\tfrac{\eta\,G}{v_{\min}},\qquad C_{L,2}=\tfrac{\eta}{v_{\min}}\Big(\tfrac{L\,\eta\,\gamma}{2}+G\,\sqrt{\tfrac{1-\beta_2}{\beta_2}}\Big).
\end{align}
\end{proposition}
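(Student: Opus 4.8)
The plan is to transcribe, with the appropriate substitutions, the argument of \cref{prop:eadam_monotonicity}, since AdamL differs from EAdam only through its second moment estimate \cref{eq:2mon_ladam} and through the fact that its step size $\bbeta^{(k+1)}=\eta\sqrt{(1-\beta_2^{k+1})/\bw^{(k+1)}}$ carries no additive $\varepsilon$ in the denominator. First I would invoke the generic one-step inequality \cref{eq:general_fk-f*}, which is produced purely from $L$-smoothness, the tower property of conditional expectation, and the template of \cref{ag:adam}, and is therefore independent of the particular moment estimate. Writing $\wb{\bw}^{(k+1)}:=\bw^{(k+1)}/(1-\beta_2^{k+1})$ and using the $\bx^{(k)}$-measurable surrogate denominator $\sqrt{\beta_2\bw^{(k)}/(1-\beta_2^{k+1})}$ (legitimate because $\bw^{(k+1)}\ge\beta_2\bw^{(k)}$ entrywise), the three terms of \cref{eq:general_fk-f*} become a negative descent term, a quadratic $\tfrac{L\eta^2}{2}$ term, and a cross term measuring the gap between the surrogate and the true step.

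The central computation is the AdamL analogue of \cref{eq:ineq_eadam}. Here I would use the identity $\tfrac1a-\tfrac1b=\tfrac{b^2-a^2}{ab(a+b)}$ with $a^2=\tfrac{\beta_2\bw^{(k)}}{1-\beta_2^{k+1}}$ and $b^2=\wb{\bw}^{(k+1)}$, so that $b^2-a^2=\tfrac{1}{1-\beta_2^{k+1}}\big((1-\beta_2)\tfrac{(\bg^{(k)})^2}{\gamma(\ell^{(k)})^\varphi}+\varepsilon\ell^{(k)}\big)$. Because the AdamL step has no additive $\varepsilon$, the factor $\sqrt{\bw^{(k+1)}}$ appearing in $a+b$ can directly absorb the numerator: splitting it into its two addends and bounding each by the corresponding piece of $\sqrt{\bw^{(k+1)}}$ from below yields the clean estimate $\tfrac{\bg^{(k)}}{a}-\tfrac{\bg^{(k)}}{b}\le \tfrac{\sqrt{(1-\beta_2)/\gamma}\,(\bg^{(k)})^2/(\ell^{(k)})^{\varphi/2}+\sqrt{\varepsilon\ell^{(k)}}\,\vert\bg^{(k)}\vert}{ab\,\sqrt{1-\beta_2^{k+1}}}$. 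Substituting this into \cref{eq:general_fk-f*} gives the AdamL counterpart of \cref{eq:fk+1_eadam}.

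From here I would split into the two regimes exactly as in \cref{prop:eadam_monotonicity}. In case $(i)$ ($\ell_{\min}\ge1$, Non-Adaptive Mode) the lower bound comes from the non-adaptive addend of \cref{eq:2mon_ladam}, namely $w_i^{(k)}\ge\varepsilon\ell_{\min}\sum_{j=0}^{k-1}\beta_2^j\ge\varepsilon\sum_{j=0}^{k-1}\beta_2^j$, and $\ell^{(k)}\ge1$ lets me replace $(\ell^{(k)})^{\varphi/2}$ by $1$ in the descent and cross terms; after using \cref{eq:simga_i}, \cref{eq:boundvertg}, and $\sum_i\sigma_i\le\sqrt n\,\sigma$ this produces exactly the constants \cref{eq:cl12_case1}, the factor $1/\sqrt\gamma$ in $C_{L,2}$ originating from the $\sqrt{(1-\beta_2)/\gamma}$ in the cross term. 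The conditions $\sum_{j=0}^k\beta_2^j\ge\max\{4G/\sqrt{\gamma\varepsilon},2/\sqrt{1-\beta_2}\}$ and $\eta\le\sqrt{\varepsilon\sum_{j=0}^k\beta_2^j}/(2L)$ are then precisely what is needed to render the coefficient of $\Vert\nabla f(\bx^{(k)})\Vert^2$ non-negative so that this term can be discarded. In case $(ii)$ ($\ell_{\max}<1$, Adaptive Mode) the dominant lower bound instead comes from the $j=0$ adaptive addend: since $(\ell^{(0)})^\varphi<1$ one gets $w_i^{(k)}\ge v_{\min}/\gamma$ (with $v_{\min}$ from \cref{eq:vmin}), whence the two powers of $\sqrt\gamma$ and the shape of \cref{eq:cl12_case2}, while the requirement $v_{\min}>4\varepsilon\gamma\ell^{(k)}$ controls the residual $\sqrt{\varepsilon\ell^{(k)}}$ contribution against the descent term.

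I expect the main obstacle to be the bookkeeping in case $(ii)$: one must track the exponents of $\gamma$, $\ell^{(k)}$, and $(\ell^{(k)})^\varphi$ simultaneously and verify that, when the adaptive addend supplies the lower bound $v_{\min}/\gamma$, the non-adaptive numerator $\varepsilon\ell^{(k)}$ is genuinely subdominant under the hypothesis $v_{\min}>4\varepsilon\gamma\ell^{(k)}$. Getting the factors $\sqrt\gamma$ to land exactly as in \cref{eq:cl12_case2} — in particular the $\tfrac{L\eta\gamma}{2}$ term arising from the two $\sqrt\gamma$ factors contributed by the $L\eta^2$ term — is the delicate point, whereas case $(i)$ is essentially a transcription of the EAdam argument with $\ell^{(k)}\ge1$ neutralizing the loss-scaling factors.
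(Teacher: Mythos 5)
Your proposal is correct and follows essentially the same route as the paper's proof: the paper likewise obtains the AdamL analogue of \cref{eq:ineq_eadam} by replacing $\by^{(k)}$ with $\bw^{(k)}$, splits the numerator $(1-\beta_2)\tfrac{(\bg^{(k)})^2}{\gamma(\ell^{(k)})^\varphi}+\varepsilon\,\ell^{(k)}$ against the matching addends of $\sqrt{\bw^{(k+1)}}$, substitutes into \cref{eq:general_fk-f*}, and then handles case $(i)$ via $w_i^{(k+1)}\ge\varepsilon\sum_{j=0}^{k}\beta_2^{j}$ (from $\ell_{\min}\ge1$) and case $(ii)$ via $\wb{w}_i^{(k+1)}\ge v_{\min}/\gamma$ (from $\ell_{\max}<1$), exactly as you outline, with the conditions on $\sum_{j=0}^k\beta_2^j$ and $\eta$ serving to discard the gradient term. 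The only cosmetic difference is that the paper carries the additive $\varepsilon$ in the denominators over from the EAdam template, whereas you drop it in accordance with \cref{eq:lr_adaml}; this does not alter the argument or the constants.
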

\begin{proof}
By replacing $y_i^{(k)}$ in \cref{eq:ineq_eadam} with $w_i^{(k)}$, we get\begin{align}
    &\tfrac{\bg^{(k)}}{\sqrt{\tfrac{1-\beta_2^k}{1-\beta_2^{k+1}}\,\beta_2\,\wb{\bw}^{(k)}}+\varepsilon}-\tfrac{\bg^{(k)}}{\sqrt{\wb{\bw}^{(k+1)}}+\varepsilon}\nonumber\\
    &\quad\le \tfrac{\vert \bg^{(k)}\vert}{(\sqrt{\wb{\bw}^{(k+1)}}+\varepsilon) \Big(\sqrt{\tfrac{\beta_2}{1-\beta_2^{k+1}}\,\bw{(k)}}+\varepsilon\Big)\sqrt{1-\beta_2^{k+1}}}\cdot\tfrac{\tfrac{1-\beta_2}{\gamma\,(\ell^{(k)})^{\varphi}}\,(\bg^{(k)})^{2}+\varepsilon\,\ell^{(k)}}{\sqrt{\bw^{(k+1)}}+\sqrt{\beta_2\,\bw^{(k)}}},\nonumber\\
    &\quad\le \tfrac{\vert \bg^{(k)}\vert}{(\sqrt{\wb{\bw}^{(k+1)}}+\varepsilon) \Big(\sqrt{\tfrac{\beta_2}{1-\beta_2^{k+1}}\,\bw^{(k)}}+\varepsilon\Big)\sqrt{1-\beta_2^{k+1}}} \Big(\sqrt{\tfrac{1-\beta_2}{\gamma\,(\ell^{(k)})^{\varphi}}}\,\vert \bg^{(k)}\vert+\sqrt{\varepsilon\,\ell^{(k)}}\Big)\nonumber\\
    &\quad \leq \tfrac{\sqrt{\tfrac{1-\beta_2}{\gamma\,(\ell^{(k)})^{\varphi}}}\,( \bg^{(k)})^{2}+\sqrt{\varepsilon\,\ell^{(k)}}\,\vert \bg^{(k)}\vert}{\big(\sqrt{\wb{\bw}^{(k+1)}}+\varepsilon\big) \left(\sqrt{\tfrac{\beta_2}{1-\beta_2^{k+1}}\,\bw^{(k)}}+\varepsilon\right)\sqrt{1-\beta_2^{k+1}}}.
    \end{align}    
    Substituting it into \cref{eq:general_fk-f*}, we achieve 
\begin{align}
   \expt\,&[\,f(\bx^{(k+1)})\ \big\vert \ \bx^{(k)}\,]-f(\bx^{(k)}) \nonumber\\
   &\le -\eta\,\bigg\Vert\tfrac{\nabla f(\bx^{(k)})}{\big(\sqrt{\tfrac{\beta_2\,\bw^{(k)}}{1-\beta_2^{k+1}}}+\varepsilon\big)^{1/2}}\bigg\Vert^2 +\tfrac{L\eta^2}{2}\,\expt\,\Big[\,\Big\Vert\tfrac{\bg^{(k)}}{\sqrt{\wb{\bw}^{(k+1)}}+\varepsilon}\Big\Vert^{2}\ \big\vert \ \bx^{(k)}\,\Big]\nonumber\\
    & \qquad+\eta\,\big\vert\nabla f(\bx^{(k)}) \big\vert^T\,\expt\,\bigg[\,\tfrac{\sqrt{\tfrac{1-\beta_2}{\gamma\,(\ell^{(k)})^{\varphi}}}\,( \bg^{(k)})^{2}+\sqrt{\varepsilon\,\ell^{(k)}}\,\vert \bg^{(k)}\vert}{\big(\sqrt{\wb{\bw}^{(k+1)}}+\varepsilon\big) \big(\sqrt{\tfrac{\beta_2\,\bw^{(k)}}{1-\beta_2^{k+1}}}+\varepsilon\big)\sqrt{1-\beta_2^{k+1}}}\,\bigg]\nonumber\\
    &\le -\eta\,\bigg\Vert\tfrac{\nabla f(\bx^{(k)})}{\big(\sqrt{\tfrac{\beta_2\,\bw^{(k)}}{1-\beta_2^{k+1}}}+\varepsilon\big)^{1/2}}\bigg\Vert^2 +\tfrac{L\eta^2}{2}\,\expt\,\bigg[\,\bigg\Vert\tfrac{\bg^{(k)}}{\big(\sqrt{\tfrac{\beta_2\,\bw^{(k)}}{1-\beta_2^{k+1}}}+\varepsilon\big)^{1/2}(\sqrt{\wb{\bw}^{(k+1)}}+\varepsilon)^{1/2}}\bigg\Vert^{2}\ \bigg\vert \ \bx^{(k)}\,\bigg]\nonumber\\
    &\qquad +\eta\,\big\vert\nabla f(\bx^{(k)}) \big\vert^T\,\expt\,\bigg[\,\tfrac{\sqrt{\tfrac{1-\beta_2}{\gamma\,(\ell^{(k)})^{\varphi}}}\,( \bg^{(k)})^{2}+\sqrt{\varepsilon\,\ell^{(k)}}\,\vert \bg^{(k)}\vert}{\big(\sqrt{\wb{\bw}^{(k+1)}}+\varepsilon\big) \Big(\sqrt{\tfrac{\beta_2\,\bw^{(k)}}{1-\beta_2^{k+1}}}+\varepsilon\Big)\sqrt{1-\beta_2^{k+1}}}\,\bigg].\label{eq:ladamfk+1-fk}
    \end{align}
\noindent \fbox{$(i)$} The property $\ell_{\min}\ge1$, gives $w_i^{k+1}\ge \varepsilon\sum_{j=0}^{k}\beta_2^{k-j}\,\ell^{(j)}\ge\varepsilon\sum_{j=0}^{k}\beta_2^{j}$, which in turn implies that $\expt\,[\,f(\bx^{(k+1)})\ \big\vert \ \bx^{(k)}\,]- f(\bx^{(k)})$ is bounded from above by
\begin{align}
 &-\eta\,\bigg\Vert\tfrac{\nabla f(\bx^{(k)})}{\big(\sqrt{\tfrac{\beta_2\,\bw^{(k)}}{1-\beta_2^{k+1}}}+\varepsilon\big)^{\tfrac{1}{2}}}\bigg\Vert^2 + \tfrac{L\eta^2}{2\,\sqrt{\varepsilon\sum_{j=0}^{k}\beta_2^{j}}}\,\Big(\,\bigg\Vert\tfrac{\nabla f(\bx^{(k)})}{\big(\sqrt{\tfrac{\beta_2\,\bw^{(k)}}{1-\beta_2^{k+1}}}+\varepsilon\big)^{1/2}}\bigg\Vert^2+\sum_{i=1}^{d}\tfrac{\sigma_i^2}{\sqrt{\,w^{(k+1)}_i}+\varepsilon}\Big)\nonumber\\
&\qquad +\tfrac{\eta\,\sqrt{1-\beta_2}\,G}{\sqrt{\varepsilon\,\gamma\,\sum_{j=0}^{k}\beta_2^{j}} \,\sqrt{1-\beta_2^{k+1}}}\Big(\,\bigg\Vert\tfrac{\nabla f(\bx^{(k)})}{\big(\sqrt{\tfrac{\beta_2\,\bw^{(k)}}{1-\beta_2^{k+1}}}+\varepsilon\big)^{1/2}}\bigg\Vert^2+\sum_{i=1}^{d}\tfrac{\sigma_i^2}{\sqrt{\tfrac{\beta_2}{1-\beta_2^{k+1}}\,w_i^{(k)}}+\varepsilon}\Big)\nonumber\\
&\qquad +\tfrac{\eta\,\sqrt{\ell^{(k)}}}{\sqrt{\sum_{j=0}^{k}\beta_2^{k-j}\,\ell^{(j)}}\,\sqrt{1-\beta_2^{k+1}}}\,\Big(\,\bigg\Vert\tfrac{\nabla f(\bx^{(k)})}{\big(\sqrt{\tfrac{\beta_2\,\bw^{(k)}}{1-\beta_2^{k+1}}}+\varepsilon\big)^{1/2}}\bigg\Vert^2+\sum_{i=1}^{d}\tfrac{G\,\sigma_i}{ \sqrt{\tfrac{\beta_2}{1-\beta_2^{k+1}}\,w_i^{(k)}}+\varepsilon}\Big)\nonumber\\
&\le  -\eta \,\Big(1-\tfrac{L\eta}{2\,\sqrt{\varepsilon\sum_{j=0}^{k}\beta_2^{j}}} - \tfrac{\sqrt{1-\beta_2}\,G}{\sqrt{\varepsilon\,\gamma\,\sum_{j=0}^{k}\beta_2^{j}} \,\sqrt{1-\beta_2^{k+1}}}\nonumber\\&\qquad-\tfrac{1}{\sqrt{\sum_{j=0}^{k}\,\beta_2^{j}}\,\sqrt{1-\beta_2^{k+1}}}\Big)\,\bigg\Vert\tfrac{\nabla f(\bx^{(k)})}{\big(\sqrt{\tfrac{\beta_2\,\bw^{(k)}}{1-\beta_2^{k+1}}}+\varepsilon\big)^{1/2}}\bigg\Vert^2\nonumber\\
&\qquad+\Big(\tfrac{L\eta^2}{2\,\varepsilon\,\sum_{j=0}^{k}\beta_2^{j}}+\tfrac{\eta\,\sqrt{1-\beta_2}\,G}{\sqrt{\beta_2\,\gamma}\,\varepsilon\,\sum_{j=0}^{k-1}\beta_2^{j}}\Big)\,\sigma^2 + \tfrac{\eta\,G\,\sqrt{n}}{\sqrt{\beta_2\,\varepsilon}\sum_{j=0}^{k-1}\beta_2^{j}}\,\sigma.\nonumber
\end{align}
On the basis of $\eta\le \tfrac{\sqrt{\varepsilon\,\sum_{j=0}^{k}\beta_2^j}}{2L}$ and $\sum_{j=0}^{k}\beta_2^j\ge\tfrac{4\,G}{\sqrt{\varepsilon\,\gamma}}$, we get 
\begin{align}
 \expt\,[\,f(\bx^{(k+1)})\ \big\vert \ \bx^{(k)}\,]- f(\bx^{(k)})
&\le -\eta \,\big(\tfrac{1}{2}-\tfrac{1}{\sqrt{\sum_{j=0}^{k}\,\beta_2^{j}}\,\sqrt{1-\beta_2^{k+1}}}\big)\,\bigg\Vert\tfrac{\nabla f(\bx^{(k)})}{\big(\sqrt{\tfrac{\beta_2\,\bw^{(k)}}{1-\beta_2^{k+1}}}+\varepsilon\big)^{1/2}}\bigg\Vert^2\nonumber\\
&\phantom{\le}\ +\tfrac{\eta\,G\,\sqrt{n}}{\sqrt{\beta_2\,\varepsilon}\sum_{j=0}^{k-1}\beta_2^{j}}\,\sigma+\tfrac{\eta}{\varepsilon\,\sum_{j=0}^{k-1}\beta_2^{j}}\Big(\tfrac{L\,\eta}{2}+\tfrac{\sqrt{1-\beta_2}\,G}{\sqrt{\beta_2\,\gamma}}\Big)\,\sigma^2.\label{eq:lminle1_adaml}
\end{align}
Finally, the properties $\sum_{j=0}^k\beta_2^j\ge\tfrac{2}{\sqrt{1-\beta_2}}$ and $1-\beta_2^{k+1}=(1-\beta_2)\sum_{j=0}^{k}\beta_2^j$ imply that $\tfrac{1}{2}\ge \tfrac{1}{\sqrt{\sum_{j=0}^{k}\,\beta_2^{j}}\,\sqrt{1-\beta_2^{k+1}}}$, which gives the claim.

\noindent \fbox{$(ii)$} The property $\ell_{\max}< 1$ implies that $\wb{w}_i^{(k+1)}\ge\tfrac{(1-\beta_2)\,\sum_{j=0}^{k}(g_i^{(j)})^2\,\beta_2^{k-j}}{\gamma\,(\ell^{(k)})^{\varphi}}\ge \tfrac{v_{\min}}{\gamma\,(\ell^{(k)})^{\varphi}}\ge \tfrac{v_{\min}}{\gamma}$. Substituting it into \cref{eq:ladamfk+1-fk}, we get 
$\expt\,[\,f(\bx^{(k+1)})\ \big\vert \ \bx^{(k)}\,]- f(\bx^{(k)})$ is bounded from above by
\begin{align}
    &-\eta\,\bigg\Vert\tfrac{\nabla f(\bx^{(k)})}{\big(\sqrt{\tfrac{\beta_2\,\bw^{(k)}}{1-\beta_2^{k+1}}}+\varepsilon\big)^{1/2}}\bigg\Vert^2 +\tfrac{L\eta^2}{2}\,\expt\,\bigg[\,\bigg\Vert\tfrac{\bg^{(k)}}{\big(\sqrt{\wb{\bw}^{(k+1)}}+\varepsilon\big)^{1/2}\big(\sqrt{\tfrac{v_{\min}}{\gamma}}+\varepsilon\big)^{1/2}}\bigg\Vert^{2}\ \bigg\vert \ \bx^{(k)}\,\bigg]\nonumber\\
    & \qquad+\eta\,\big\vert\nabla f(\bx^{(k)}) \big\vert^T\,\expt\,\bigg[\,\tfrac{\sqrt{1-\beta_2}\,( \bg^{(k)})^{2}}{\sqrt{v_{\min}} \big(\sqrt{\tfrac{\beta_2\,\bw^{(k)}}{1-\beta_2^{k+1}}}+\varepsilon\big)\sqrt{1-\beta_2^{k+1}}}\ \bigg\vert \ \bx^{(k)}\,\bigg]\nonumber\\
    & \qquad+\eta\,\big\vert\nabla f(\bx^{(k)}) \big\vert^T\,\expt\,\bigg[\,\tfrac{\sqrt{\varepsilon\,\ell^{(k)}}\,\vert \bg^{(k)}\vert}{\sqrt{\tfrac{v_{\min}}{\gamma}} \big(\sqrt{\tfrac{\beta_2\,\bw^{(k)}}{1-\beta_2^{k+1}}}+\varepsilon\big)\sqrt{1-\beta_2^{k+1}}}\ \bigg\vert \ \bx^{(k)}\,\bigg]\nonumber\\
    &\le -\eta\,\bigg\Vert\tfrac{\nabla f(\bx^{(k)})}{\big(\sqrt{\tfrac{\beta_2\,\bw^{(k)}}{1-\beta_2^{k+1}}}+\varepsilon\big)^{1/2}}\bigg\Vert^2 +\tfrac{L\eta^2}{2\,\sqrt{\tfrac{v_{\min}}{\gamma}}}
    \bigg(\,\bigg\Vert\tfrac{\nabla f(\bx^{(k)})}{\big(\sqrt{\tfrac{\beta_2\,\bw^{(k)}}{1-\beta_2^{k+1}}}+\varepsilon\big)^{1/2}}\bigg\Vert^2+\sum_{i=1}^{d}\tfrac{\sigma_i^2}{\sqrt{w^{(k+1)}_i}+\varepsilon}\bigg)\nonumber\\
    & \qquad+\tfrac{\eta\,\sqrt{1-\beta_2}\,G}{\sqrt{v_{\min}\,(1-\beta_2^{k+1})}}\,\bigg(\,\bigg\Vert\tfrac{\nabla f(\bx^{(k)})}{(\sqrt{\tfrac{\beta_2\,\bw^{(k)}}{1-\beta_2^{k+1}}}+\varepsilon)^{ \tfrac12}}\bigg\Vert^2+\sum_{i=1}^{d}\tfrac{\sigma_i^2}{\sqrt{\tfrac{\beta_2}{1-\beta_2^{k+1}}\,w_i^{(k)}}+\varepsilon}\bigg)\nonumber\\
    &\qquad +\tfrac{\eta\,\sqrt{\varepsilon\,\gamma\,\ell^{(k)}}}{\sqrt{v_{\min}\,(1-\beta_2^{k+1})}}\,\bigg(\,\bigg\Vert\tfrac{\nabla f(\bx^{(k)})}{\big(\sqrt{\tfrac{\beta_2\,\bw^{(k)}}{1-\beta_2^{k+1}}}+\varepsilon\big)^{1/2}}\bigg\Vert^2+\sum_{i=1}^{d}\tfrac{G\,\sigma_i}{ \sqrt{\tfrac{\beta_2}{1-\beta_2^{k+1}}\,w_i^{(k)}}+\varepsilon}\bigg)\nonumber
    \end{align}
    \vspace{-2mm}
    \begin{align}
    &=-\eta\,\Big(1-\tfrac{L\,\eta\,\sqrt{\gamma}}{2\,\sqrt{v_{\min}}}-\tfrac{\sqrt{1-\beta_2}\,G}{\sqrt{v_{\min}\,(1-\beta_2^{k+1})}}-\tfrac{\sqrt{\varepsilon\,\gamma\,\ell^{(k)}}}{\sqrt{v_{\min}\,(1-\beta_2^{k+1})}}\Big)\bigg\Vert\tfrac{\nabla f(\bx^{(k)})}{\big(\sqrt{\tfrac{\beta_2\,\bw^{(k)}}{1-\beta_2^{k+1}}}+\varepsilon\big)^{1/2}}\bigg\Vert^2 \nonumber\\
    &\qquad+\tfrac{\eta\,G\,\sqrt{n\,\varepsilon\,\gamma\,\ell^{(k)}}}{\sqrt{\beta_2}\,v_{\min}}\,\sigma+\big(\tfrac{L\,\eta^2\,\gamma}{2\,v_{\min}}+\tfrac{\eta\,\sqrt{1-\beta_2}\,G}{\sqrt{\beta_2}\,v_{\min}}\big)\,\sigma^2\nonumber.
    \end{align}
The properties $\eta\le \tfrac{\sqrt{v_{\min}}}{2\,L\,\sqrt{\gamma}}$ and $\sum_{j=0}^{k} \beta_2^j\ge\tfrac{4\,G}{\sqrt{v_{\min}}}$ yield 
\begin{align}
\expt\,[\,f(\bx^{(k+1)})\ \big\vert \ \bx^{(k)}\,]- f(\bx^{(k)})
    &\le -\eta\,\Big(\tfrac{1}{2}-\tfrac{\sqrt{\varepsilon\,\gamma\,\ell^{(k)}}}{\sqrt{v_{\min}\,(1-\beta_2^{k+1})}}\Big)\bigg\Vert\tfrac{\nabla f(\bx^{(k)})}{\big(\sqrt{\tfrac{\beta_2\,\bw^{(k)}}{1-\beta_2^{k+1}}}+\varepsilon\big)^{1/2}}\bigg\Vert^2 \nonumber\\
    &\phantom{\le}\ +\tfrac{\eta\,G\,\sqrt{n\,\varepsilon\,\gamma\,\ell^{(k)}}}{\sqrt{\beta_2}\,v_{\min}}\,\sigma+\big(\tfrac{L\,\eta^2\,\gamma}{2\,v_{\min}}+\tfrac{\eta\,\sqrt{1-\beta_2}\,G}{\sqrt{\beta_2}\,v_{\min}}\big)\,\sigma^2.\label{eq:lminle2_adaml}
    \end{align}
The claim follows by noting that $v_{\min}\ge 4\,\varepsilon\,\gamma\,\ell^{(k)}$ and $\sum_{j=0}^k\beta_2^j\ge\tfrac{4\,\varepsilon\,\gamma\,\ell^{(k)}}{(1-\beta_2)\,v_{\min}}$ ensure the non-positivity of the first addend.
\end{proof}

\cref{prop:ladam_monotonicity} suggests that the monotonicity of the AdamL optimizer can be guaranteed under similar conditions to those in \cref{prop:eadam_monotonicity}. Comparing the conditions of case (i) $\sum_{j=0}^k\beta_2^j\ge\tfrac{4\,G}{\sqrt{\varepsilon}}$ and $\sum_{j=0}^k\beta_2^j\ge\tfrac{4\,G}{\sqrt{\gamma\,\varepsilon}}$ in \cref{prop:eadam_monotonicity,prop:ladam_monotonicity}, respectively, a larger value of $\gamma$ in AdamL may force a smaller lower bound of $k$ than the one corresponding to the EAdam optimizer. This suggests that in the AdamL optimizer, fewer number of iterations $k$ are required to ensure the monotonicity of the optimizer compared to EAdam. Moreover, we remark that when $\ell^{(j)}$ approaches $0$,  the condition of case $(ii)$ becomes less strict; therefore, it is likely to be satisfied in the final stages of the executions. 

We conclude the theoretical analysis by stating a result on the linear convergence rate of AdamL for both cases.
\begin{proposition}\label{prop:ladam_convergencerate}
Under the same assumptions of  \cref{prop:ladam_monotonicity}, 
if one of the following two conditions is satisfied
\begin{itemize}
\item[$(i)$] $\ell_{\min}\ge1$,
$\sum_{j=0}^k\beta_2^j\ge \tfrac{4\,G\,}{\sqrt{\gamma\,\varepsilon}}$, $\eta\leq \min \big\{ \tfrac{\sqrt{\beta_2}\,G}{\sqrt{\gamma}\,\mu}, \frac{\sqrt{\varepsilon\,\sum_{j=0}^{k}\beta_2^j}}{2\,L}\big\}$
and $k_0\in\mathbb N$ is such that $k_0>\log_{\beta_2}(1-2\,\sqrt{1-\beta_2})$;
\item[$(ii)$] $\ell_{\max}< 1$,
$v_{\min}> 4\,\varepsilon\,\gamma\,\ell^{(k)}$, $\sum_{j=0}^k\beta_2^j\ge\tfrac{4\,G}{\sqrt{v_{\min}}}$, $\eta\le \min\big\{\tfrac{\sqrt{v_{\min}}}{2\,L\,\sqrt{\gamma}}, \tfrac{\sqrt{\beta_2}\,\sqrt{\tfrac{G}{\gamma\,(\ell_{\min})^{\varphi}}}}{\mu}\big\}$, and $k_0\in\mathbb N$ is such that $k_0>\log_{\beta_2}(1-\tfrac{4\,\varepsilon\,\gamma\,\ell^{(k)}}{v_{\min}})$;
\end{itemize}
then for any $k>k_0$, the random vector $\mathbf{x}^{(k)}$  verifies
{\small \begin{align}\label{eq:linearconvergence_ladam}
\expt\,[\,f(\bx^{(k)})-f^*\,]&\leq (1-2\,\mu\,C_{L})^{k-k_0}\,\expt\,[\,f(\bx^{(k_0)})-f^*\,]+\frac{C_{L,1}}{2\,\mu\,C_{L}}\,\sigma+\frac{C_{L,2}}{2\,\mu\,C_{L}}\,\sigma^2,
\end{align}}\noindent
 where in case $(i)$ we have that $C_{L,1}$ and $C_{L,2}$ satisfy \cref{eq:cl12_case1} by replacing $k$ with $k_0$,
 \begin{align*}
 C_{L}=\Big(\tfrac{1}{2}-\tfrac{1}{\sqrt{(1-\beta_2^{k_0+1})\sum_{j=0}^{k_0}\,\beta_2^{j}}}\Big)\cdot\tfrac{\eta}{\sqrt{\beta_2}\,\big(G\,\sqrt{\tfrac{1}{\gamma}} +\sqrt{\tfrac{\varepsilon\,\ell_{\max}}{1-\beta_2}}\big)},
 \end{align*}
and in case $(ii)$ we have $C_{L,1}$ and $C_{L,2}$ satisfy \cref{eq:cl12_case2},
\begin{align*}
    C_L=\Big(\tfrac{1}{2}-\sqrt{\tfrac{\varepsilon\,\gamma\,\ell^{(k)}}{v_{\min}\,(1-\beta_2^{k_0+1})}}\Big)\cdot\tfrac{\eta}{\sqrt{\beta_2}\,\big(G\,\sqrt{\tfrac{1}{\gamma\,(\ell_{\min})^{\varphi}}}+\sqrt{\tfrac{\varepsilon}{1-\beta_2}}\big)},
\end{align*}
and $0<2\,\mu\,C_{L}(k_0)<1$ for both cases.    
\end{proposition}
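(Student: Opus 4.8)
The plan is to mirror the structure of the proof of \cref{prop:eadam_convergencerate}, handling the two cases exactly as they were split in \cref{prop:ladam_monotonicity}. First I would observe that conditions $(i)$ and $(ii)$ each imply the hypotheses of the corresponding case of \cref{prop:ladam_monotonicity}, so that the per-step descent inequalities \cref{eq:lminle1_adaml} and \cref{eq:lminle2_adaml} are already available; these isolate the weighted gradient term $\big\Vert \nabla f(\bx^{(k)})\,(\sqrt{\beta_2\,\bw^{(k)}/(1-\beta_2^{k+1})}+\varepsilon)^{-1/2}\big\Vert^2$ together with the $\sigma$- and $\sigma^2$-terms whose coefficients are exactly $C_{L,1}$ and $C_{L,2}$.

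The decisive intermediate step is to convert this weighted term into a multiple of $\Vert\nabla f(\bx^{(k)})\Vert^2$, which requires an upper bound on $\sqrt{\beta_2\,w_i^{(k)}/(1-\beta_2^{k+1})}$ that is uniform in $i$. Using \cref{eq:2mon_ladam}, the entrywise estimate $\vert g_i^{(j)}\vert\le G$ from \cref{eq:infnorm}, the bounds $\ell_{\min}\le\ell^{(j)}\le\ell_{\max}$, and the identity $(1-\beta_2)\sum_{j=0}^{k-1}\beta_2^{j}=1-\beta_2^{k}$, I obtain in case $(i)$ (where $\ell_{\min}\ge1$) the bound $\sqrt{\beta_2}\big(G\sqrt{1/\gamma}+\sqrt{\varepsilon\,\ell_{\max}/(1-\beta_2)}\big)$ and in case $(ii)$ (where $\ell_{\max}<1$) the bound $\sqrt{\beta_2}\big(G\sqrt{1/(\gamma\,(\ell_{\min})^{\varphi})}+\sqrt{\varepsilon/(1-\beta_2)}\big)$; these are precisely the denominators appearing in the two stated expressions for $C_L$. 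Substituting into \cref{eq:lminle1_adaml} and \cref{eq:lminle2_adaml} and invoking the PL inequality \cref{eq:PLineq} in the form $\expt\,[\Vert\nabla f(\bx^{(k)})\Vert^2]\ge 2\mu\,\expt\,[f(\bx^{(k)})-f^*]$ yields the one-step contraction $\expt\,[f(\bx^{(k+1)})-f^*]\le(1-2\mu C_L(k))\,\expt\,[f(\bx^{(k)})-f^*]+C_{L,1}(k)\sigma+C_{L,2}(k)\sigma^2$, with $C_L(k)$ the quantity in the statement. As in \cref{prop:eadam_convergencerate}, $C_L(k)$ is nondecreasing while $C_{L,1}(k),C_{L,2}(k)$ are nonincreasing in $k$, so freezing all three at $k_0$ preserves the inequality for every $k\ge k_0$, and unrolling with the geometric sum $\sum_{j=0}^{k-k_0}(1-2\mu C_L(k_0))^{j}\le 1/(2\mu C_L(k_0))$ gives \cref{eq:linearconvergence_ladam}.

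The main obstacle is verifying the two-sided bound $0<2\mu C_L(k_0)<1$. Nonnegativity of $C_L(k_0)$ reduces to showing the parenthetical factor is nonnegative, which follows from the threshold $k_0>\log_{\beta_2}(1-2\sqrt{1-\beta_2})$ in case $(i)$ and $k_0>\log_{\beta_2}(1-4\varepsilon\gamma\ell^{(k)}/v_{\min})$ in case $(ii)$, exactly as the analogous thresholds were exploited in \cref{prop:eadam_convergencerate}. The upper bound $2\mu C_L(k_0)\le1$ is the delicate part: it requires combining the constraints $\eta\le\sqrt{\beta_2}\,G/(\sqrt{\gamma}\,\mu)$ (case $(i)$) or $\eta\le\sqrt{\beta_2}\,\sqrt{G/(\gamma(\ell_{\min})^{\varphi})}/\mu$ (case $(ii)$) with the summation lower bounds through the same algebraic chain that collapsed to $1$ for EAdam. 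The extra factors $\gamma$, $\ell_{\min}$, $\ell_{\max}$, and $\ell^{(k)}$ make the bookkeeping in case $(ii)$ noticeably heavier, and I expect that checking the $\ell$-dependent denominator of $C_L$ is compatible with the $\eta$-bound is where the care is needed; once this is in place, the remaining manipulations are routine.
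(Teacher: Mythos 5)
Your proposal is correct and follows essentially the same route as the paper's proof: it reuses the per-step inequalities \cref{eq:lminle1_adaml} and \cref{eq:lminle2_adaml} from \cref{prop:ladam_monotonicity}, derives exactly the paper's uniform upper bounds $\sqrt{\beta_2}\big(G\sqrt{1/\gamma}+\sqrt{\varepsilon\,\ell_{\max}/(1-\beta_2)}\big)$ and $\sqrt{\beta_2}\big(G\sqrt{1/(\gamma\,(\ell_{\min})^{\varphi})}+\sqrt{\varepsilon/(1-\beta_2)}\big)$ on $\sqrt{\beta_2\,w_i^{(k)}/(1-\beta_2^{k+1})}$, applies the PL inequality, freezes the constants at $k_0$, and unrolls the geometric recursion, verifying $0<2\mu C_L(k_0)<1$ from the $k_0$-thresholds and the $\eta$-bounds just as the paper does (mirroring the EAdam argument). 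The only caveat is that the final algebraic verification of $2\mu C_L(k_0)\le 1$ is sketched rather than carried out, but the chain you indicate is precisely the one the paper executes.
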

\begin{proof}
     On the basis of \cref{eq:2mon_ladam}, we have that \begin{align*}
        \tfrac{1}{1-\beta_2^{k+1}}\,\bw^{(k)}&\leq \tfrac{1}{1-\beta_2^{k}}\,\bw^{(k)}=\tfrac{1-\beta_2}{1-\beta_2^{k+1}}\sum_{j=0}^{k} \beta_2^{k-j} \tfrac{(\bg^{(j)})^{ 2}}{\gamma\,(\ell^{(j)})^{\varphi}}  +\tfrac{1}{1-\beta_2^{k+1}}\varepsilon\,\sum_{j=0}^{k} \beta_2^{k-j}\ell^{(j)}\nonumber\\
        &\le \tfrac{(1-\beta_2)\,G^{2}}{\gamma\,(1-\beta_2^{k+1})}\sum_{j=0}^{k}  \tfrac{\beta_2^{k-j}}{(\ell^{(j)})^{\varphi}}  +\tfrac{1}{1-\beta_2^{k+1}}\varepsilon\,\sum_{j=0}^{k} \beta_2^{k-j}\ell^{(j)}\nonumber.
    \end{align*} 
    
\noindent \fbox{$(i)$} If $\ell_{\min}\ge1$, then
\begin{align*}
        \sqrt{\tfrac{\beta_2}{1-\beta_2^{k+1}}\,\bw^{(k)}}
        &\le \sqrt{\tfrac{\beta_2\,(1-\beta_2)\,G^{2}}{\gamma\,(1-\beta_2^{k+1})}\sum_{j=0}^{k} \beta_2^{j}  +\tfrac{\beta_2}{1-\beta_2^{k+1}}\varepsilon\,\sum_{j=0}^{k} \beta_2^{k-j}\ell^{(j)}}\nonumber\\
        &\le \sqrt{\tfrac{\beta_2}{\gamma}}\,G +\sqrt{\tfrac{\beta_2}{1-\beta_2^{k+1}}\varepsilon\,\sum_{j=0}^{k} \beta_2^{k-j}\ell^{(j)}}\nonumber\\
        &\le \sqrt{\beta_2}\,\big(G\,\sqrt{\tfrac{1}{\gamma}} +\sqrt{\tfrac{\varepsilon\,\ell_{\max}}{1-\beta_2}}\big)\nonumber.
    \end{align*} 
Substituting the latter into \cref{eq:lminle1_adaml} and taking the expectation from both sides, we obtain
\begin{align}
 \expt\,[&\,f(\bx^{(k+1)})\,]- \expt\,[\,f(\bx^{(k)})\,]\nonumber\\
&\le  -\underbrace{\eta \,\big(\tfrac{1}{2}-\tfrac{1}{\sqrt{\sum_{j=0}^{k}\,\beta_2^{j}}\,\sqrt{1-\beta_2^{k+1}}}\big)\tfrac{1}{\sqrt{\beta_2}\,\big(G\,\sqrt{\tfrac{1}{\gamma}} +\sqrt{\tfrac{\varepsilon\,\ell_{\max}}{1-\beta_2}}\big)}}_{C_{L}(k)}\,\expt\,[\,\Vert\nabla f(\bx^{(k)})\Vert^2\,]\nonumber\\
&\qquad+ \underbrace{\tfrac{\eta\,G\,\sqrt{n}}{\sqrt{\beta_2\,\varepsilon}\sum_{j=0}^{k-1}\beta_2^{j}}}_{C_{L,1}(k)}\,\sigma+\underbrace{\tfrac{\eta}{\varepsilon\,\sum_{j=0}^{k-1}\beta_2^{j}}\Big(\tfrac{L\,\eta}{2}+\tfrac{\sqrt{1-\beta_2}\,G}{\sqrt{\beta_2\,\gamma}}\Big)}_{C_{L,2}(k)}\,\sigma^2 .\nonumber
\end{align}
On the basis of  $\expt\,[\,\Vert \nabla f(\bx^{(k)})\Vert^2\,]\geq 2\mu \,\expt\,[\,f(\bx^{(k)})-f^*\,]$ and $\log_{\beta_2}(1-2\,\sqrt{1-\beta_2})\le k_0$, for $k_0<k$, we have that $C_{L}(k_0)\le C_{L}(k)$, $C_{L,1}(k_0)\le C_{L,1}(k)$, and $C_{L,2}(k_0)\le C_{L,2}(k)$, which in turn yield
\begin{align}
    \expt\,[\,f(\bx^{(k+1)})\!-f^*\,]\leq (1\!-2\,\mu\,C_{L}(k_0))\,\expt\,[\,f(\bx^{(k)})-f^*\,]+ C_{L,1}(k_0)\,\sigma+C_{L,2}(k_0)\,\sigma^2.\nonumber
\end{align}
The assumption $\log_{\beta_2}(1-2\,\sqrt{1-\beta_2})\le k_0$ implies $\sqrt{(1-\beta_2^{k_0+1})\,\sum_{j=0}^{k_0}\beta_2^j}\ge 2$; together with the property $\mu\le\tfrac{\sqrt{\beta_2}\,G}{\sqrt{\gamma}\,\eta}$, we have that $$\mu\,\eta\le \sqrt{\beta_2}\,(\frac{G}{\sqrt{\gamma}}+\sqrt{\frac{\varepsilon\,\ell_{\max}}{1-\beta_2}})\le \tfrac{\sqrt{\beta_2}\,(\frac{G}{\sqrt{\gamma}}+\sqrt{\frac{\varepsilon\,\ell_{\max}}{1-\beta_2}})\,\sqrt{(1-\beta_2^{k_0+1})\,\sum_{j=0}^{k_0}\beta_2^j}}{\sqrt{(1-\beta_2^{k_0+1})\,\sum_{j=0}^{k_0}\beta_2^j}-2},$$ which in turn indicates that $1-2\,\mu\,C_{L}(k_0)\ge0$. 
With the same argument used to get \cref{eq:condi1_eadam_pl}, we achieve \cref{eq:linearconvergence_ladam}.

\noindent \fbox{$(ii)$} If $\ell_{\max}<1$, we have that
\begin{align*}
        \sqrt{\tfrac{\beta_2}{1-\beta_2^{k+1}}\,\bw^{(k)}}
        &\le \sqrt{\tfrac{\beta_2}{\gamma\,(\ell_{\min})^{\varphi}}}\,G +\sqrt{\tfrac{\beta_2}{1-\beta_2^{k+1}}\,\varepsilon\,\sum_{j=0}^{k} \beta_2^{j}}\nonumber\\
        &\le \sqrt{\beta_2}\,\big(\sqrt{\tfrac{1}{\gamma\,(\ell_{\min})^{\varphi}}}\,G +\sqrt{\tfrac{\varepsilon}{1-\beta_2}}\big)\nonumber.
    \end{align*} 
Substituting the latter into \cref{eq:lminle2_adaml} and taking the expectation from both sides, we obtain
\begin{align}
\expt\,[\,&f(\bx^{(k+1)})\,]- \expt\,[\,f(\bx^{(k)})\,]\nonumber\\
    &\le -\underbrace{\eta\,\Big(\tfrac{1}{2}-\tfrac{\sqrt{\varepsilon\,\gamma\,\ell^{(k)}}}{\sqrt{v_{\min}\,(1-\beta_2^{k+1})}}\Big)\tfrac{1}{\sqrt{\beta_2}\,\big(\sqrt{\tfrac{1}{\gamma\,(\ell_{\min})^{\varphi}}}\,G +\sqrt{\tfrac{\varepsilon}{1-\beta_2}}\big)}}_{C_{L}(k)}\,\expt\,[\,\Vert\nabla f(\bx^{(k)})\Vert^2\,] \nonumber\\
    &\qquad+\underbrace{\tfrac{\eta\,G\,\sqrt{n\,\varepsilon\,\gamma\,\ell^{(k)}}}{\sqrt{\beta_2}\,v_{\min}}}_{C_{L,1}(k)}\,\sigma+\underbrace{\tfrac{\eta}{v_{\min}}\big(\tfrac{L\,\eta\,\gamma}{2}+\tfrac{\sqrt{1-\beta_2}\,G}{\sqrt{\beta_2}}\big)}_{C_{L,2}(k)}\,\sigma^2.\nonumber
    \end{align}
In view of $v_{\min}\ge 4\,\varepsilon\,\gamma\,\ell^{(k)}$ and $k_0>\log_{\beta_2}(1-\tfrac{4\,\varepsilon\,\gamma\,\ell^{(k)}}{v_{\min}})$, we have $\frac{2\,\sqrt{\varepsilon\,\gamma\,\ell^{(k)}}}{\sqrt{v_{\min}\,(1-\beta_2^{k_0+1})}}\le 1$. Combining the latter with  $\mu\le \tfrac{\sqrt{\beta_2}\,\sqrt{\tfrac{1}{\gamma\,(\ell_{\min})^{\varphi}}}\,G}{\eta}$ yields $\mu\,\eta\le \beta_2\,\big(\sqrt{\tfrac{1}{\gamma\,(\ell_{\min})^{\varphi}}}\,G +\sqrt{\tfrac{\varepsilon}{1-\beta_2}}\big)\le \tfrac{\sqrt{\beta_2}\,\big(\sqrt{\tfrac{G}{\gamma\,(\ell_{\min})^{\varphi}}} +\sqrt{\tfrac{\varepsilon}{1-\beta_2}}\big)\,\sqrt{1-\beta_2^{k_0+1}}}{\sqrt{1-\beta_2^{k_0+1}}-\tfrac{2\sqrt{\varepsilon\,\gamma\,\ell^{(k)}}}{\sqrt{v_{\min}}}}$, which in turn implies $1-2\mu\,C_{L}(k_0)\ge0$.
\end{proof}
In \cref{prop:ladam_convergencerate}, the coefficient in front of $\sigma$ depends on the value of $\ell^{(k)}$ in both cases. Intuitively, as $\ell^{(k)}$ approaches zero, indicating that the objective function value approaches its optimal value, the coefficient associated with $\sigma$ converges to zero.  Increasing the value of $\gamma$ yields smaller coefficients in front of the $\sigma$-dependent terms in case (i), and larger coefficients in case (ii). 
Additionally, a larger value of $\gamma$ raises the chance that the AdamL optimizer exhibits the convergence behavior described in case (i). The result about case $(ii)$  also shows that a smaller value of $\ell_{\min}$ results in larger coefficients in front of $\sigma$ and $\sigma^2$.
Finally, a smaller value of $\ell_{\min}$ leads to a smaller value of $C_L$, which in turn results in a slower convergence rate of the optimizer. 
\section{Experimental results}
\label{sec:experiments}
This section is dedicated to the evaluation and comparison of the performances of the various optimizers analyzed in this work.  First, we consider two benchmark case studies, namely the Three-Hump Camel function and the Rosenbrock function, to show that AdamL makes good use of the information from the loss function and gets better results with respect to the other Adam's variants. More precisely, in the Three-Hump Camel function case, we observe that AdamL has a higher chance of converging to the global optimum than its competitors. In the Rosenbrock function case, all optimizers converge to the global minimum but AdamL necessitates fewer iteration steps. Since in this experiment, AdamL stays always in the Non-Adaptive mode, we also compare it with GD with momentum to showcase that they have similar performances. We remark that the Rosenbrock function satisfies the assumptions (cf.~\cref{eq:PLineq} and \cref{eq:Lineq}) of our convergence analysis, only when restricting $\bx$ to a compact set. For instance, it has empirically shown that it satisfies \cref{eq:Lineq} and \cref{eq:PLineq} with $L=2610$ and $\mu=0.2$ for $\bx\in [0,2]^2$~\cite[Section 6.2]{xia2023convergence}. Our numerical results show that the convergence rate of AdamL exhibits linearity with varying slopes as $x^{(k)}$ enters different regions of the Rosenbrock function.
 For both Rosenbrock and Three-Hump Camel functions, we employ the default parameters used in Adam, i.e., $\beta_1=0.9$, $\beta_2=0.999$, and $\eta=0.001$. For AdamL we set $\gamma=1$, and $\ell^{(k)}=f(\bx^{(k)})$.  Similar to the strategy in \cite[Sec.~3]{zhuang2020adabelief}, we have tuned the values of the parameters $\varepsilon$ and $\varphi$, by searching for $\varepsilon$ among $\{10^{-4}, 10^{-8}, 10^{-12}\}$, and $\varphi$ among $\{1, 2, 4\}$. The best results, which are the only ones reported in this document, have been obtained by employing $\varepsilon=10^{-4}$ for EAdam, AdaBelief, and AdamL and $\varphi=1$ in the AdamL optimizer.  The experiments on these benchmark functions are implemented using Matlab on CPUs. 

In addition to these standard benchmark functions, similar to prior works like \citep{luoadaptivebound,zhuang2020adabelief, yuan2020eadam}, we test the optimizers on machine learning case studies as training tasks and network types. These tasks include image classification with convolutional NNs (CNNs) on \textsf{CIFAR10} and \textsf{CIFAR100} datasets, language modeling with long short term memory networks (LSTM) on \textsf{Penn Treebank} corpus, and training Wasserstein-GANs (WGANs) on \textsf{CIFAR10} and \textsf{Anime Faces} datasets\footnote{https://github.com/learner-lu/anime-face-dataset}. For each case study, we conduct the simulation five times and present the average results. Furthermore, we compare the performances of AdamL using two strategies: one that employs Adam for estimating the minimum and maximum objective function values, and another that utilizes \cref{ag:strategyI} for training WGAN and LSTM. We show that these two strategies provide comparable convergence speed and accuracy. 
The implementations of EAdam\footnote{https://github.com/yuanwei2019/EAdam-optimizer} and AdaBelief\footnote{https://github.com/juntang-zhuang/Adabelief-Optimizer} are based on open-source codes, ensuring precise reproductions. All these experiments are implemented using PyTorch on the Tesla V100 PCIe 16GB GPUs. 

\subsection{Three-Hump Camel function}\label{sec:threehump}
The Three-Hump Camel function $f(\bx)=2\,x_1^2-1.05\,x_1^4+\frac{1}{6}\,x_1^6+x_1\,x_2+x_2^2$ is a simple multimodal function with three local minima. It is often evaluated within the square for which $x_1, x_2 \in [-5, 5]$; the global minimum is $0$ which is attained at $[0,0]^T$. 
\begin{figure}[htp]
    \centering
    \subfloat[]{\label{fig:3hump_1}\includegraphics[width=0.33\textwidth]{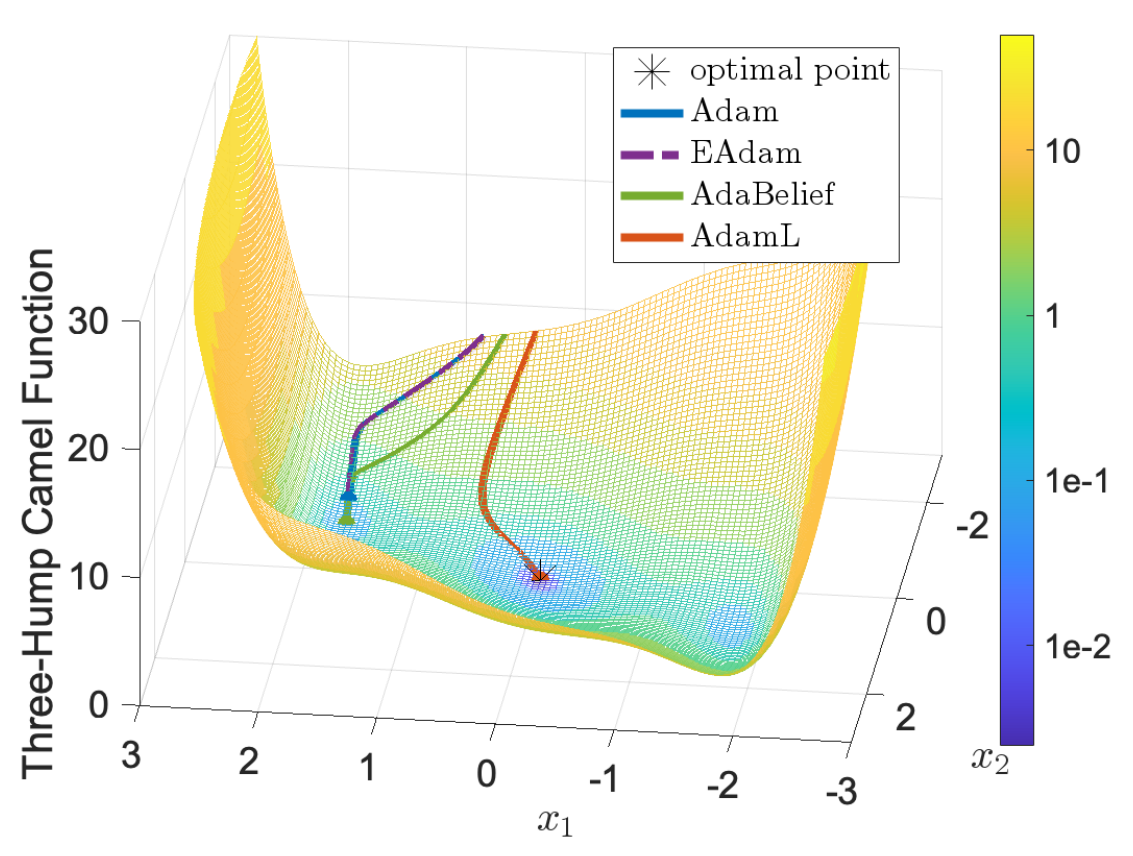}}
    \subfloat[]{\label{fig:3hump_2}\includegraphics[width=0.33\textwidth]{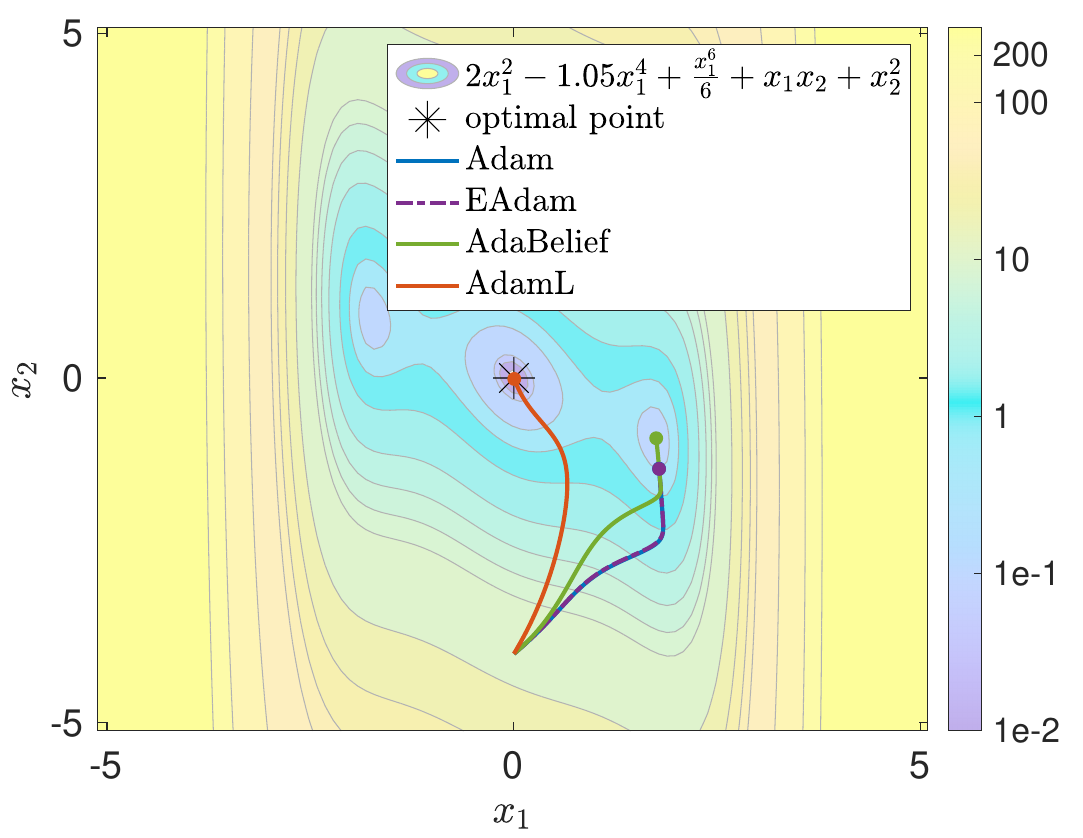}}
    \subfloat[]{\label{fig:3hump_3}\includegraphics[width=0.33\textwidth]{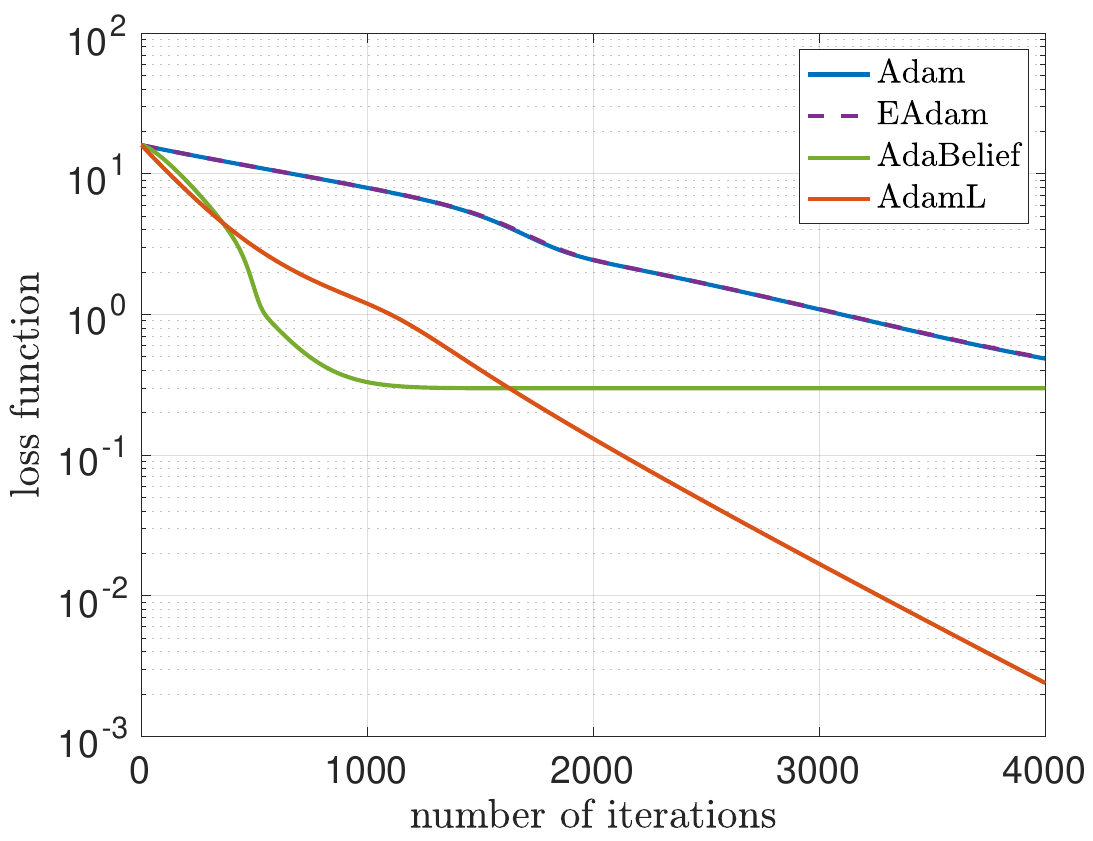}}
    \caption[Three-Hump Camel Function: trajectories of different optimizer in contour and mesh plots, and corresponding objective function values.]{Three-Hump Camel Function: trajectories of the various optimizers in the contour (a) and mesh (b) plot, and the corresponding objective function values (c), when the starting point is $\bx^{(0)}=[0,-4]^T$ and after $4000$ iteration steps.}
    \label{fig:3hump_x0_0-4}
\end{figure}
\cref{fig:3hump_x0_0-4} shows the trajectories of the various optimizers in the mesh plot, contour map, and their objective function values, with the starting point $\bx^{(0)}=[0,-4]^T$. Only AdamL converges to the global optimum; the other optimizers lead $\bx^{(k)}$ to a local optimum. It can be seen from \cref{fig:3hump_1} that the gradient and curvature near the local optima are small, which is very similar to the region of $x_7$, $x_8$, and $x_9$ in \cref{fig:curve2}. From \cref{fig:3hump_1,fig:3hump_2}, it can be seen that the gradients and curvatures around the global optimum are larger than those around the other local optima. As discussed in \cref{sec:adaml}, Adam and EAdam take large updating steps in the direction of small gradients, whereas AdaBelief takes large updating steps in the presence of small curvatures. From \cref{fig:3hump_2,fig:3hump_3}, it can be observed that the trajectories of Adam and EAdam are identical, which implies that the accumulated $\varepsilon$ in the second moment estimate of EAdam does not affect the convergence behavior in this study. By considering the loss function values, when the gradients are small and loss function values are large, the AdamL optimizer takes large updating steps in the directions that lead to a decrease in the loss function. This explains why, for this starting point, AdamL is the only method that converges to the global optimum.

\begin{figure}[htp]
    \centering
    \subfloat[]{\label{fig:3hump_-4-4}\includegraphics[width=0.33\textwidth]{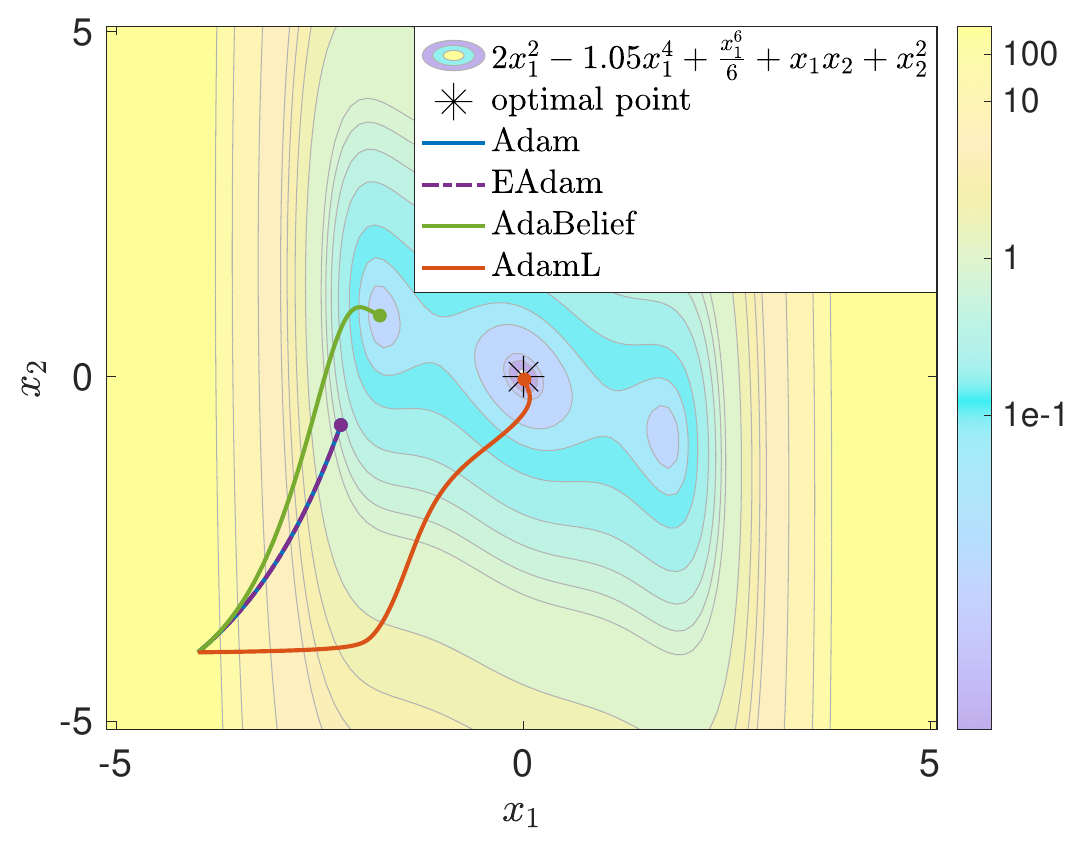}}
    \subfloat[]{\label{fig:3hump_4-4}\includegraphics[width=0.33\textwidth]{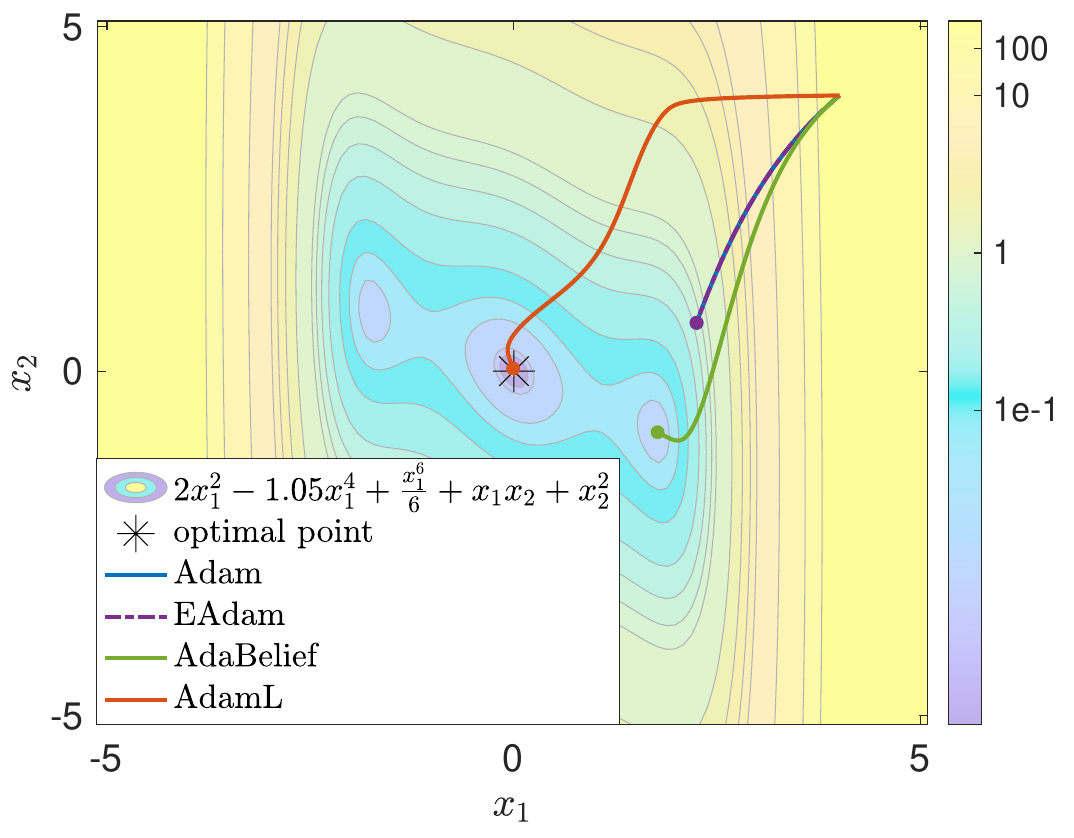}}
    \subfloat[]{\label{fig:3hump_40}\includegraphics[width=0.33\textwidth]{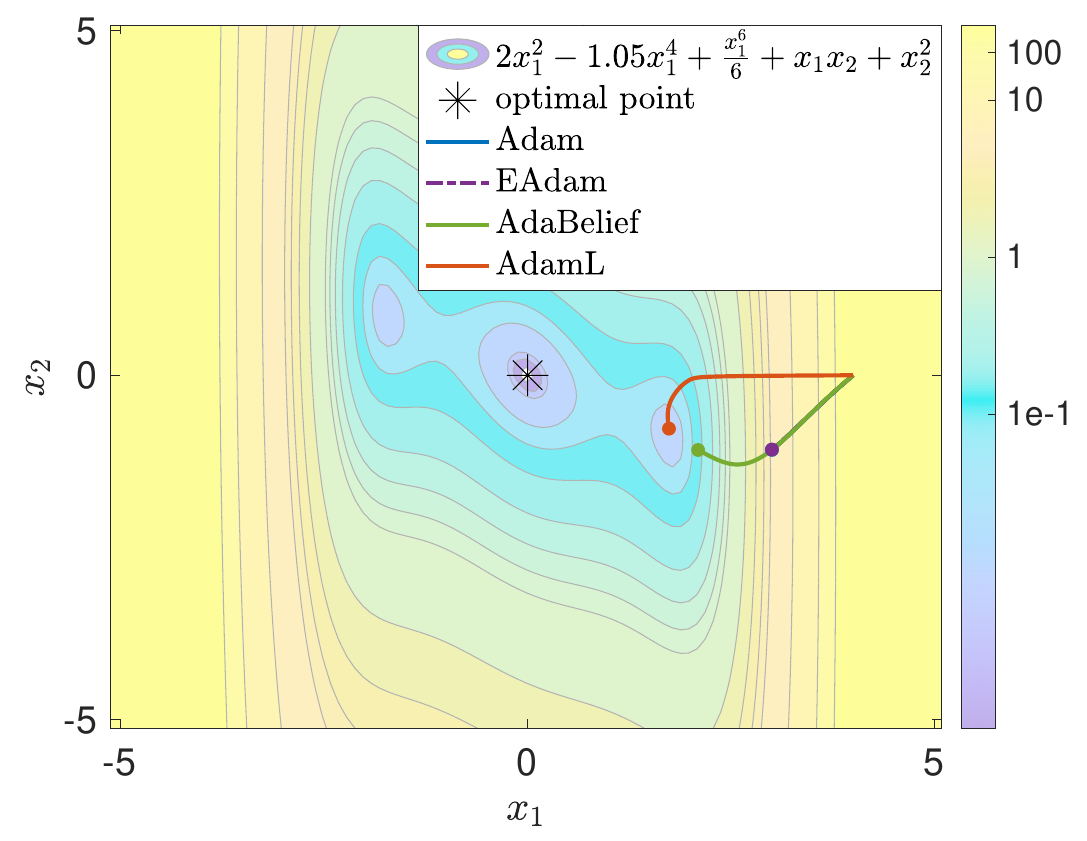}}
    \caption[Three-Hump Camel Function: trajectories of different optimizer with different starting points]{Three-Hump Camel Function: trajectories of different optimizer in the contour plot when the starting points are $\bx^{(0)}=[-4,-4]^T$ (a), $\bx^{(0)}=[4,4]^T$ (b) and $\bx^{(0)}=[4,0]^T$ (c), after $4000$ iteration steps.}
    \label{fig:3hump_x0different}
\end{figure}

In \cref{fig:3hump_x0different}, we repeat the experiment with different starting points, i.e., $\bx^{(0)}=[-4,-4]^T$ (\cref{fig:3hump_-4-4}), $\bx^{(0)}=[4,4]^T$ (\cref{fig:3hump_4-4}) and $\bx^{(0)}=[4,0]^T$ (\cref{fig:3hump_40}). In all cases, the convergence speed of the Adabelief optimizers is higher than that of Adam and EAdam, as the former takes into account the curvature of the function. We remark that AdamL performs better than its competitors in all tests by either finding the global optimum or obtaining the smallest values for the objective function. 

\subsection{Rosenbrock function}\label{sec:rosenbrock}
For the minimization of the Rosenbrock function, we choose the same settings as those applied in \cref{sec:threehump} for adaptive optimizers ($\eta=10^{-3}$). Additionally, we also run GD with momentum, to showcase that the AdamL consistently employs its Non-Adaptive Mode. In the use of GD with momentum, we apply two different settings. In the first setup, a constant $\eta=10^{-4}$ is used,  to maintain monotonicity. In the second scenario, we manually increase the value of $\eta$ from $\eta=10^{-4}$ to $\eta=10^{-3}$ when $\bx$ reaches the long and narrow flat valley of the Rosenbrock function. Specifically, $\eta$ is increased by a factor of 10 when $f<1$. Empirically, we observe that choosing $\eta=10^{-3}$ for the initial step of GD causes divergence.

\begin{figure}[htp]
\centering
% \vspace{-4mm}
\subfloat[]{\label{fig:rosenbga}\includegraphics[width=0.38\textwidth]{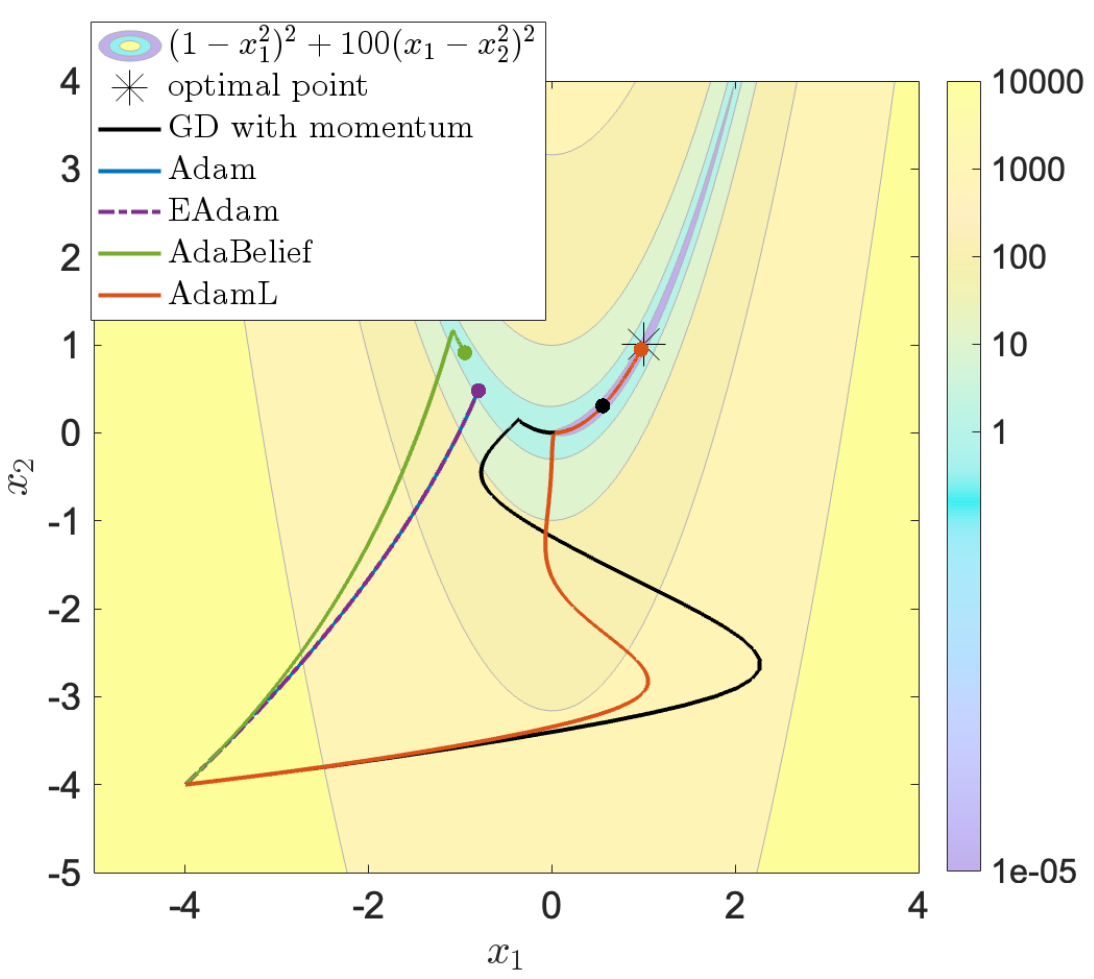}}\,
\subfloat[]{\label{fig:rosenbgb}\includegraphics[width=0.42\textwidth]{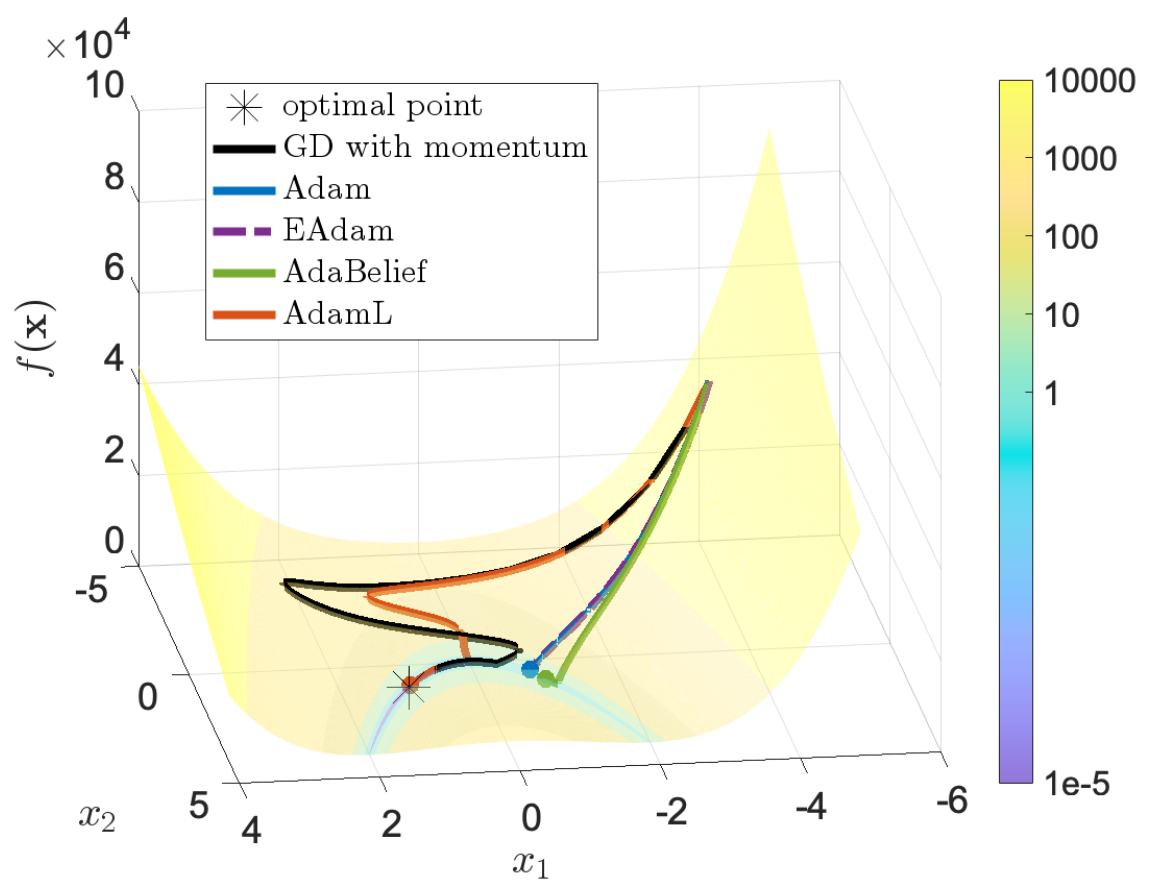}}\\
\subfloat[]{\label{fig:rosenbgc}\includegraphics[width=0.42\textwidth]{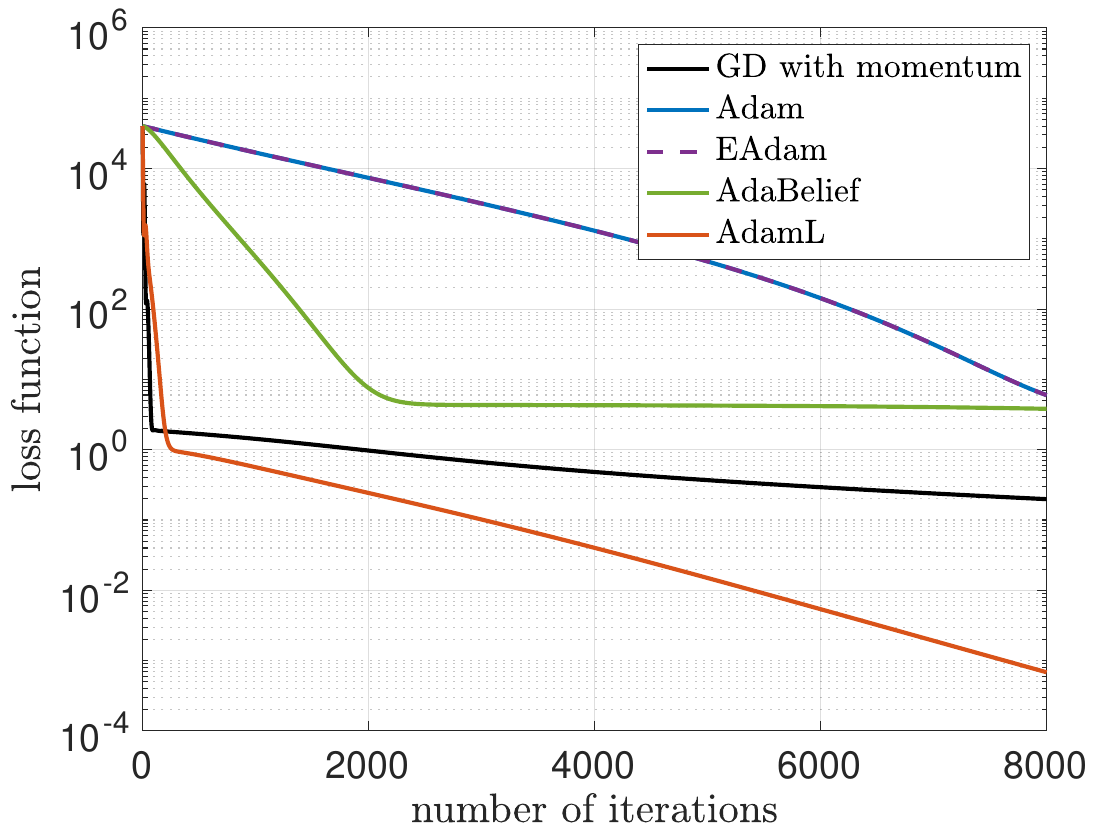}}\,
\subfloat[]{\label{fig:rosenb_f2}\includegraphics[width=0.42\textwidth]{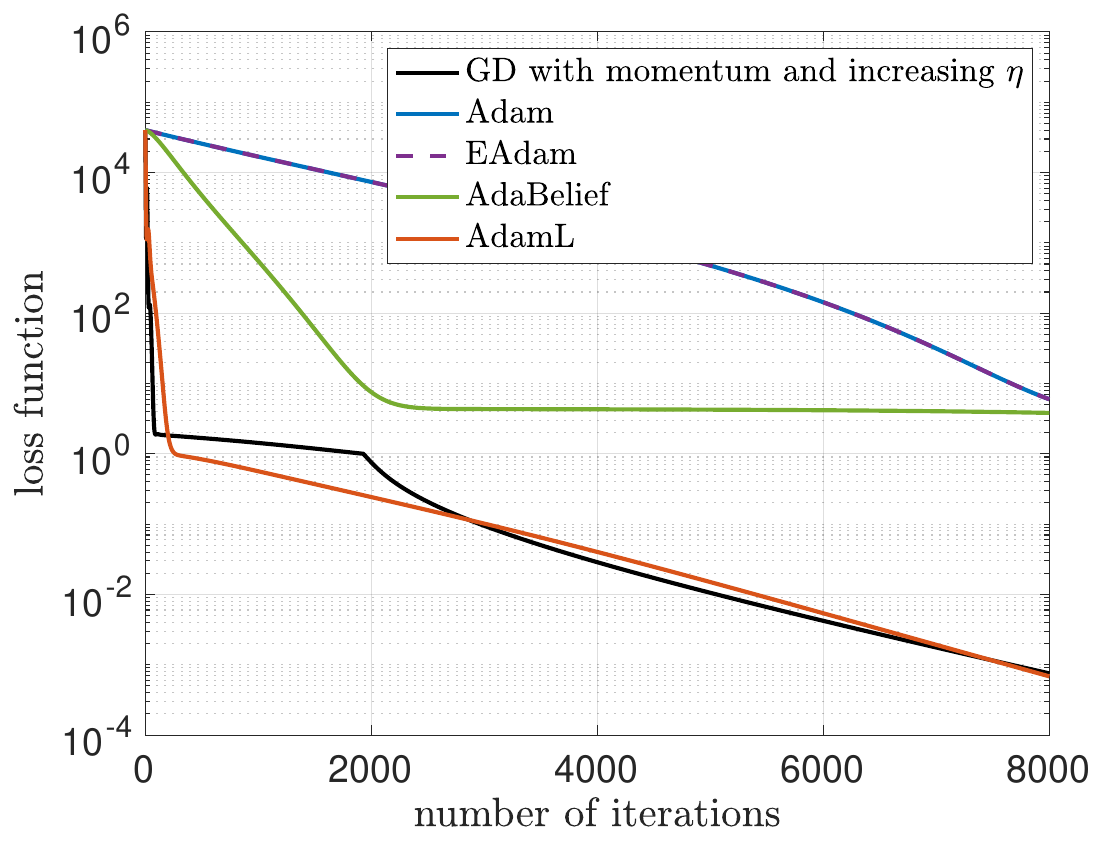}}
\caption[Rosenbrock function: trajectories of different optimizer in contour and mesh plots, and corresponding objective function values]{Rosenbrock function: trajectories of different optimizers in the contour (a) and mesh (b) plot, the corresponding objective function values when $\eta$ is constant (c), and increasing $\eta$ by a factor of 10 in GD with momentum (d), when $\bx^{(0)}=[-4,-4]^T$ and after $8000$ iteration steps.}\label{fig:rosenbrock}
\end{figure}
\cref{fig:rosenbga,fig:rosenbgb} show the trajectories of GD with momentum with constant $\eta=10^{-4}$ and various adaptive optimizers, including Adam, EAdam, AdaBelief, and AdamL, in both contour and mesh plots, respectively. Additionally, \cref{fig:rosenbgc} depicts the corresponding objective function values after 8000 iteration steps. In \cref{fig:rosenb_f2}, we report the results of GD with momentum with $\eta$ increasing from $\eta=10^{-4}$ to $\eta=10^{-3}$ when $f<1$.
 In \cref{fig:rosenbga}, we see that different optimizers cause $\bx^{(k)}$ to update in different directions. In particular, we note that the EAdam and AdaBelief optimizers update in a direction that is almost aligned with that of the Adam optimizer. This suggests that the Adaptive Mode is activated in EAdam and AdaBelief. Furthermore, the Adam and EAdam optimizers take larger steps in the directions characterized by small gradients, whereas Adabelief favors larger steps in directions where the curvatures are shallow. Therefore, they quickly reach the narrow and flat valley. From \cref{fig:rosenbga}, we see that GD with momentum and AdamL force $\bx^{(k)}$ to update in a similar direction, suggesting that the Non-Adaptive Mode is activated in AdamL. However, from \cref{fig:rosenbga,fig:rosenbgc}, it is evident that the convergence rates of AdamL and GD with momentum are different when $\bx^{(k)}$ reaches the narrow valley near the optimum. When manually increasing $\eta$ by a factor of 10 in GD with momentum, as shown in \cref{fig:rosenb_f2}, the convergence rates of AdamL and GD with momentum become comparable. This implies that AdamL always uses the Non-adaptive updating strategy. 
However, as it is discussed in \cref{sec:adaml} when the Non-Adaptive Mode is activated in AdamL, the inclusion of objective function values in the second moment estimate results in an increasing stepsize as the objective function value decreases. Consequently, AdamL obviates the need for manually increasing stepsize, unlike GD with momentum, and attains faster convergence in comparison to GD with a constant stepsize.

\subsection{CNNs for image classification}
In this section, we compare the performances of the various optimizers in image classification problems by training some common types of convolutional neural networks (CNNs), i.e., \textsf{VGG11}, \textsf{ResNet18}, and  \textsf{DenseNet121}, on the \textsf{CIFAR10} and \textsf{CIFAR100} datasets. All these CNNs use the softmax activation function in the output layer and optimize a multi-class cross-entropy loss function using different optimizers during backward propagation. For a detailed structure of these CNNs, see \citep{simonyan2015very,he2016deep,huang2017densely}. We apply the default setting of Adam, recommended in \citep{kingma2014adam}, for all the optimizers, i.e., $\beta_1=0.9$, $\beta_2=0.999$, $\eta=0.001$, and $\varepsilon=10^{-8}$. 

To appropriately choose the function $\ell^{(k)}$ and the value of $\varphi$ in the AdamL optimizer, we look at the range of the objective function values for these CNNs. Because of the use of the softmax activation function and the multi-class cross-entropy loss function, the loss function values are non-negative, with the optimal value close to $0$, i.e., $f^* \approx 0$. Additionally, the typical maximum values of the loss function, observed when optimizing these CNNs, are around $1$. In view of these remarks, we set $\ell^{(k)}=f(\bx^{(k)})$ for all the experiments contained in this subsection. Additionally, the loss function values vary depending on the network architectures and training datasets. When the network architecture is not sufficiently complex to effectively represent the features in the training dataset, the resulting minimum value of the loss function will often be significantly distant from zero. For example, when comparing \textsf{VGG11} to \textsf{ResNet18} and  \textsf{DenseNet121} on \textsf{CIFAR10} at the same iteration, \textsf{VGG11} may yield higher loss function values (cf.~\cite[Fig.~1 in Appendix E]{zhuang2020adabelief}).  In this case, a large value $\varphi$ is necessary to ensure that the minimum loss function approaches the optimal value as closely as possible. This choice facilitates a smooth transition between the Non-Adaptive and Adaptive modes within the AdamL optimizer. As discussed after \cref{eq:2ndmoment_adaml}, $\varphi$ should also be chosen such that $(\ell^{(k)})^{\varphi}$ reacts sensitively as $\ell^{(k)}$ approaches $0$. This requires choosing an appropriate value for $\varphi$: \emph{large enough for \textsf{VGG11}, but not too big for \textsf{ResNet18} and  \textsf{DenseNet121}}. For this reason, we define $\varphi$ as a function of the training error $e_{\mathrm{train}}$ (\%) in the exponent; more precisely we set $\varphi=\max \{4,4+\log_{10}(e_{\mathrm{train}})\}$.
With this choice, $\varphi$ is set to $4$ when the training error is $1\%$ or lower and a higher value of $\varphi$ when the training error is larger than $1\%$.

In \citep{luoadaptivebound,zhuang2020adabelief,yuan2020eadam}, to improve the training accuracy, the learning rates $\eta$ of the optimizers are manually reduced by a factor of $0.1$ when the training accuracy reaches a steady state. Instead of manually decreasing the value of $\eta$, the AdamL optimizer uses $\gamma$ to control the transition from SGD to the adaptive method. As it is stated in \cref{sec:hyperparametersinAdamL}, a larger value of $\gamma$ postpones the shift to the Adaptive Mode. We start with the default value $\gamma=1$ and monitor the curve of training accuracy until we observe that it becomes a steady state. If it reaches a steady state too early, we increase $\gamma$ by a factor of 10 and observe the effect on training accuracy. If the change is minimal, we consider larger increments, such as multiplying $\gamma$ by $50$ or $100$. If we notice a significant improvement in training accuracy, we choose to make smaller adjustments to $\gamma$, typically multiplied by a factor less than $10$. This way, we generate the results associated with AdamL in \cref{fig:class_cifar10,fig:class_cifar100}; the latter illustrates the convergence behavior of the method for different values of $\gamma$ when training CNNs on the \textsf{CIFAR10} and \textsf{CIFAR100} datasets. 
\begin{figure}[htp]
\centering
\subfloat[\textsf{VGG11} on \textsf{CIFAR10}]{\label{fig:vgg_cifar10_1}\includegraphics[width=0.45\textwidth]{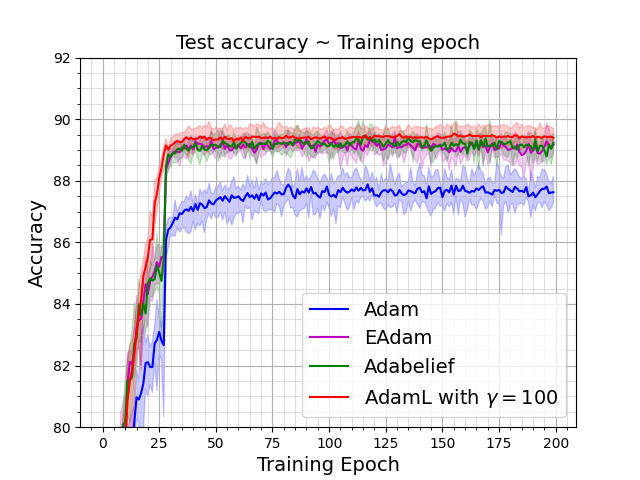}}\,
\subfloat[\textsf{VGG11} on \textsf{CIFAR10}]{\label{fig:vgg_cifar10_2}\includegraphics[width=0.45\textwidth]{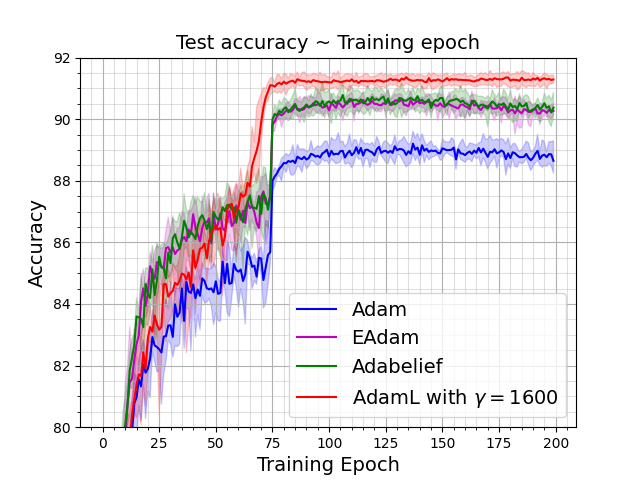}}\\
\subfloat[ \textsf{ResNet18} on \textsf{CIFAR10}]{\label{fig:resnet_cifar10_1}\includegraphics[width=0.45\textwidth]{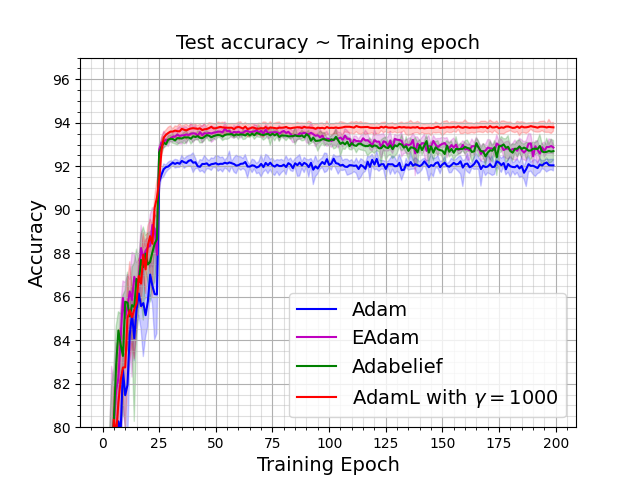}}\,
\subfloat[ \textsf{ResNet18} on \textsf{CIFAR10}]{\label{fig:resnet_cifar10_2}\includegraphics[width=0.45\textwidth]{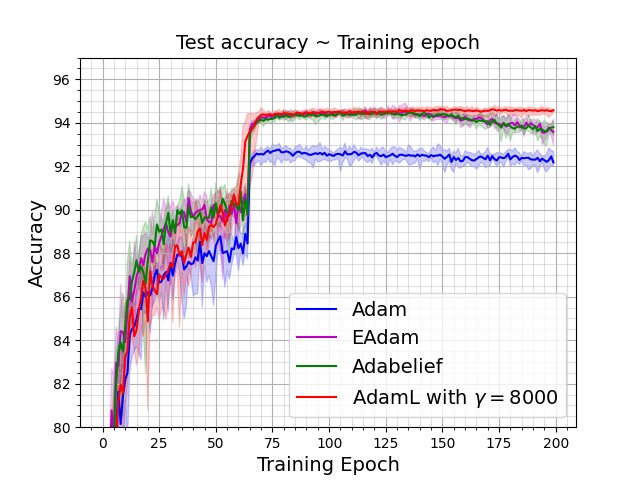}}\\
\subfloat[ \textsf{DenseNet121} on \textsf{CIFAR10}]{\label{fig:densenet_cifar10_1}\includegraphics[width=0.45\textwidth]{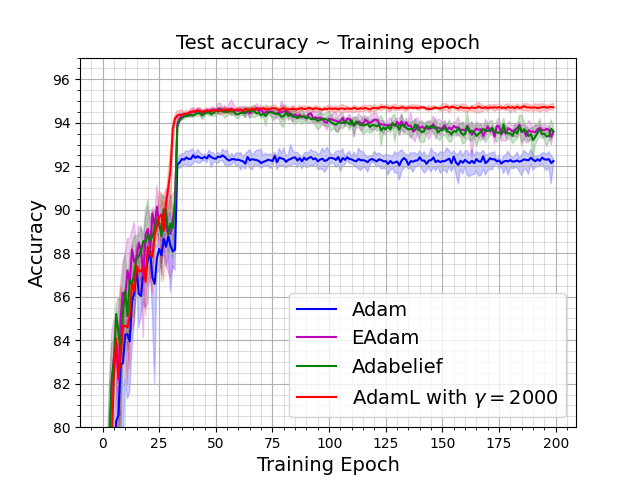}}\,
\subfloat[ \textsf{DenseNet121} on \textsf{CIFAR10}]{\label{fig:densenet_cifar10_2}\includegraphics[width=0.45\textwidth]{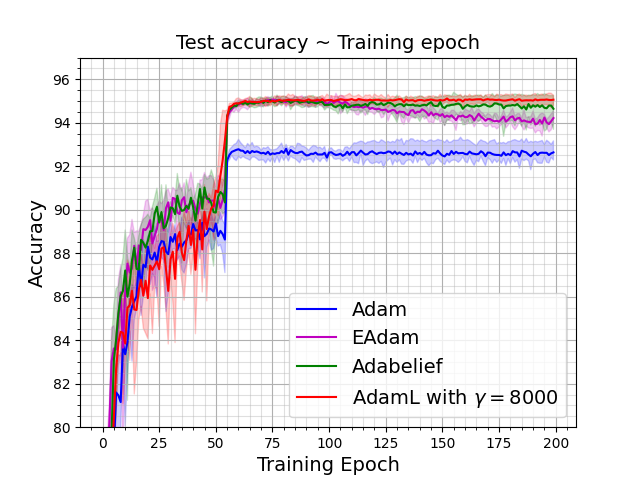}}
\caption[CIFAR10 Dataset: Testing accuracy of CNNs]{Mean values of testing accuracy of CNNs over 5 independent runs on \textsf{CIFAR10} dataset trained using Adam, Adabeleif, and AdamL with early (left column) and late (right column) decay epoch.}\label{fig:class_cifar10}
\end{figure}
\begin{figure}[htp]
\centering
\subfloat[\textsf{VGG11} on \textsf{CIFAR100}]{\label{fig:vgg_cifar100_1}\includegraphics[width=0.45\textwidth]{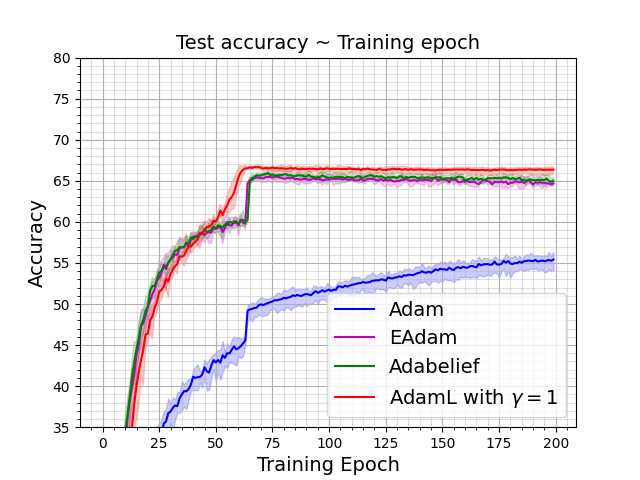}}\,
\subfloat[\textsf{VGG11} on \textsf{CIFAR100}]{\label{fig:vgg_cifar100_2}\includegraphics[width=0.45\textwidth]{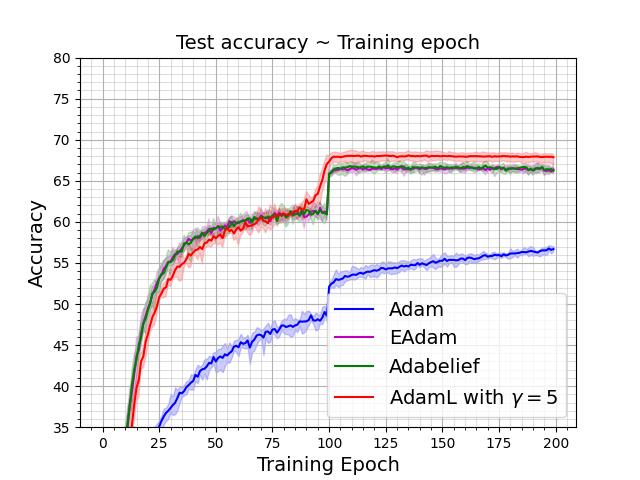}}\\
\subfloat[ \textsf{ResNet18} on \textsf{CIFAR100}]{\label{fig:resnet_cifar100_1}\includegraphics[width=0.45\textwidth]{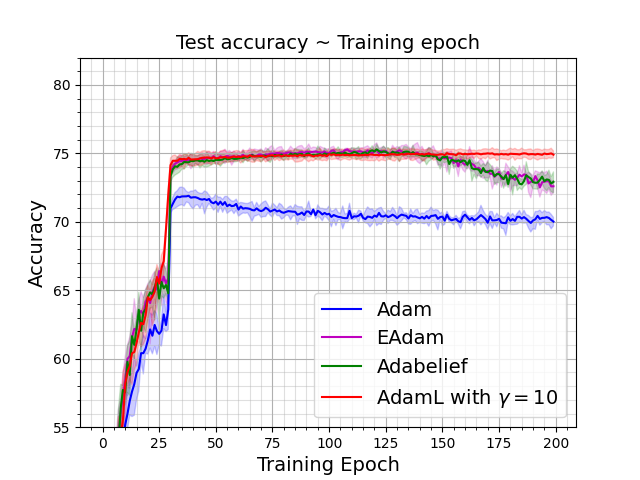}}\,
\subfloat[ \textsf{ResNet18} on \textsf{CIFAR100}]{\label{fig:resnet_cifar100_2}\includegraphics[width=0.45\textwidth]{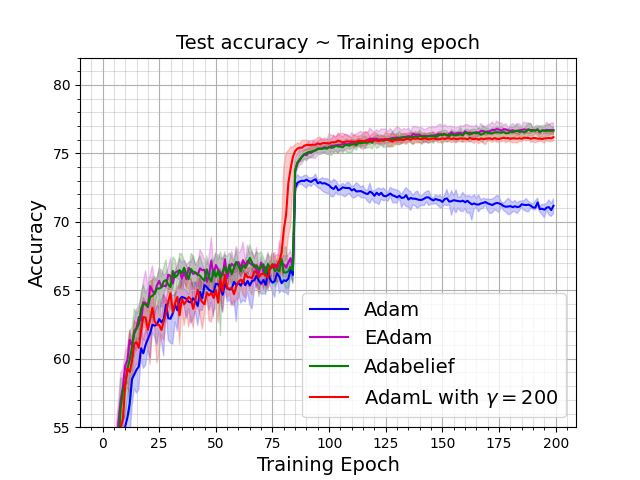}}\\
\subfloat[ \textsf{DenseNet121} on \textsf{CIFAR100}]{\label{fig:densenet_cifar100_1}\includegraphics[width=0.45\textwidth]{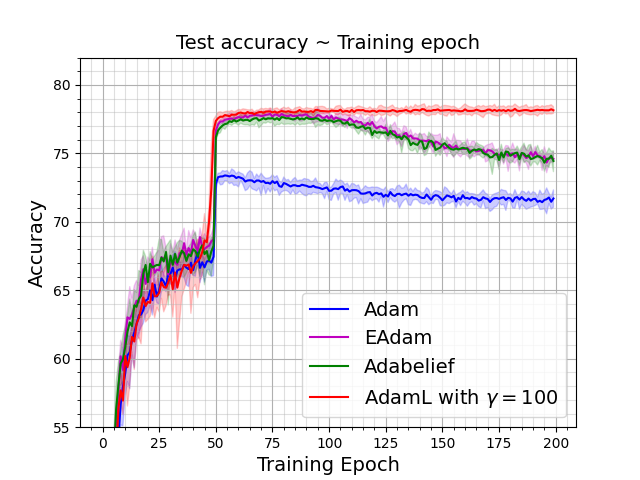}}\,
\subfloat[ \textsf{DenseNet121} on \textsf{CIFAR100}]{\label{fig:densenet_cifar100_2}\includegraphics[width=0.45\textwidth]{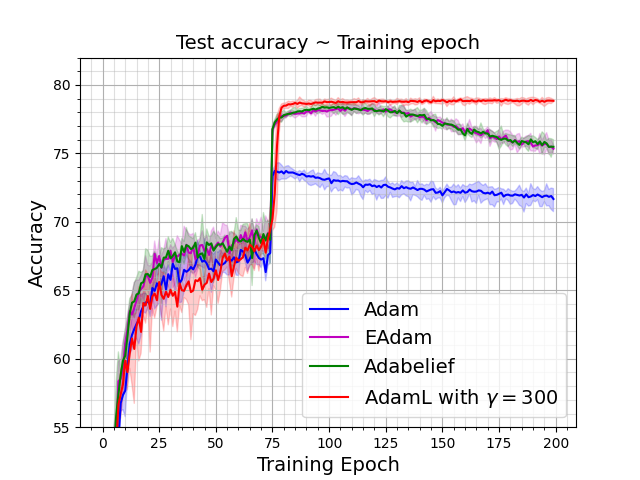}}
\caption[\textsf{CIFAR100} Dataset: Testing accuracy of CNNs]{Mean values of testing accuracy of CNNs over 5 independent runs on \textsf{CIFAR100} dataset trained using Adam, Adabelief, and AdamL with early (left column) and late (right column) decaying epoch.}\label{fig:class_cifar100}
\end{figure}
The performances of AdamL are compared with those obtained by running Adam, EAdam, and AdaBelief and manually reducing the learning rate in the same training epoch when AdamL switches from the Non-Adaptive to the Adaptive mode. \cref{fig:class_cifar10,fig:class_cifar100} show comparison of testing accuracies achieved by various optimizers for \textsf{VGG11},  \textsf{ResNet18}, and  \textsf{DenseNet121} when trained on the \textsf{CIFAR10} and \textsf{CIFAR100} datasets, respectively. It can be observed that the AdamL optimizer demonstrates higher testing accuracies and faster convergence compared to other optimizers when training \textsf{VGG11}. In the cases of  \textsf{ResNet18} and  \textsf{DenseNet121}, the highest testing accuracies obtained by EAdam, AdaBelief, and AdamL are very similar. When decreasing the learning rate at the early stage of training, both EAdam and AdaBelief show unstable convergence, as defined in \citep{ahn2022understanding}. Additionally, these two optimizers demonstrate similar testing accuracies across all numerical tests in this subsection. Increasing the value of $\gamma$ in the AdamL optimizer improves testing accuracies across all CNNs in \cref{fig:class_cifar10,fig:class_cifar100}, but it simultaneously leads to slower convergence. Furthermore, the Adam optimizer consistently produces the lowest testing accuracies among all the optimizers. By comparing \cref{fig:class_cifar10,fig:class_cifar100}, we can draw a similar conclusion for \textsf{CIFAR100} as we do for \textsf{CIFAR10}. 

It is worth noting that although tuning the value of $\gamma$ for different CNNs and datasets typically involves monitoring changes in training errors, \cref{fig:class_cifar10,fig:class_cifar100} demonstrate that it is not necessary to postpone tuning $\gamma$ until the later stages of training. In most cases, within the first 25 training epochs, it is possible to determine if the chosen $\gamma$ is appropriate.
\subsection{Experiments with generative adversarial network}
Adaptive optimizers such as Adam and RMSProp are the workhorse for the training of generative adversarial networks (GANs) because training GANs with SGD often fails.
Wasserstein GAN (WGAN) is modified based on GAN to improve the training stability by using the Wasserstein distance as the loss function \citep{wgan2017}. 
A thorough comparison between the AdaBelief optimizer and other optimization methods in training WGAN and its improved version with gradient penalty (WGAN-GP) has been carried out in \cite[Fig.6]{zhuang2020adabelief}. To ensure a fair comparison, we first replicate the same experiments using AdaBelief and, subsequently, evaluate the performances of the EAdam and AdamL optimizers based on the code published in the
GitHub repository\footnote{https://github.com/juntang-zhuang/Adabelief-Optimizer}. For a detailed description of the network structure, see \cite[Tab.~1 in Appendix F]{zhuang2020adabelief}. To observe the convergence behavior of each optimizer, the Fréchet Inception Distance (FID) \cite[(6)]{heusel2017gans} between the fake images and the real dataset is computed every 20 epochs by generating 64,000 fake images from random noise. Note that the FID score is a common metric to capture the similarity of generated images to real ones, and is considered to have consistency with increasing disturbances and human judgment \citep{heusel2017gans}. In both WGAN and WGAN-GP, the settings for Adabelief and EAdam are consistent with those in \cite[Fig.~6]{zhuang2020adabelief}, i.e., $\beta_1=0.5$, $\beta_2=0.999$, $\varepsilon=10^{-12}$, $\eta=2\cdot 10^{-4}$. We clip the weight of the discriminator to a range of $-0.01$ to $0.01$ in the WGAN experiment and we set the weight for the gradient penalty to $10$ in the case of WGAN-GP, to be consistent with the settings in \citep{zhuang2020adabelief}. 

As discussed in \cref{sec:hyperparametersinAdamL}, the generator aims to minimize the Wasserstein distance \cite[Eq.~(2)]{wgan2017}, while the discriminator/critic aims to maximize it. Additionally, the parameters of the discriminator are updated more frequently compared to that of the generator (cf.~\cite[Algorithm.~1]{wgan2017}), and this yields a more accurate approximation of the Wasserstein distance \cite[Sec.~3]{wgan2017}. Therefore, we only apply AdamL for updating the parameters of the discriminator. As for the generator, we set $\ell^{(k)}=1$ for all $k$, which is equivalent to the EAdam optimizer. 
In the context of training WGAN with AdamL, we present the results obtained based on two strategies for defining $\ell^{(k)}$. 
The first approach employs AdaBelief to estimate the output of the discriminator given fake generated data, and subsequently defining $\ell^{(k)}$ as $\tfrac{f(\bx_{f}^{(k)})-f_{\min}}{f_{\max}-f_{\min}}$. The second strategy uses \cref{ag:strategyI} for the automated adjustment of $\ell^{(k)}$. 

\textbf{Estimating $\ell^{(k)}$ using Adabelief.} In the training of WGAN, the discriminator tries to maximize its output given real training data, denoted as $f(\bx_{r}^{(k)})$, and minimizes its output given fake generated data, denoted as $f(\bx_{f}^{(k)})$, simultaneously. 
We use the output of the discriminator given fake generated data as our loss function values in the AdamL optimizer. Empirically, we find that when using the AdaBelief optimizer, the output of the discriminator given fake generated data typically falls within the range of $0.34\le f(\bx_{f}^{(k)})\le 0.6$. Therefore, by scaling the loss by using the scheme $\ell^{(k)}=\tfrac{f(\bx_{f}^{(k)})-0.32}{0.2}$, the updating stepsize will gradually decrease once the loss falls below $0.52$; this ensures $\ell^{(k)}\ge 0.1$. 
Numerically, we find that choosing $\gamma =1$ and reducing the updating stepsize by a factor of $100$ (by setting $\varphi=4$), is good enough for training WGANs in our experiments. 
Similarly, we set $\ell^{(k)}=\tfrac{f(\bx_{f}^{(k)})+100}{100}$, $\gamma=1$, and $\varphi=4$ in the experiment of WGAN-GP.

In the experiments presented in \cite[Fig.6]{zhuang2020adabelief}, the AdaBelief optimizer performs better than the other Adam's variants and  SGD, as it yields the lowest average FID score in both WAGN and WGAN-GP. Therefore, here we only compare with the results obtained by AdaBelief and EAdam (not considered in \cite{zhuang2020adabelief}).  We provide the averaged FID scores along with their maximum and minimum values from 5 independent simulations in \cref{fig:wganandwgangp} and summarize their detailed values and the $95\%$ confidence intervals in \cref{tab:fid_wgan}. We remark that in the training of WGAN, the AdamL optimizer yields the fastest convergence and lowest FID scores. In particular, the Adabelief optimizer yields an average FID score of approximately $82.23$ by the 100th training epoch. However, with the AdamL optimizer, an analogous value of $82.71$ can be attained in just 20 training epochs. After conducting 100 training epochs, the AdamL optimizer achieves an average FID score of $60.4$. Additionally, when comparing its performance to EAdam and AdaBelief, the AdamL optimizer exhibits the smallest deviation. Conversely, the EAdam optimizer performs the worst, yielding the highest FID score and the largest deviation.

In the training of WGAN-GP (cf.~\cref{fig:fidwgangp}), AdamL does not clearly outperform its competitors as observed in the case of WGAN. WGAN-GP introduces a penalty to the loss function, ensuring that the optimal result is associated with gradients having a norm close to 1. In particular, achieving the smallest loss function value doesn't necessarily coincide with a gradient norm approaching 1. This suggests that in WGAN-GP, using large step sizes may not be advantageous as the loss function values approach 0. However, AdamL is designed to employ a large step size as the loss function value approaches 0 and this explains the less good performance on this case study.

\begin{figure}[htp]
\centering
\subfloat[]{\label{fig:fidwgan}\includegraphics[width=0.48\textwidth]{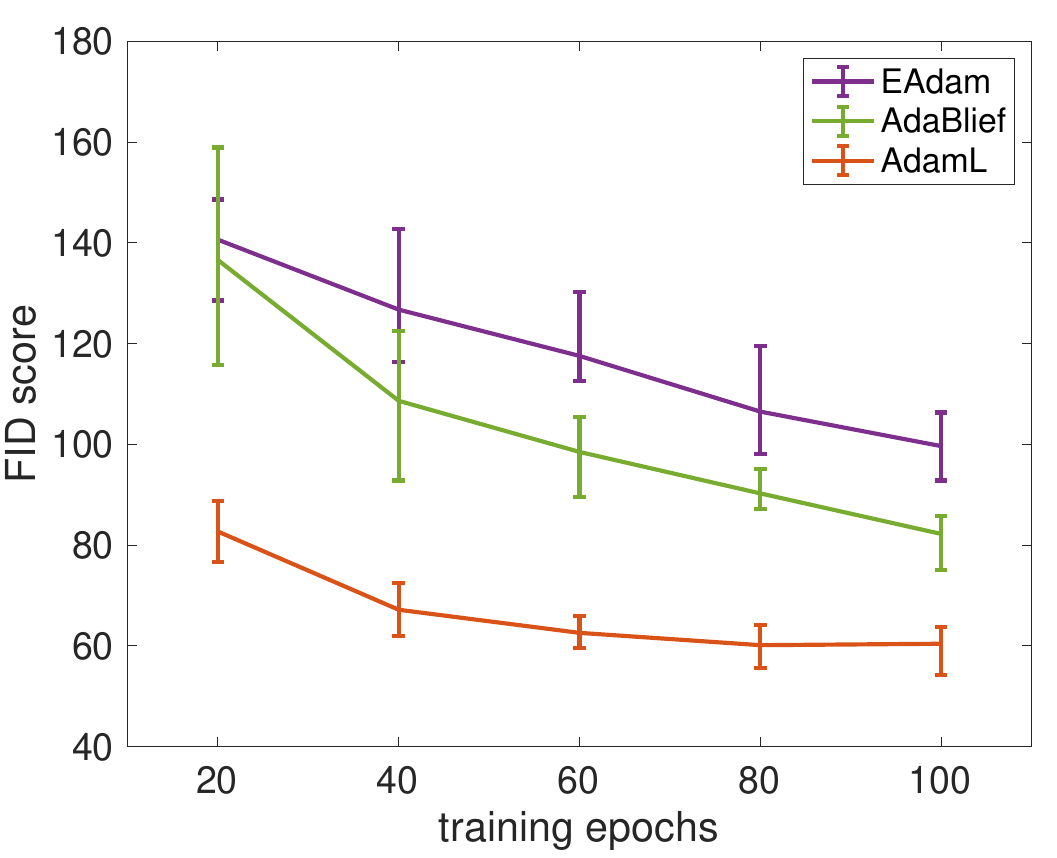}}\,\subfloat[]{\label{fig:fidwgangp}\includegraphics[width=0.48\textwidth]{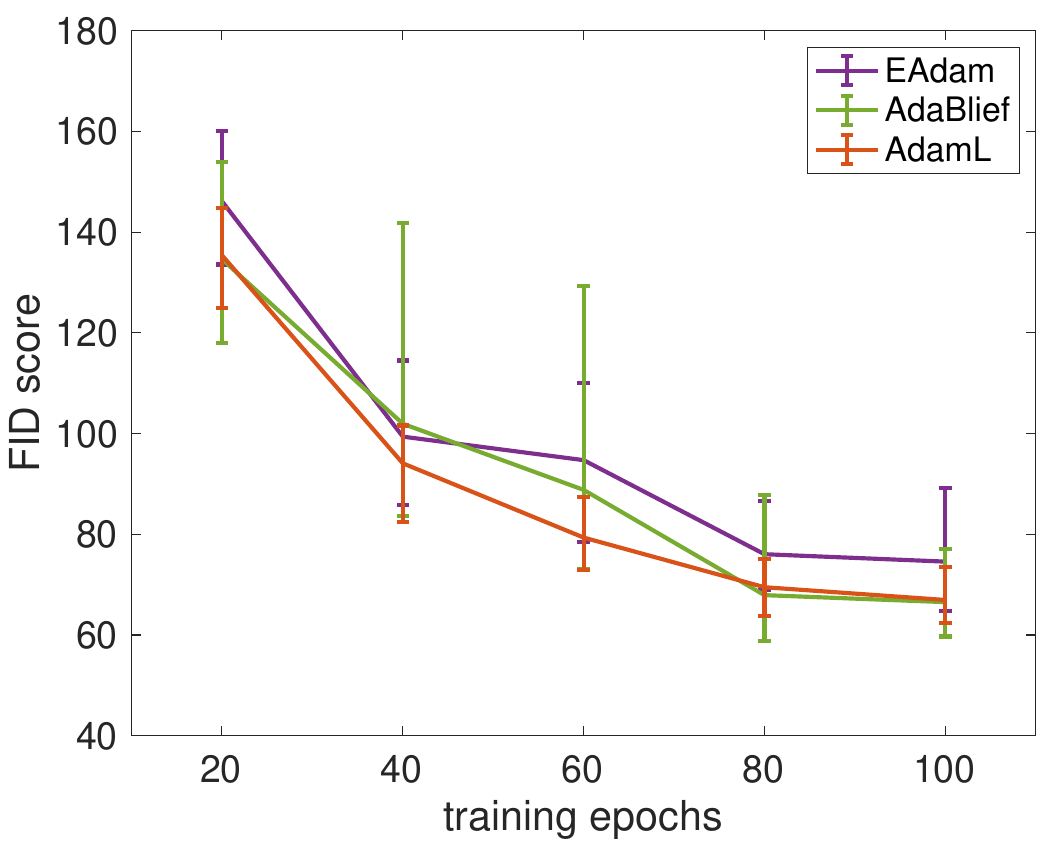}}
\caption[Mean of FID scores for WGAN and WGAN-GP]{Mean of FID scores over 5 independent runs with an error bar of the maximum and minimum FID values for WGAN (a) and WGAN-GP (b).}\label{fig:wganandwgangp}
\end{figure}
\begin{table}[htp]
\caption{\footnotesize Comparison of FID score (lower is better) of training WGAN using a vanilla CNN generator on \textsf{CIFAR10} between AdaBelief and AdamL; for each training epoch it shows the mean value and its $95$\% ($\pm\,2\sigma$) confidence interval over 5 independent runs.}\label{tab:fid_wgan}
\centering
\begin{adjustbox}{max width=\textwidth}
\begin{tabular}{lllllll}
\hline
&Optimizer & \phantom{MM}$20$th & \phantom{MM}$40$th & \phantom{MM}$60$th &\phantom{MM}$80$th &\phantom{MM}$100$th\\\hline \rule{0pt}{2.3ex}
\multirow{2}{3.5em}{\footnotesize WGAN} 
& EAdam  &  $140.6 \pm 20.1$ &$126.8 \pm 20.9$ & $117.6 \pm 14.4$  &$106.5 \pm 16.5$ &  $99.7 \pm 12.5$\\
\rule{0pt}{2.3ex} 
& AdaBelief   &   $136.6 \pm 32.5 $ & $108.7 \pm   26.2 $ &  $\phantom{1}98.5 \pm   13.4   $  & $\phantom{1}90.3 \pm\phantom{1}6.1    $  & $82.2\pm\phantom{1}8.3$\\
\rule{0pt}{2.3ex} 
& AdamL     &  $\phantom{1}82.7\pm 10.7$  &  $\phantom{1}67.2\pm \phantom{1}8.9 $   & $\phantom{1}62.6\pm\phantom{1} 6.0$ &$  \phantom{1}60.2 \pm \phantom{1} 7.0 $ &  $  60.4\pm\phantom{1}7.4$   \\ 
\rule{0pt}{2.3ex}  
\multirow{2}{3.8em}{{\footnotesize WGAN-GP}} 
& EAdam &$146.1\pm19.1$  & $\phantom{1}99.4 \pm20.2$ & $\phantom{1}94.7\pm24.7  $  & $\phantom{1}76.0\pm 12.7 $  & $74.6\pm  16.8$\\
\rule{0pt}{2.3ex} 
& AdaBelief                         &   $134.6 \pm 28.5 $ & $101.9 \pm   38.2 $ &  $\phantom{1}88.8 \pm    37.6   $  & $\phantom{1}67.9\pm  17.8    $  & $66.5\pm11.8$\\
\rule{0pt}{2.3ex}      
& AdamL     &  $135.4\pm 12.8$  &  $\phantom{1}94.1\pm 11.0 $   & $\phantom{1}79.3 \pm10.0$ &$ \phantom{1} 69.5 \pm \phantom{1}8.1 $ &  $  67.0 \pm\phantom{1} 6.5$    
\\ \hline
\end{tabular}
\end{adjustbox}
\end{table}

In \cref{fig:wganplot} of \cref{appendixA}, we compare the fake samples generated from the WGAN trained using the AdaBelief and AdamL optimizers after $40$ training epochs. Notably, the samples produced with AdamL exhibit significantly clearer images compared to those generated with AdaBelief, which are still slightly blurred.
For a more intuitive visualization, we conduct training of the WGAN on the Anime Faces dataset\footnote{https://github.com/learner-lu/anime-face-dataset}. We remark that the parameters of AdamL are chosen in accordance with those used for \textsf{CIFAR10}. In \cref{appendixA}, we illustrate the fake images generated by the WGAN, trained with different optimization algorithms, at the 20th, 80th, and 150th training epochs (cf.~\cref{fig:wgan_anime}).  It can be seen that at the 20th training epoch, the generated images obtained by Adam, EAdam, and AdaBelief demonstrate a deficiency in clarity and manifest distorted facial features. Conversely, the images generated with the AdamL optimizer exhibit better quality. With an increasing number of iterations, the quality of the generated images improves further. Remarkably, even at the 40th training epoch, the images produced by AdamL already show better quality than those generated by other optimizers at the 100th training epoch in terms of FID score.

\textbf{Automatically estimating $\ell^{(k)}$ using \cref{ag:strategyI}.} Let us replicate the experiment of training WGAN on the Anime Faces dataset using \cref{ag:strategyI} and keeping the same parameter configuration,  $\varphi=4$ and $\gamma=1$. \cref{fig:wgan_anime_compare} shows the mean FID scores over 5 independent runs, along with error bars indicating the maximum and minimum values. It can be observed that when using the AdamL optimizer, employing \cref{ag:strategyI} to automatically estimate $\ell^{(k)}$ yields FID scores similar to those obtained when $f$ is known (estimated using AdaBelief). Also, the FID scores obtained using Adam, EAdam, and AdaBelief show this behavior. Typically, the FID scores obtained with these optimizers are around $97$ with 150 training epochs, while AdamL achieves a similar FID score of approximately $99$ with only $60$ training epochs.
\begin{figure}[ht!]
\centering
\includegraphics[width=0.55\textwidth]{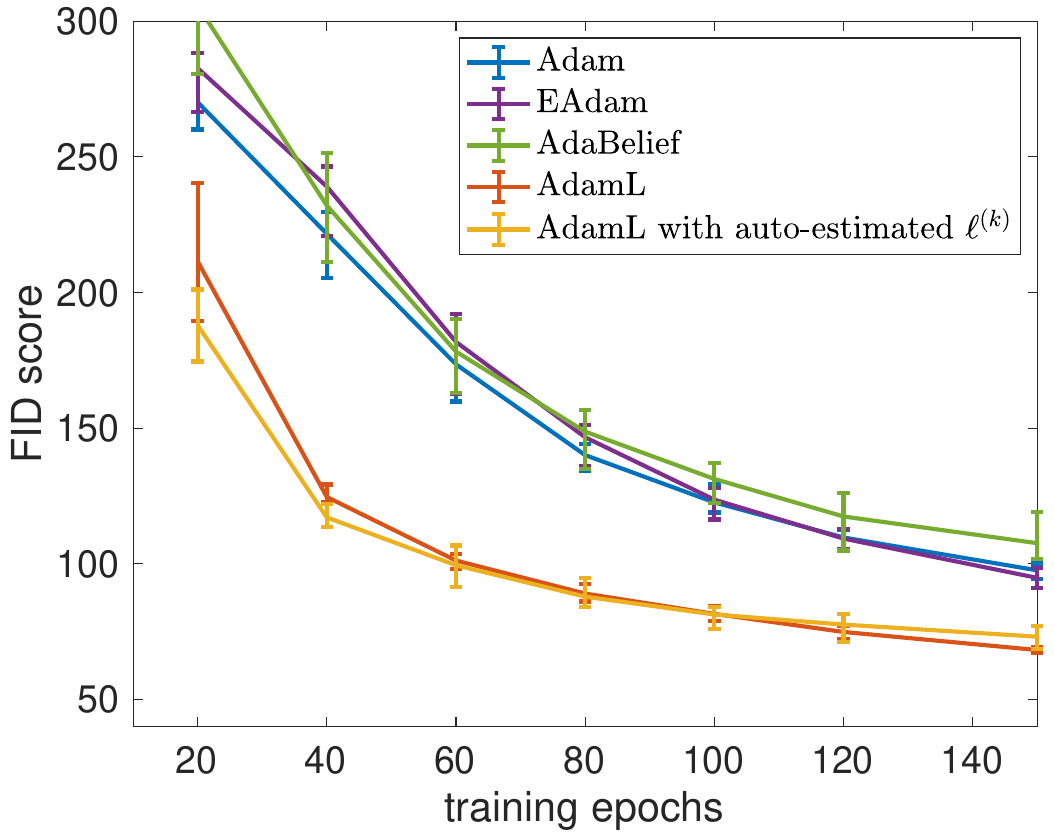}
\caption[AnimeFaces Dataset: Comparison of two strategies for estimating $\ell^{(k)}$ in AdamL for training WGAN]{AnimeFaces Dataset: Mean of FID scores over 5 independent runs with error bars of the maximum and minimum FID values for WGAN.}\label{fig:wgan_anime_compare}
\end{figure}
\begin{figure}[ht!]
\centering
\includegraphics[width=0.55\textwidth]{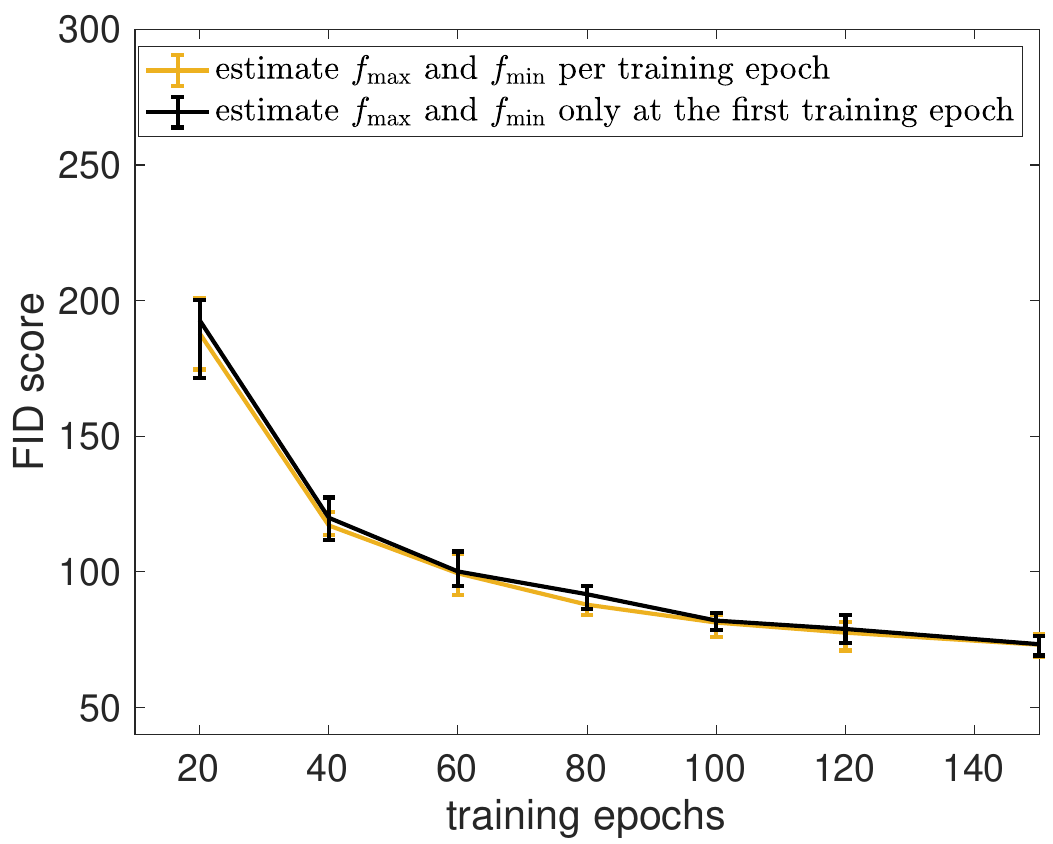}
\caption[AnimeFaces Dataset: Comparison of two strategies for estimating $\ell^{(k)}$ in AdamL for training WGAN]{AnimeFaces Dataset: Mean of FID scores over 5 independent runs with an error bar of the maximum and minimum FID values for WGAN.}\label{fig:wgan_anime_comparef}
\end{figure}

In \cref{sec:hyperparametersinAdamL}, we remark that when training a GAN, it is common to observe significant fluctuations in the discriminator's output during the first training epoch. Consequently, \cref{ag:strategyI} computes $\ell^{(k)}$ only based on the estimation of $f_{\min}$ and $f_{\max}$ in the first training epoch.
Here, we demonstrate that similar results can be achieved as in \cref{ag:strategyI} when training a WGAN by computing $\ell^{(k)}$ based on the estimation of $f_{\min}$ and $f_{\max}$ during each training epoch. We replicate the experiments presented in \cref{fig:wgan_anime_compare} using the AdamL optimizer,  with updates to $f_{\min}$ and $f_{\max}$ being made at the end of each training epoch. The results shown in \cref{fig:wgan_anime_comparef} are very close to those obtained by applying \cref{ag:strategyI}.

\subsection{LSTM for language modeling}
In this last numerical study, we consider the language modeling task that consists of training LSTM models with 1, 2, and 3 layers on the \textsf{Penn Treebank} dataset. These experiments are conducted based on the code available on the
GitHub repository\footnote{https://github.com/juntang-zhuang/Adabelief-Optimizer}. In line with the evaluations of EAdam and AdaBelief \citep{yuan2020eadam,zhuang2020adabelief}, the perplexity is employed to measure the performance of all algorithms under comparison. Note that perplexity is a commonly used metric for assessing the performance of a language model, with lower values indicating better performance \citep{CHEN1999359}.

For all the experiments of LSTM, the settings for AdaBelief, EAdam, and AdamL are aligned with those in \citep{zhuang2020adabelief}, i.e., $\beta_1=0.9$, $\beta_2=0.999$, $\varepsilon=10^{-12}$ and $\eta=0.01$. Note that in \citep{zhuang2020adabelief}, to achieve the best result of AdaBelief, a learning rate of $0.001$ was chosen for 1-layer LSTM, and a learning rate of $0.01$ was used for 2 and 3-layer LSTM. In particular with $\eta= 0.001$, the lowest test perplexity of 1-layer LSTM obtained by AdaBelief is approximately $84.2$ \cite[Fig.~5]{zhuang2020adabelief}, which is still higher than the result of AdamL with $\eta= 0.01$, i.e., $81.7$ in \cref{tab:lstm}. 

\textbf{Estimating $\ell^{(k)}$ using Adam.} Observing the results obtained by Adam, we found that the objective function values of $1$, $2$, and $3$-layer LSTM change very mildly; more precisely, in the first 200 training epochs, it ranges between $9$ to $3$. To ensure that the objective function has some part larger than 1 and some part smaller than 1, we normalize it by dividing it by 6, i.e., $\ell^{(k)}=\ell(f(\bx^{(k)}))=\frac{1}{6}\,f(\bx^{(k)}$. Additionally, we set $\varphi=4$ to improve the sensitivity of $(\ell^{(k)})^{\varphi}$ to the changes in $\ell^{(k)}$, causing the value of $(\ell^{(k)})^{\varphi}$ to approach $0$ as $\ell^{(k)}$ decreases. We empirically find that setting $\gamma=0.5$ yields the best performance for AdamL in the case of a 1-layer LSTM, while for 2 and 3-layer LSTMs, the optimal value appears to be $\gamma=1$.

We manually decrease the learning rate by a factor of $0.1$ at different training epochs for all the optimizers. The corresponding average perplexities over 5 independent simulations are presented in \cref{tab:lstm}. 
Note that in the experiment of LSTM, we lose AdamL's benefit of not manually adjusting the learning rate to improve performance. In \cref{tab:lstm}, the optimizers that produce the highest perplexities are highlighted in red and the ones that yield the lowest ones are marked in bold. As can be observed, the EAdam optimizer slightly outperforms the AdaBelief optimizer in the training of LSTM models, while the AdamL optimizer always results in the lowest perplexities for all the experiments. 

\begin{table}[ht!]
{\footnotesize
\caption{Mean value of test perplexity and its maximum deviation from its mean value over 5 independent simulations (lower is better) of 1, 2, and 3-layer LSTM on \textsf{Penn Treebank}.}\label{tab:lstm}
\begin{center}
\begin{tabular}{cccccc}
\hline
&learning rate decay epoch & Adam& EAdam & AdaBelief & AdamL \\\hline \rule{0pt}{2.3ex}
\multirow{2}{3.5em}{\footnotesize 1-layer} &\phantom{1}$75$th and $150$th & $89.4$ &  $90.0$ &\textcolor{red}{$91.3$} & $\mathbf{82.1}$   \\
\rule{0pt}{2.3ex} 
&$100$th and $145$th & $91.3$ &  $91.6$ &\textcolor{red}{$93.3$} & $\mathbf{81.7}$
\\ \rule{0pt}{2.3ex}
\multirow{2}{3.5em}{\footnotesize 2-layer} &$\phantom{1}75$th and $150$th & \textcolor{red}{$67.1$} &  $66.5$ &$66.9$ & $\mathbf{66.0}$   \\
\rule{0pt}{2.3ex} 
&$100$th and $145$th & \textcolor{red}{$67.0$} &  $66.4$ &$66.7$ & $\mathbf{65.4}$
\\ \rule{0pt}{2.3ex}
\multirow{2}{3.5em}{\footnotesize 3-layer} &$\phantom{1}75$th and $150$th & \textcolor{red}{$64.6$} &  $61.9$ &$62.1$ & $\mathbf{61.1}$   \\
\rule{0pt}{2.3ex} 
&$100$th and $145$th & \textcolor{red}{$64.4$} &  $61.0$ &$61.2$ & $\mathbf{60.5}$
\\ \hline
\end{tabular}
\end{center}
}
\end{table}

\begin{figure}[ht!]
\centering
\subfloat[]{\label{fig:lstm100_train75}\includegraphics[width=0.45\textwidth]{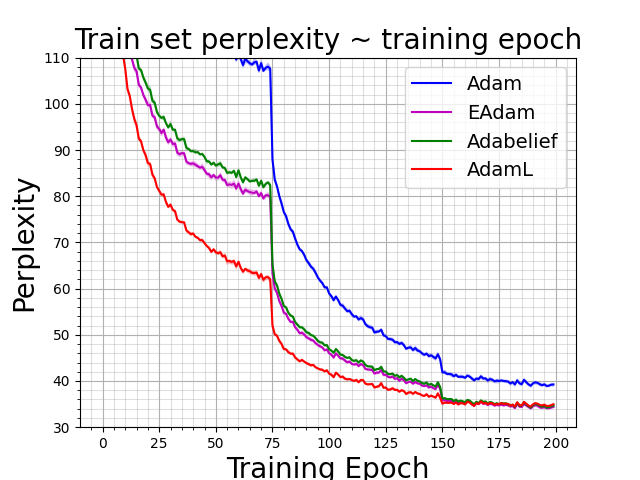}}\,
\subfloat[]{\label{fig:lstm100_train}\includegraphics[width=0.45\textwidth]{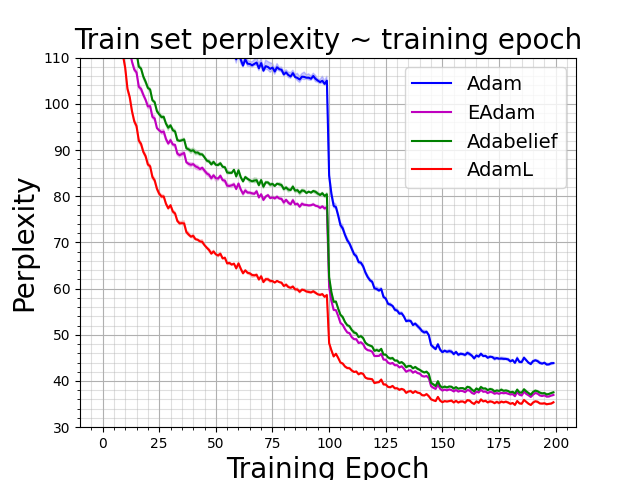}}\,
\subfloat[]{\label{fig:lstm100_test75}\includegraphics[width=0.45\textwidth]{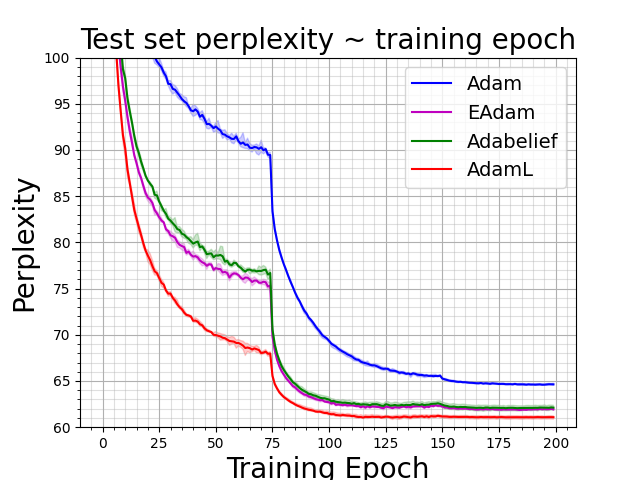}}
\subfloat[]{\label{fig:lstm100_test}\includegraphics[width=0.45\textwidth]{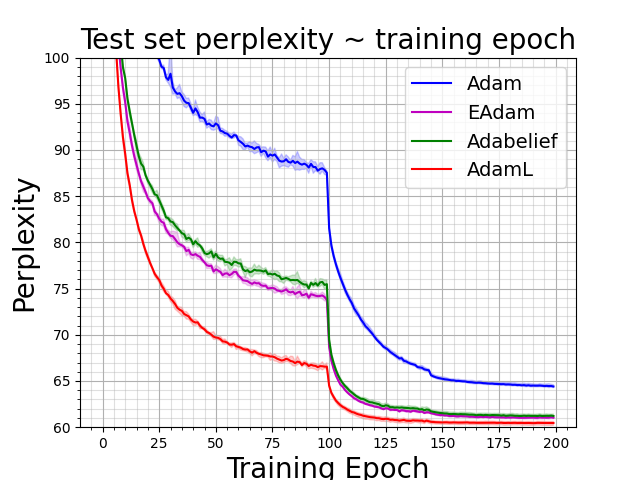}}
\caption[Mean values of train and testing perplexities of 3-layer LSTM on \textsf{Penn Treebank}]{Mean values of training (top row) and testing (bottom row) perplexities and their maximum deviation from the mean value over 5 independent simulations (lower is better) of 3-layer LSTM on \textsf{Penn Treebank}.}\label{fig:lstm100}
\end{figure}

\cref{fig:lstm100} shows the mean values of train and test perplexities, along with their maximum deviations, from 5 independent simulations of the 3-layer LSTM. Notably, the improvement in convergence achieved by the second learning rate decrease in the AdamL optimizer appears to be minimal. After the first learning rate reduction, the testing perplexity obtained by AdamL is already lower than that achieved by other optimizers even when they undergo two learning rate decreases. Overall, the AdamL optimizer achieves both the fastest convergence and good accuracy compared to the other adaptive methods.

\textbf{Automatically estimating $\ell^{(k)}$ using \cref{ag:strategyI}.} To study the effectiveness of \cref{ag:strategyI}, we repeat the same experiments for training a 2-layer and 3-layer LSTMs on the \textsf{Penn Treebank} dataset using the estimation strategy in \cref{ag:strategyI}. \cref{fig:lstm_auto} shows the mean values of training and testing perplexities over 5 independent runs and their corresponding maximum deviations. We see that before triggering the reduction of the initial learning rate, the convergence is slower in comparison to the strategy that relies on the knowledge of $\ell^{(k)}$ obtained with an initial training phase with the Adam optimizer. On the other hand, after the second reduction in the learning rate, the training and testing perplexities obtained with these two strategies become quite similar.

\begin{figure}[ht!]
\centering
\subfloat[]{\label{fig:lstm_layer2}\includegraphics[width=0.48\textwidth]{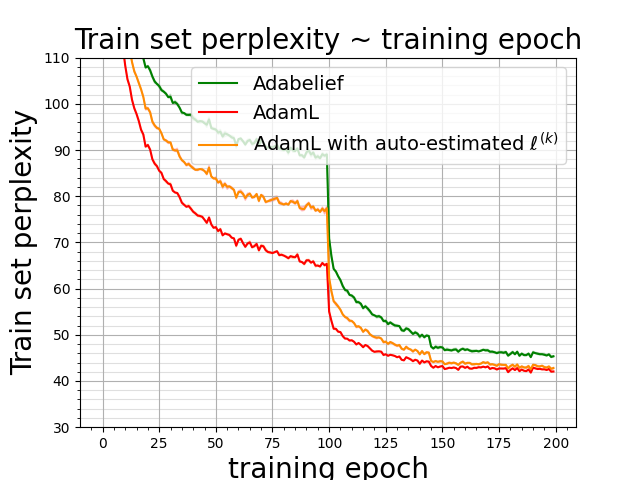}}\,
\subfloat[]{\label{fig:lstm_layer3}\includegraphics[width=0.48\textwidth]{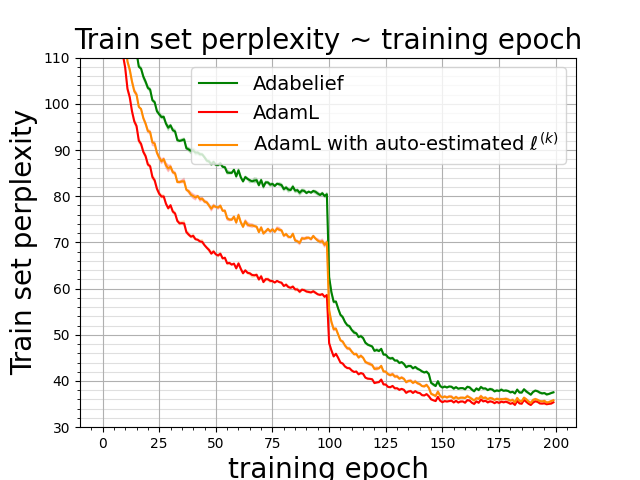}}\,
\subfloat[]{\label{fig:lstm_layer2_test}\includegraphics[width=0.48\textwidth]{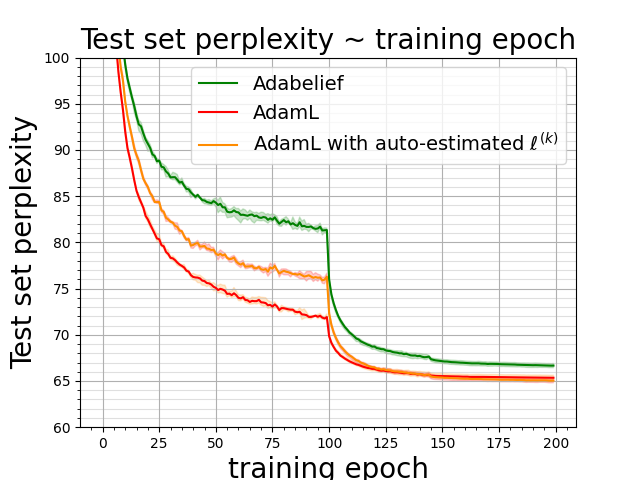}}
\subfloat[]{\label{fig:lstm_layer3_test}\includegraphics[width=0.48\textwidth]{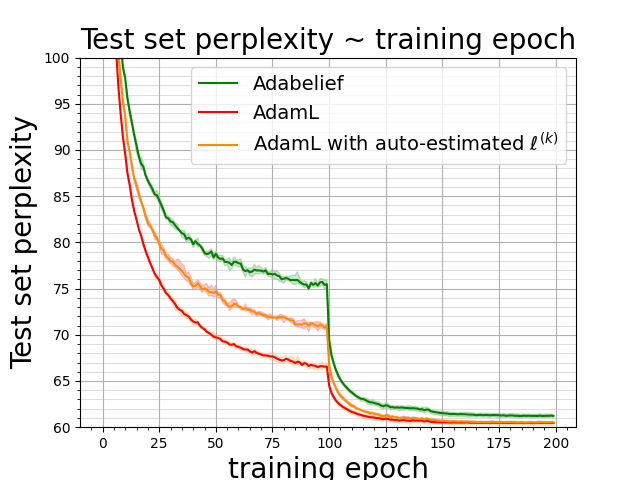}}
\caption[Mean values of train and testing perplexities of 3-layer LSTM on \textsf{Penn Treebank}]{Mean values of training (top row) and testing (bottom row) perplexities and their maximum deviation from the mean value over 5 independent simulations (lower is better) of 2-layer ((a) and (c)) and 3-layer ((b) and (d)) LSTM on \textsf{Penn Treebank}.}\label{fig:lstm_auto}
\end{figure}

\section{Conclusion}\label{sec:conclusions}
We have proposed a new variant of Adam that takes into account the loss function, namely \emph{AdamL}. We have provided sufficient conditions for the monotonicity and linear convergence of AdamL and other Adam's variants. 
Extensive numerical tests on both benchmark examples and neural network case studies have been presented to compare AdamL with some leading competitors.

Overall, we found that AdamL performs better than the other Adam's variants in various training tasks. For instance, it exhibits either faster or more stable convergence when applied to the training of convolutional neural networks (CNNs) for image classification and for training generative adversarial networks (WGANs) using vanilla CNNs. 
In the context of training CNNs for image classification, AdamL stands out from other variants of Adam and eliminates the need for manually adjusting the learning rate in the later stages of the training.
When training WGANs, we have introduced two primary approaches for implementing AdamL.
The first approach involves pre-training the WGANs using Adam or another optimizer, and then estimating $\ell^{(k)}$ based on the maximum and minimum cost function values observed during the pre-training process. Alternatively, \cref{ag:strategyI} presents a method for automatically estimating $\ell^{(k)}$, that yields comparable results with the first approach. Notably, for both approaches, the convergence speed of AdamL is more than twice as fast as that achieved by leading competitors such as Adam, AdaBelief, and EAdam.
Finally, AdamL also demonstrates improved performance in training LSTMs for language modeling tasks. 
Despite the requirement for additional hyperparameters when compared to Adam, EAdam, and AdaBelief, we have shown that the choice of these hyperparameters is relatively consistent. For instance, values like $\gamma=1$ and $\varphi=4$ are employed in the training of both WGANs and LSTMs, and the scaling function can be automatically estimated using \cref{ag:strategyI}.
The potential of AdamL in other training scenarios remains a subject for future investigation.
\section*{Acknowledgements}
We thank Olga Mula and Michiel E.~Hochstenbach for their valuable discussions that significantly enhanced the presentation of this paper. This research was funded by the EU ECSEL Joint Undertaking under grant agreement no.~826452 (project Arrowhead Tools).
\footnotesize\bibliography{ref}

\begin{thebibliography}{26}
\providecommand{\natexlab}[1]{#1}
\providecommand{\url}[1]{\texttt{#1}}
\expandafter\ifx\csname urlstyle\endcsname\relax
  \providecommand{\doi}[1]{doi: #1}\else
  \providecommand{\doi}{doi: \begingroup \urlstyle{rm}\Url}\fi

\bibitem[Ahn et~al.(2022)Ahn, Zhang, and Sra]{ahn2022understanding}
K.~Ahn, J.~Zhang, and S.~Sra.
\newblock Understanding the unstable convergence of gradient descent.
\newblock In \emph{International Conference on Machine Learning}, pages 247--257. PMLR, 2022.

\bibitem[Arjovsky et~al.(2017)Arjovsky, Chintala, and Bottou]{wgan2017}
M.~Arjovsky, S.~Chintala, and L.~Bottou.
\newblock {W}asserstein generative adversarial networks.
\newblock In \emph{Proceedings of the 34th International Conference on Machine Learning}, Aug 2017.

\bibitem[Cao et~al.(2020)Cao, Sun, Xu, and Ma]{cao2020convergence}
L.~Cao, X.~Sun, Z.~Xu, and S.~Ma.
\newblock On the convergence of a class of adam-type algorithms for non-convex optimization.
\newblock In \emph{International Conference on Learning Representations}, 2020.

\bibitem[Chen and Goodman(1999)]{CHEN1999359}
S.~F. Chen and J.~Goodman.
\newblock An empirical study of smoothing techniques for language modeling.
\newblock \emph{Computer Speech \& Language}, 13\penalty0 (4):\penalty0 359--394, 1999.
\newblock ISSN 0885-2308.
\newblock \doi{https://doi.org/10.1006/csla.1999.0128}.

\bibitem[Duchi et~al.(2011)Duchi, Hazan, and Singer]{duchi2011adaptive}
J.~Duchi, E.~Hazan, and Y.~Singer.
\newblock Adaptive subgradient methods for online learning and stochastic optimization.
\newblock \emph{Journal of machine learning research}, 12\penalty0 (7), 2011.

\bibitem[Ghadimi and Lan(2013)]{ghadimi2013stochastic}
S.~Ghadimi and G.~Lan.
\newblock Stochastic first-and zeroth-order methods for nonconvex stochastic programming.
\newblock \emph{SIAM Journal on Optimization}, 23\penalty0 (4):\penalty0 2341--2368, 2013.

\bibitem[Ghadimi et~al.(2016)Ghadimi, Lan, and Zhang]{ghadimi2016mini}
S.~Ghadimi, G.~Lan, and H.~Zhang.
\newblock Mini-batch stochastic approximation methods for nonconvex stochastic composite optimization.
\newblock \emph{Mathematical Programming}, 155\penalty0 (1-2):\penalty0 267--305, 2016.

\bibitem[He et~al.(2016)He, Zhang, Ren, and Sun]{he2016deep}
K.~He, X.~Zhang, S.~Ren, and J.~Sun.
\newblock Deep residual learning for image recognition.
\newblock In \emph{the IEEE conference on computer vision and pattern recognition}, pages 770--778, 2016.

\bibitem[Heusel et~al.(2017)Heusel, Ramsauer, Unterthiner, Nessler, and Hochreiter]{heusel2017gans}
M.~Heusel, H.~Ramsauer, T.~Unterthiner, B.~Nessler, and S.~Hochreiter.
\newblock Gans trained by a two time-scale update rule converge to a local nash equilibrium.
\newblock \emph{Advances in neural information processing systems}, 30, 2017.

\bibitem[Hoffer et~al.(2017)Hoffer, Hubara, and Soudry]{hoffer2017train}
E.~Hoffer, I.~Hubara, and D.~Soudry.
\newblock Train longer, generalize better: closing the generalization gap in large batch training of neural networks.
\newblock \emph{Advances in neural information processing systems}, 30, 2017.

\bibitem[Huang et~al.(2017)Huang, Liu, Van Der~Maaten, and Weinberger]{huang2017densely}
G.~Huang, Z.~Liu, L.~Van Der~Maaten, and K.~Q. Weinberger.
\newblock Densely connected convolutional networks.
\newblock In \emph{the IEEE conference on computer vision and pattern recognition}, pages 4700--4708, 2017.

\bibitem[Keskar and Socher(2017)]{keskar2017improving}
N.~S. Keskar and R.~Socher.
\newblock Improving generalization performance by switching from adam to sgd.
\newblock \emph{arXiv preprint arXiv:1712.07628}, 2017.

\bibitem[Kingma and Ba(2014)]{kingma2014adam}
D.~P. Kingma and J.~Ba.
\newblock {Adam: A method for stochastic optimization}.
\newblock \emph{arXiv preprint arXiv:1412.6980}, 2014.

\bibitem[Luo et~al.(2019)Luo, Xiong, Liu, and Sun]{luoadaptivebound}
L.~Luo, Y.~Xiong, Y.~Liu, and X.~Sun.
\newblock Adaptive gradient methods with dynamic bound of learning rate.
\newblock In \emph{International Conference on Learning Representations}, New Orleans, Louisiana, May 2019.

\bibitem[Nemirovski et~al.(2009)Nemirovski, Juditsky, Lan, and Shapiro]{nemirovski2009robust}
A.~Nemirovski, A.~Juditsky, G.~Lan, and A.~Shapiro.
\newblock Robust stochastic approximation approach to stochastic programming.
\newblock \emph{SIAM Journal on optimization}, 19\penalty0 (4):\penalty0 1574--1609, 2009.

\bibitem[Nguyen et~al.(2019)Nguyen, Nguyen, and van Dijk]{nguyen2019tight}
P.~Nguyen, L.~Nguyen, and M.~van Dijk.
\newblock Tight dimension independent lower bound on the expected convergence rate for diminishing step sizes in sgd.
\newblock \emph{Advances in Neural Information Processing Systems}, 32, 2019.

\bibitem[Qian(1999)]{qian1999momentum}
N.~Qian.
\newblock On the momentum term in gradient descent learning algorithms.
\newblock \emph{Neural networks}, 12\penalty0 (1):\penalty0 145--151, 1999.

\bibitem[Reddi et~al.(2018)Reddi, Kale, and Kumar]{reddi2018convergence}
S.~J. Reddi, S.~Kale, and S.~Kumar.
\newblock On the convergence of adam and beyond.
\newblock In \emph{International Conference on Learning Representations}, 2018.

\bibitem[Robbins and Monro(1951)]{robbins1951stochastic}
H.~Robbins and S.~Monro.
\newblock A stochastic approximation method.
\newblock \emph{The annals of mathematical statistics}, pages 400--407, 1951.

\bibitem[Simonyan and Zisserman(2015)]{simonyan2015very}
K.~Simonyan and A.~Zisserman.
\newblock Very deep convolutional networks for large-scale image recognition.
\newblock In \emph{3rd International Conference on Learning Representations}, 2015.

\bibitem[Tieleman and Hinton(2012)]{Tieleman2012rmsprop}
T.~Tieleman and G.~Hinton.
\newblock Rmsprop: Divide the gradient by a running average of its recent magnitude.
\newblock \emph{COURSERA: Neural Networks for Machine Learnin}, 2012.

\bibitem[Wilson et~al.(2017)Wilson, Roelofs, Stern, Srebro, and Recht]{wilson2017marginal}
A.~C. Wilson, R.~Roelofs, M.~Stern, N.~Srebro, and B.~Recht.
\newblock The marginal value of adaptive gradient methods in machine learning.
\newblock \emph{Advances in neural information processing systems}, 30, 2017.

\bibitem[Xia et~al.(2023)Xia, Hochstenbach, and Massei]{xia2023convergence}
L.~Xia, M.~E. Hochstenbach, and S.~Massei.
\newblock On the convergence of the gradient descent method with stochastic fixed-point rounding errors under the {P}olyak-{L}ojasiewicz inequality.
\newblock \emph{arXiv preprint arXiv:2301.09511}, 2023.

\bibitem[Yuan and Gao(2020)]{yuan2020eadam}
W.~Yuan and K.-X. Gao.
\newblock {EAdam Optimizer: How $\epsilon$ Impact Adam}.
\newblock \emph{arXiv preprint arXiv:2011.02150}, 2020.

\bibitem[Zaheer et~al.(2018)Zaheer, Reddi, Sachan, Kale, and Kumar]{zaheer2018adaptive}
M.~Zaheer, S.~Reddi, D.~Sachan, S.~Kale, and S.~Kumar.
\newblock Adaptive methods for nonconvex optimization.
\newblock \emph{Advances in neural information processing systems}, 31, 2018.

\bibitem[Zhuang et~al.(2020)Zhuang, Tang, Ding, Tatikonda, Dvornek, Papademetris, and Duncan]{zhuang2020adabelief}
J.~Zhuang, T.~Tang, Y.~Ding, S.~C. Tatikonda, N.~Dvornek, X.~Papademetris, and J.~Duncan.
\newblock Adabelief optimizer: Adapting stepsizes by the belief in observed gradients.
\newblock \emph{Advances in neural information processing systems}, 33:\penalty0 18795--18806, 2020.

\end{thebibliography}
\newpage
\appendix
\section{Numerical experiments on WGANs}\label{appendixA}
\begin{figure}[ht!]
\centering
\subfloat[]{\label{fig:wgana}\includegraphics[width=0.46\textwidth]{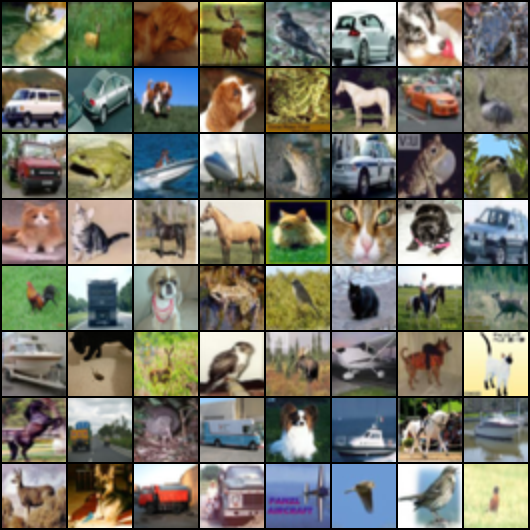}}\quad
\subfloat[]{\label{fig:wgand}\includegraphics[width=0.46\textwidth]{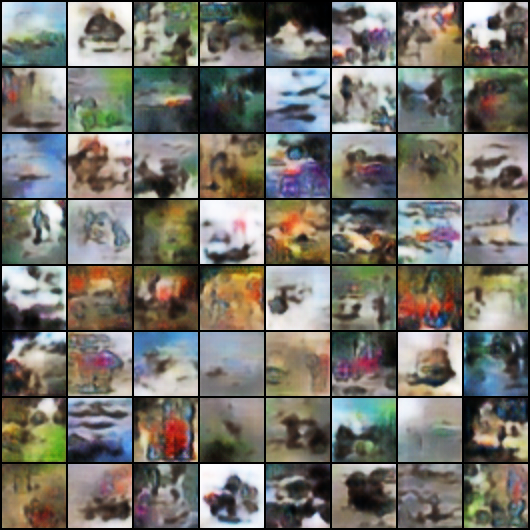}}\\
\subfloat[]{\label{fig:wganc}\includegraphics[width=0.46\textwidth]{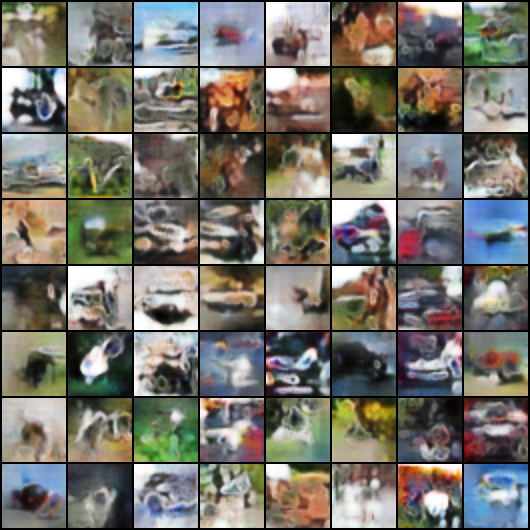}}\quad
\subfloat[]{\label{fig:wganb}\includegraphics[width=0.46\textwidth]{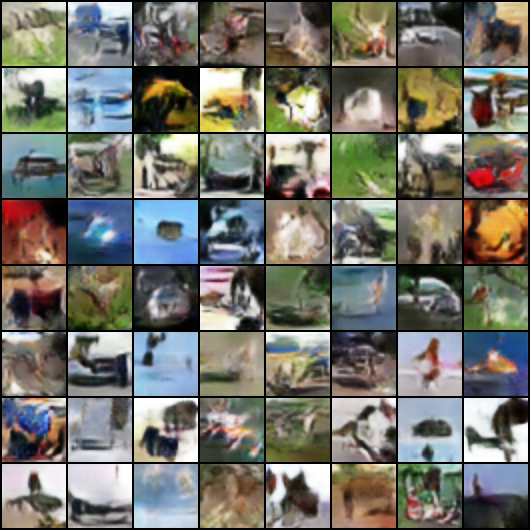}}
\caption[CIFAR10 Dataset: Generated images of WGAN trained using different optimizers.]{Real images (a) and samples from WGAN trained using EAdam (b), AdaBelief (c) and AdamL (d) at 40th training epoch.}\label{fig:wganplot}
\end{figure}
\begin{figure}[htp]
\centering
\subfloat[Real Images]{\label{fig:wgan_anime_real}\includegraphics[width=0.9\textwidth]{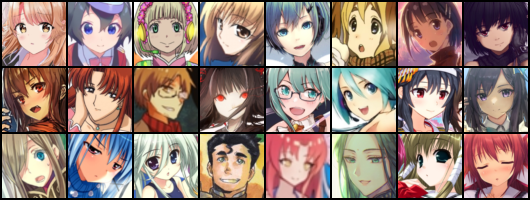}}\\
\subfloat[Adam at $20$th]{\label{fig:wgan_anime_adam_20}\includegraphics[width=0.24\textwidth]{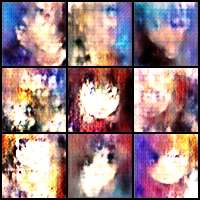}}\
\subfloat[EAdam at $20$th]{\label{fig:wgan_anime_eadam_20}\includegraphics[width=0.24\textwidth]{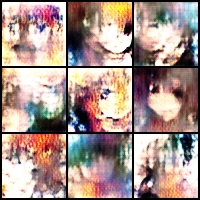}}\
\subfloat[AdaBelief at $20$th]{\label{fig:wgan_anime_adabelief_20}\includegraphics[width=0.24\textwidth]{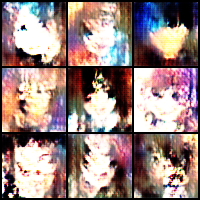}}\
\subfloat[AdamL at 20th]{\label{fig:wgan_anime_adaml_20}\includegraphics[width=0.24\textwidth]{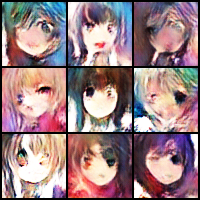}}\\
\subfloat[Adam at 80th]{\label{fig:wgan_anime_adam_80}\includegraphics[width=0.24\textwidth]{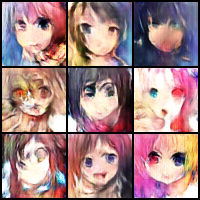}}\
\subfloat[EAdam at 80th]{\label{fig:wgan_anime_eadam_80}\includegraphics[width=0.24\textwidth]{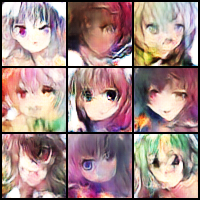}}\
\subfloat[AdaBelief at 80th]{\label{fig:wgan_anime_adabelief_80}\includegraphics[width=0.24\textwidth]{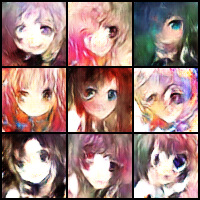}}\
\subfloat[AdamL at 80th]{\label{fig:wgan_anime_adaml_80}\includegraphics[width=0.24\textwidth]{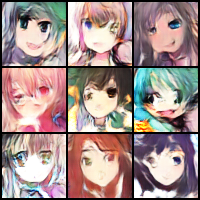}}\\
\subfloat[Adam at 150th]{\label{fig:wgan_anime_adam_150}\includegraphics[width=0.24\textwidth]{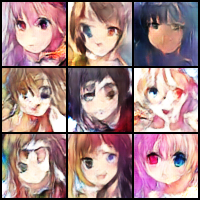}}\
\subfloat[EAdam at 150th]{\label{fig:wgan_anime_eadam_150}\includegraphics[width=0.24\textwidth]{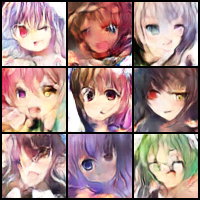}}\
\subfloat[AdaBelief at 150th]{\label{fig:wgan_anime_adabelief_150}\includegraphics[width=0.24\textwidth]{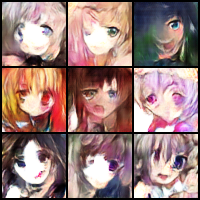}}\
\subfloat[AdamL at 150th]{\label{fig:wgan_anime_adaml_150}\includegraphics[width=0.24\textwidth]{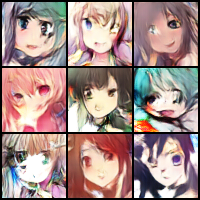}}
\caption[AnimeFaces Dataset: Generated images of WGAN trained using different optimizers.]{AnimeFaces Dataset: Real images (a) and Generated samples from WGAN trained using EAdam, AdaBelief, and AdamL at different training epochs.}\label{fig:wgan_anime}
\end{figure}
\end{document}